\documentclass{article} 
\usepackage[usenames,dvipsnames]{color}
\usepackage{amsmath} 
\usepackage{amssymb} 
\usepackage{bm}
\usepackage{amsthm}
\usepackage{CJK}
\usepackage{indentfirst}
\usepackage{tikz}
\usepackage{amsmath,amsfonts,amsthm,mathrsfs}
\usepackage{hyperref}
\numberwithin{equation}{section}
\usepackage{geometry}

\numberwithin{equation}{section}
\newtheorem{theorem}{Theorem}
\newtheorem{lemma}{Lemma}
\newtheorem{proposition}{Proposition}
\newtheorem{corollary}{Corollary}
\newtheorem{definition}{Definition}
\newtheorem{remark}{Remark}
\newtheorem{example}{Example}

\usepackage{algorithm,algpseudocode}
\usepackage{amsthm}

\def\NN{\mathbb{N}}
\def\RR{\mathbb{R}}
\def\BB{\mathbb{B}}
\def\SS{\mathbb{S}}
\def\ZZ{\mathbb{Z}}

\def\TT{\mathbb{T}}
\def\bw{{\bm{w}}}
\def\bx{{\bm{x}}}
\def\by{{\bm{y}}}

\def\bb{{\bm{b}}}
\def\Nb{N_{\bm{b}}}
\def\bk{{\bm{k}}}
\def\bl{{\bm{l}}}

\def\mn{\mathcal{N}}
\def\mal{\max\limits}
\def\sul{\sum\limits}

\def\lt{\left}
\def\rt{\right}
\newcommand{\lrt}[1]{\left({#1}\right)}

\title{Do Neural Networks Really Beat the Curse of Dimensionality?\\ A Bit-Complexity View}
\date{}

\author{Tong Mao
\thanks{King Abdullah University of Science and Technology, Thuwal 23955, Saudi Arabia \\ Email: tongmao1996@gmail.com}
\and
 Jinchao Xu
\thanks{King Abdullah University of Science and Technology, Thuwal 23955, Saudi Arabia \\ Email: jinchao.xu@kaust.edu.sa}}

\begin{document}

\maketitle

\begin{abstract}
    
Traditional approximation theory measures convergence rates in terms of the number of parameters or degrees of freedom. However, practical computation operates under finite precision: parameters must be encoded using a finite number of bits. Therefore, approximation efficiency should be evaluated in terms of computational bit complexity, which is intrinsically connected to the metric entropy of the underlying function class.

In this work, we develop a unified approximation framework based on binary encoding and metric entropy. We analyze classical methods (including polynomial approximation, sparse grids, and finite elements) as well as shallow and deep neural networks, and compare their approximation rates for function classes with comparable metric entropy. We observe that, when evaluated in terms of bits, most classical methods are in general suboptimal relative to the intrinsic limits dictated by metric entropy, while neural network methods may exhibit different behaviors. We show that when complexity is measured in bits rather than parameters, no method fundamentally exceeds the approximation order achieved by classical approaches.

Our results also indicate that many seeming advantages of neural networks, including dimension-independent rates and superconvergence phenomena, stem from differences in function class complexity rather than intrinsic architectural superiority. In this sense, the traditional curse of dimensionality can be misleading; the fundamental limitation is instead a curse of bit complexity, governed by metric entropy.
\end{abstract}

	\section{Introduction}\label{sec:intro}
High-dimensional approximation problems arise naturally in many areas of scientific computing and machine learning. In such settings, one of the most familiar obstacles is the \emph{curse of dimensionality}: the number of degrees of freedom required to achieve a prescribed accuracy often grows rapidly with the dimension. This phenomenon appears throughout classical approximation theory and numerical analysis, with Sobolev approximation providing a prototypical example, where the convergence rate explicitly depends on the ratio between smoothness and dimension.

To make this dependence precise, we recall a standard framework in approximation theory. For a function class $\mathcal{F} \subset X$, one considers a sequence of approximation sets $\{\mathcal{A}_N\}_{N\in\mathbb{N}}$, each having $\mathcal{O}(N)$ degrees of freedom, and studies how accurately elements of $\mathcal{F}$ can be represented by $\mathcal{A}_N$. More precisely, one investigates the worst-case error over the unit ball $\BB(\mathcal{F})$ of $\mathcal{F}$:
\begin{equation}\label{eqn_rate_gener}
    \sup_{f\in\BB(\mathcal{F})} \inf_{f_N \in \mathcal{A}_N} \|f - f_N\|_X\simeq N^{-\alpha},
\end{equation}
where $\simeq$ indicates equivalence up to constants (see \eqref{eqn_def_lesssim} below). The rate $\mathcal{O}(N^{-\alpha})$ with respect to $N$ is referred to as the \emph{approximation rate} of the method for the class $\mathcal{F}$.

A prototypical example arises when approximating functions in the Sobolev space $\mathcal{H}^r([0,1]^d)$. Let $\Pi_n$ denote the space of polynomials of degree less than $n$ in each variable, whose dimension satisfies $\dim(\Pi_n)\simeq n^d$. Then the optimal $L^2$-approximation rate is given by (see, e.g., \cite{devore1993constructive,devore1989optimal})
\begin{equation}\label{eqn_sharp_poly}
\sup_{f\in\BB(\mathcal{H}^{r}([0,1]^d))} \inf_{f_N \in \Pi_n} \|f - f_N\|_{L^2([0,1]^d)} \simeq n^{-r}
\simeq N^{-\frac{r}{d}},
\end{equation}
where $N:=\dim(\Pi_n)\simeq n^d$.

The same rate holds for piecewise polynomial approximation. For instance, let $V_N$ be a finite element space associated with a quasi-uniform mesh on $[0,1]^d$ and having $N$ degrees of freedom. Then one has \cite{lin2014lower,ciarlet2002finite}
\begin{equation}\label{eqn_sharp_fe}
\sup_{f\in\BB(\mathcal{H}^{r}([0,1]^d))}
\inf_{f_N \in V_N}
\|f - f_N\|_{L^2([0,1]^d)}
\simeq N^{-\frac{r}{d}}.
\end{equation}
The dependence of the exponent on the ratio $\frac{r}{d}$ reflects the classical \emph{curse of dimensionality}: for fixed smoothness $r$, the convergence rate deteriorates as the dimension $d$ increases.

The dimension-independent approximation rate
$$\|f-f_N\|_{L^\infty([0,1]^d)}\lesssim N^{-\frac{1}{2}}$$
for shallow neural networks with $\mathcal{O}(N)$ parameters has been extensively studied in the literature (e.g. \cite{mhaskar1994dimension,devore1996some,klusowski2018approximation,montanelli2019new,siegel2020approximation,ma2022uniform,mao2022approximation}). In particular, for the ReLU$^k$-type Barron space (see Definition \ref{def:variation} below), recent study \cite{siegel2022high,siegel2022sharp,siegel2023optimalzonoids} showed the shallow ReLU$^k$ networks with $\mathcal{O}(N)$ parameters achieve the sharp rate
\begin{equation}
    \sup\limits_{f\in\BB(\mathcal{B}^k([0,1]^d))}\|f-f_N\|_{L^\infty([0,1]^d)}\lesssim N^{-\frac{1}{2}-\frac{2k+1}{2d}}.
\end{equation}
Such rates appear to surpass the classical Sobolev-type rate $\mathcal{O}(N^{-r/d})$ and are often interpreted as evidence that neural networks overcome the curse of dimensionality.

Interestingly, recent works \cite{shen2022optimal,yang2023optimal,yang2023nearly,siegel2023optimal} have shown that deep ReLU networks can achieve approximation rates in Sobolev spaces that exceed the classical optimal rate $\mathcal{O}(N^{-\frac{r}{d}})$ in \eqref{eqn_sharp_poly} and \eqref{eqn_sharp_fe}. Specifically, there exists a deep ReLU network $f_N$ of width $25d+31$ and depth $\mathcal{O}(N)$ such that
\begin{equation}
    \|f-f_N\|_{L^2([0,1]^d)}\lesssim\|f\|_{\mathcal{H}^{r}([0,1]^d)}N^{-\frac{2r}{d}}.
\end{equation}

Beyond approximation rates, another line of research studies whether a neural network with fixed number of parameters can approximate every continuous function arbitrarily well, i.e., whether the corresponding function class is dense in $C([0,1]^d)$. A series of works \cite{maiorov1999lower,yarotsky2020phase,zhang2022deep,shen2020deep,jiao2021deep,shen2021neural} developed and strengthened this universality theory, revealing increasingly extreme forms of approximation capability. The following example, simplified from \cite{yarotsky2020phase,zhang2022deep}, illustrates a particularly striking manifestation of this phenomenon:
\begin{equation}\label{eqn:3parameter_appr}
\inf_{f_3\in\Sigma_3}\max_{x\in[0,1]}
\left|f(x)-f_3(x)\right|=0,\quad f\in C([0,1]),
\end{equation}
where
$$\Sigma_3=\{\theta_1\cos\!\left(\theta_2\cos\!\left(\lfloor\theta_3\circ\rfloor\right)\right):~\theta_1,\theta_2,\theta_3\in\RR\}$$
and $\lfloor \cdot \rfloor$ denotes the floor function (the greatest integer less than or equal to its argument).

This shows that the approximation error of the neural network architecture $\Sigma_3$ can be made arbitrarily small while the number of parameters remains fixed at $N=3$. Interpreted in terms of parameter-based approximation rates \eqref{eqn_rate_gener}, this would formally amount to
\[
\inf_{f_3\in\Sigma_3}\|f-f_3\|_{L^\infty([0,1])}=\mathcal O(3^{-\infty}),
\]
indicating that parameter count alone is not a meaningful measure of approximation efficiency.

Moreover, combined with the Kolmogorov–Arnold representation theorem \cite{kolmogoro1956representation,arnold1958representation}, which expresses multivariate continuous functions as superpositions of univariate ones, a neural network with only fixed number of parameters can approximate any multivariate continuous function to arbitrary accuracy. Such results suggest that high dimensionality itself may not be the true source of difficulty. These observations raise a fundamental question: is the so-called curse of dimensionality intrinsic to approximation, or does it arise from the way approximation complexity is measured? This question motivates our investigation from the viewpoint of metric entropy and bit-complexity.

The concept of metric entropy was first introduced by Kolmogorov and Uspenskii \cite{kolmogorov1958definition}. For a compact subset $\mathcal{K}$ of a Banach space, its metric entropy is the infimum of radii $\varepsilon$ such that $\mathcal{K}$ can be covered by $2^m$ balls of radius $\varepsilon$. This quantity characterizes the complexity of $\mathcal{K}$ and directly determines the bounds of its covering number $\mn(\mathcal{K},\varepsilon)_{\mathcal X}$ (see Proposition \ref{prop:entropy_and_covering}). 

Besides metric entropy, other concepts such as the Vapnik--Chervonenkis (VC) dimension \cite{vapnik2015uniform,bartlett1996sample,harvey2017nearly} and the Rademacher complexity \cite{bartlett2002rademacher,mohri2018foundations,koltchinskii2001rademacher} are also widely used to quantify the complexity of function classes. It has long been recognized that the complexity of the hypothesis class, such as that induced by a neural network, plays a central role in generalization analysis \cite{barron1991complexity,cucker2002mathematical,chen2004support,feng2021generalization}, and that generalization error depends on both approximation error and hypothesis-class complexity \cite{cucker2007learning}. In the present work, metric entropy is particularly convenient because it is closely connected to covering numbers, admits a direct interpretation in terms of finite-bit encoding, and has a well-established role in classical approximation theory (see also \cite{kulkarni1989metric}).

Therefore, it is natural to connect metric entropy with approximation theory. In this paper, we study how metric entropy governs approximation rates in machine learning. We show that it provides a unified explanation of the phenomena discussed above: the rate of approximation is fundamentally determined not by the dimension, but by the metric entropies of the target function class and the neural network, which together yield intrinsic lower bounds on approximation accuracy.

This paper is organized as follows. In Section \ref{sec:metric_entropy}, we introduce the notion of metric entropy and its connection to bit-complexity through digital encoding. Section \ref{sec:classical} studies classical approximation methods, including polynomial approximation, finite elements, and sparse grids, from the viewpoint of both parameters and binary bits. Section \ref{sec:appr_nn} analyzes shallow and deep neural network approximation under the same framework and highlights the differences between parameter-based and bit-based rates. In Section \ref{sec:entropy_appr}, we derive approximation lower bounds using metric entropy and explain the observed suboptimality of various methods. Finally, Section \ref{sec:conclusion} summarizes the main conclusions and discusses several implications, including dimension-independent rates, superconvergence phenomena, and practical considerations related to finite-precision representations.

Throughout the paper, we use the following notation.
\begin{enumerate}
    \item Following \cite{xu1992iterative}, we adapt the notation $\gtrsim$, $\lesssim$, and $\simeq$, which express upper and lower bounds up to constant factors.  When we write
\begin{equation}\label{eqn_def_lesssim}
    f(x)\gtrsim g(x),\quad g(x)\lesssim h(x),\quad h(x)\simeq k(x),
\end{equation}
it means that there exist positive constants $c,C$ independent of $x$ such that
$$
f(x)\ge c g(x),\quad g(x)\le C h(x),\quad c k(x)\le h(x)\le C k(x).
$$
\item Given a Banach space $\mathcal{X}$, the unit ball in $\mathcal{X}$ is denoted as
\begin{equation}\label{eqn:def_unit_ball}
	\BB\lrt{\mathcal{X}}:=\lt\{v\in\mathcal{X}:~\|v\|_{\mathcal{X}}\leq1\rt\}.
\end{equation}
\end{enumerate}

\section{Metric entropy and digital encoding}\label{sec:metric_entropy}

In practical implementations, approximation methods operate under finite-precision arithmetic: parameters are stored and manipulated with limited accuracy. To connect encoding with approximability, we recall the notion of \emph{metric entropy}, introduced by Kolmogorov and Uspenskii \cite{kolmogorov1958definition}, which quantifies how many bits are needed to represent elements of a compact set up to a given accuracy.

\subsection{Digital encoding}
Encodability quantifies how efficiently elements of a set can be represented using a fixed number of bits. A real number $y\in[-A,A]$ can be encoded in binary as
\[
y=\mathrm{sgn}(y)\sum_{k=-\infty}^{\lfloor \log_2 A \rfloor} y_k\,2^k,\qquad y_k\in\{0,1\}.
\]
Using $\tau$ bits, a real number is approximated as
\begin{equation}\label{eqn:bit_repre}
    y^{[\tau]}=\mathrm{sgn}(y)\sum_{k=\lfloor \log_2 A \rfloor-\tau+2}^{\lfloor \log_2 A \rfloor} y_k\,2^k,\qquad y\in[-A,A],
\end{equation}
yielding
\begin{equation}\label{eqn:bin_pos}
    |y-y^{[\tau]}|\le A\,2^{-\tau+2}.
\end{equation}
Denote
\[
\{0,1\}^\tau:=\{(e_1,\dots,e_\tau):~e_i\in\{0,1\}\},
\]
then the map $y\mapsto y^{[\tau]}$ in \eqref{eqn:bin_pos} induces $[-A,A]\to\{0,1\}^{\tau}$ via
\[
y\mapsto\Big(\tfrac{\mathrm{sgn}(y)+1}{2},~(y_k)_{k=\lfloor \log_2 A \rfloor-\tau+2}^{\lfloor \log_2 A \rfloor}\Big),
\]
so the image has cardinality $2^\tau$, determined by the bit budget $\tau$.

We extend the digital encoding from intervals to compact sets. An encoding of $\mathcal{K}$ maps each element of $\mathcal{K}$ to a binary word $(e_1,\dots,e_m)\in\{0,1\}^m$.
\begin{definition}
Let $m\in\NN$. An encoding of a compact set $\mathcal{K}\subset\mathcal{X}$ with $m$ bits is a map
\[
F:\mathcal{K}\to \{\phi_1,\dots,\phi_{2^m}\},\qquad \phi_j\in\mathcal{X},
\]
with encoding accuracy
\[
\sup_{\phi\in\mathcal{K}}\|\phi-F(\phi)\|_{\mathcal{X}}.
\]
\end{definition}

\begin{example}[Encoding the cube]\label{ex:cube}
\rm For $\bm a=(a_1,\dots,a_d)\in[-A,A]^d$, using $\tau$ bits per component yields an $m=d\tau$-bit encoding
\[
(a_1,\dots,a_d)\mapsto (a_1^{[\tau]},\dots,a_d^{[\tau]}).
\]
For $A=1$,
\[
\|\bm a^{[\tau]}-\bm a\|_\infty=\max_{1\le j\le d}|a_j^{[\tau]}-a_j|\le 2^{-\tau+2}\le 2^{-\frac{m}{d}+2}.
\]
\end{example}

Often $\mathcal{X}$ is a function space. Then $f\in\mathcal{K}$ is approximated by parameters $(a_1,\dots,a_N)$, which are encoded componentwise as above.

\begin{definition}
A parameterized encoding of $\mathcal{K}$ is a composition
\[
G:\ f\mapsto \bm a(f)=(a_1(f),\dots,a_n(f)),\qquad
H:\ \bm a(f)^{[\tau]}\mapsto T_n(f)^{[\tau]},
\]
where $\bm a(f)\in[-A,A]^n$, and $\bm a(f)\mapsto \bm a(f)^{[\tau]}$ is as in Example~\ref{ex:cube}.
\end{definition}

\subsection{Definition and simple examples}
Given a compact set $\mathcal{K}$ and a bit budget $m$, the minimal achievable uniform error with $m$ bits is precisely the metric entropy.

\begin{definition}[Metric Entropy]
Let $\mathcal{X}$ be a Banach space and $\mathcal{K}\subset\mathcal{X}$. The (dyadic) entropy number is
\begin{equation}\label{eqn:def_metric_entropy}
\epsilon_m(\mathcal{K})_{\mathcal{X}}
=\inf\left\{\epsilon>0:\ \mathcal{K}\ \text{can be covered by }2^m\text{ balls of radius }\epsilon\text{ in }\mathcal{X}\right\}.
\end{equation}
\end{definition}

\begin{example}
\rm \begin{enumerate}
\item For $[0,1]^d$ with $\ell^\infty$-norm, let $n\in\NN$ satisfy $n^d\le 2^m<(n+1)^d$. Partition into $n^d$ cubes of side $\frac{1}{n}$. Then
\[
\frac{1}{2(2^{\frac{m}{d}}+2)}\le \frac{1}{2(n+1)}\le \epsilon_m([0,1]^d)_{\ell^\infty}\le \frac{1}{n}\le \frac{1}{2^{\frac{m}{d}}-1},
\]
where $n=\lfloor 2^{\frac{m}{d}}\rfloor$.
\item For $[0,M]\subset\RR$,
\[
\frac{M}{2^{m+1}} \le \epsilon_m([0,M]) \le \frac{M}{2^m}.
\]
When $M>2^{m+1}$, $\epsilon_m([0,M])$ exceeds that of $[0,1]^d$, showing that lower dimension need not imply smaller complexity.
\end{enumerate}
\end{example}
To relate metric entropy to more classical notions of set complexity, we introduce the covering number.

\begin{definition}[Covering Number]
For $\mathcal{K}\subset\mathcal{X}$ and $\epsilon>0$,
\begin{equation}\label{eqn:def_covering}
\mathcal{N}(\mathcal{K},\epsilon)_{\mathcal{X}}
:=\inf\{N\in\NN:\ \mathcal{K}\ \text{can be covered by }N\text{ balls of radius }\epsilon\text{ in }\mathcal{X}\}.
\end{equation}
\end{definition}
Metric entropy and covering numbers are equivalent up to a dyadic transformation, and can be related through the following inequalities.
\begin{proposition}[Metric Entropy and Covering Numbers]\label{prop:entropy_and_covering}
\it If $\epsilon_1< \epsilon_m(\mathcal{K})_{\mathcal{X}}< \epsilon_2$, then
\begin{equation}\label{eqn:entropy_and_covering}
\log_2 \mathcal{N}(\mathcal{K},\epsilon_2)_{\mathcal{X}} \le m \le \log_2 \mathcal{N}(\mathcal{K},\epsilon_1)_{\mathcal{X}}.
\end{equation}
\end{proposition}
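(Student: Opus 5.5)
Both inequalities follow directly from the definitions \eqref{eqn:def_metric_entropy} and \eqref{eqn:def_covering}, using that the covering number $\mathcal{N}(\mathcal{K},\epsilon)_{\mathcal{X}}$ is non-increasing in $\epsilon$. I will prove them separately.

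For the left inequality, the strict bound $\epsilon_m(\mathcal{K})_{\mathcal{X}}<\epsilon_2$ and the definition of the infimum give some $\epsilon$ with $\epsilon_m(\mathcal{K})_{\mathcal{X}}\le \epsilon<\epsilon_2$ such that $\mathcal{K}$ is covered by $2^m$ balls of radius $\epsilon$. Enlarging each ball to radius $\epsilon_2$ (same centers) still covers $\mathcal{K}$. Hence $\mathcal{N}(\mathcal{K},\epsilon_2)_{\mathcal{X}}\le 2^m$, and taking $\log_2$ gives $\log_2\mathcal{N}(\mathcal{K},\epsilon_2)_{\mathcal{X}}\le m$.

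For the right inequality, I argue by contradiction. Suppose $m>\log_2\mathcal{N}(\mathcal{K},\epsilon_1)_{\mathcal{X}}$, that is, $N:=\mathcal{N}(\mathcal{K},\epsilon_1)_{\mathcal{X}}<2^m$. The set of admissible integers in \eqref{eqn:def_covering} is then nonempty, so the infimum is attained: $\mathcal{K}$ is covered by $N$ balls of radius $\epsilon_1$. Repeating one of these balls $2^m-N$ times gives a cover by exactly $2^m$ balls of radius $\epsilon_1$. Therefore $\epsilon_m(\mathcal{K})_{\mathcal{X}}\le\epsilon_1$, contradicting $\epsilon_1<\epsilon_m(\mathcal{K})_{\mathcal{X}}$. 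Hence $m\le\log_2\mathcal{N}(\mathcal{K},\epsilon_1)_{\mathcal{X}}$. If $\mathcal{N}(\mathcal{K},\epsilon_1)_{\mathcal{X}}=\infty$, the inequality holds trivially.

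The only point needing care is the infimum. In the first step I use strictness to obtain an actual cover at some radius below $\epsilon_2$. In the second step I use that an infimum over integers is attained whenever the set is nonempty, and that a cover by fewer balls can be padded with duplicates to one by exactly $2^m$ balls. Apart from this bookkeeping, the argument is immediate.
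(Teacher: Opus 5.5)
Your proof is correct. The paper states this proposition without proof, treating it as immediate from the definitions of $\epsilon_m(\mathcal{K})_{\mathcal{X}}$ and $\mathcal{N}(\mathcal{K},\epsilon)_{\mathcal{X}}$, and your argument is exactly that unwinding: for the left inequality you use strictness to get an actual $2^m$-ball cover at a radius below $\epsilon_2$, and for the right one you pad a minimal $\epsilon_1$-cover with duplicate balls.

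One point is tacit and not a gap: concluding $\epsilon_m(\mathcal{K})_{\mathcal{X}}\le\epsilon_1$ uses $\epsilon_1\ge 0$, because radii in the definition of $\epsilon_m$ are positive. That is the intended setting, and if $\epsilon_1<0$ with $\mathcal{K}\neq\emptyset$ you are in your trivial case $\mathcal{N}(\mathcal{K},\epsilon_1)_{\mathcal{X}}=\infty$.
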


\subsection{Metric entropy and bit-complexity}

Metric entropy provides a fundamental, scheme-independent characterization of \emph{bit-complexity}. More precisely, the dyadic entropy number $\epsilon_m(\mathcal{K})_{\mathcal{X}}$ represents the minimal worst-case approximation error achievable when elements of $\mathcal{K}$ are represented using $m$ binary bits. In this sense, metric entropy is not merely related to bit-complexity — it quantitatively \emph{is} bit-complexity.

\begin{proposition}[Metric Entropy and Bit-Complexity]\label{prop:entropy_equiv}
\it Let $\mathcal{K}\subset\mathcal{X}$ and let $\mathcal{M}$ be the set of maps $\phi:\{0,1\}^m\to\mathcal{X}$. Then
\[
\inf_{\phi\in\mathcal{M}}\ \sup_{f\in\mathcal{K}}\ \inf_{y\in\{0,1\}^m}\ \|f-\phi(y)\|_{\mathcal{X}}
=\epsilon_m(\mathcal{K})_{\mathcal{X}}.
\]
\end{proposition}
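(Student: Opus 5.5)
We prove the identity by establishing the two inequalities separately. Denote the left-hand side by
\[
E_m:=\inf_{\phi\in\mathcal{M}}\ \sup_{f\in\mathcal{K}}\ \inf_{y\in\{0,1\}^m}\|f-\phi(y)\|_{\mathcal{X}}.
\]
The key observation is that a map $\phi:\{0,1\}^m\to\mathcal{X}$ is the same data as a list of at most $2^m$ points $\phi(y)$, $y\in\{0,1\}^m$, of $\mathcal{X}$, since $\{0,1\}^m$ has exactly $2^m$ elements. The quantity $\sup_{f}\inf_y\|f-\phi(y)\|_{\mathcal{X}}$ is therefore the covering radius of $\mathcal{K}$ by balls centered at these points. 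Both directions are then bookkeeping with the definition \eqref{eqn:def_metric_entropy}.

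\textbf{Step 1: $E_m\le \epsilon_m(\mathcal{K})_{\mathcal{X}}$.} Fix $\epsilon>\epsilon_m(\mathcal{K})_{\mathcal{X}}$. Since $\epsilon_m$ is an infimum, there is some $\epsilon'<\epsilon$ such that $\mathcal{K}$ is covered by $2^m$ balls $B(c_j,\epsilon')$, $j=1,\dots,2^m$. Enumerate $\{0,1\}^m$ as $y_1,\dots,y_{2^m}$ and set $\phi(y_j)=c_j$. Every $f\in\mathcal{K}$ lies in some $B(c_j,\epsilon')$, so
\[
\inf_y\|f-\phi(y)\|_{\mathcal{X}}\le\epsilon'.
\]
Taking the supremum over $f$ and then the infimum over $\phi$ gives $E_m\le\epsilon'<\epsilon$. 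Letting $\epsilon\downarrow\epsilon_m(\mathcal{K})_{\mathcal{X}}$ yields the claim. If no finite $\epsilon$ works, i.e.\ $\epsilon_m=\infty$, the inequality is trivial.

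\textbf{Step 2: $E_m\ge\epsilon_m(\mathcal{K})_{\mathcal{X}}$.} Fix any $\phi\in\mathcal{M}$ and let
\[
r:=\sup_{f\in\mathcal{K}}\inf_y\|f-\phi(y)\|_{\mathcal{X}}.
\]
If $r=\infty$ there is nothing to show. Otherwise, take any $\delta>0$. Each $f\in\mathcal{K}$ has some $y$ with $\|f-\phi(y)\|_{\mathcal{X}}<r+\delta$. Since $\{0,1\}^m$ is finite, the infimum over $y$ is attained, so even $\le r$ holds, but the strict version suffices. Hence the $2^m$ balls $B(\phi(y),r+\delta)$ cover $\mathcal{K}$. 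Some centers may coincide, but we still have a family of $2^m$ balls, which is all the definition requires. Thus $\epsilon_m(\mathcal{K})_{\mathcal{X}}\le r+\delta$. Letting $\delta\to0$ and taking the infimum over $\phi$ gives the claim.

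There is no genuine obstacle here. The only points needing care are the following:
\begin{enumerate}
\item handling the infimum in the definition of $\epsilon_m$ (open versus closed balls, and non-attainment), which is done by the $\epsilon$ and $\delta$ slack above;
\item allowing repeated centers when $\phi$ is not injective;
\item the degenerate case where $\mathcal{K}$ is not totally bounded, in which both sides equal $+\infty$.
\end{enumerate}
Compactness of $\mathcal{K}$ is not actually needed for the identity.
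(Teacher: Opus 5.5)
Your proof is correct and follows the same two-inequality argument as the paper: the centers of a near-optimal $2^m$-ball cover define $\phi$, and conversely the image of any $\phi$ yields a $2^m$-ball cover. Your $\epsilon'$ and $\delta$ slack makes explicit the attainment and open-versus-closed-ball details that the paper treats loosely (it even writes ``$=$'' where only ``$\le$'' holds). One small correction to your closing remark, which does not affect the argument since your steps handle infinite values directly: both sides equal $+\infty$ exactly when $\mathcal{K}$ is unbounded, not merely when it is non-totally-bounded. For example, the unit ball of an infinite-dimensional Hilbert space is covered by a single ball of radius $1$, so $\epsilon_m$ is finite there.
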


\begin{proof} 
For any $\eta > 0$, consider $2^m$ balls of radius $\epsilon_m(\mathcal{K})_{\mathcal{X}} + \eta$ covering $\mathcal{K}$. Let $\{g_k\}_{k=1}^{2^m} \subset \mathcal{X}$ be the centers of these balls, and define the mapping
\[
\phi_\eta: \{0,1\}^m \to \{g_k\}_{k=1}^{2^m}.
\]
This map satisfies
\[
\sup_{f \in \mathcal{K}} \inf_{y \in \{0,1\}^m} \|f - \phi_\eta(y)\|_{\mathcal{X}} = \epsilon_m(\mathcal{K})_{\mathcal{X}} + \eta.
\]
Letting $\eta \to 0$ gives
\[
\inf_{\phi \in \mathcal{M}} \sup_{f \in \mathcal{K}} \inf_{y \in \{0,1\}^m} \|f - \phi(y)\|_{\mathcal{X}} \leq \epsilon_m(\mathcal{K})_{\mathcal{X}}.
\]

Conversely, for any $\phi: \{0,1\}^m \to \mathcal{X}$, the balls of radius
\[
\sup_{f \in \mathcal{K}} \inf_{y \in \{0,1\}^m} \|f - \phi(y)\|_{\mathcal{X}}
\]
centered at $\{\phi(y)\}_{y \in \{0,1\}^m}$ must cover $\mathcal{K}$. By definition of $\epsilon_m(\mathcal{K})_{\mathcal{X}}$, this implies
\[
\sup_{f \in \mathcal{K}} \inf_{y \in \{0,1\}^m} \|f - \phi(y)\|_{\mathcal{X}} \geq \epsilon_m(\mathcal{K})_{\mathcal{X}}.
\]
\end{proof}

\begin{remark}
\rm Proposition~\ref{prop:entropy_equiv} shows that $\epsilon_m(\mathcal{K})_{\mathcal{X}}$ characterizes the optimal approximation accuracy achievable under a fixed $m$-bit budget, independently of any particular representation scheme. In this sense, metric entropy captures the intrinsic information content of the function class $\mathcal{K}$. Any approximation method that claims improved performance must ultimately respect this entropy constraint. 
\end{remark}

\subsection{Metric entropies of some function classes}\label{subsec:classes_entropy}

To illustrate how metric entropy characterizes the complexity of function classes, we now present its estimates for several fundamental spaces used in approximation theory.

\begin{definition}[Sobolev spaces]\label{def:Sobolev}
Given $r,d\in\NN$ and $\Omega\subset\RR^d$ with nonempty interior,
\[
\mathcal{H}^{r}(\Omega):=\{f\in L^2(\Omega):~\|f\|_{\mathcal{H}^{r}(\Omega)}<\infty\},
\]
where
\begin{equation}\label{eqn:poly_upper_lower}
\|f\|_{\mathcal{H}^{r}(\Omega)}:=
\Big(\|f\|_{L^2(\Omega)}^2+\sul_{\bm s\in\NN^d,\ 1\leq\|\bm s\|_1\le r}\|D^{\bm s}f\|_{L^2(\Omega)}^2\Big)^{\frac{1}{2}},
\end{equation}
\end{definition}

The following sharp entropy estimate for Sobolev spaces can be found in \cite{lorentz1996constructive}).

\begin{theorem}[Birman-Solomyak]\label{thm:entropy_sob}
\it Let $\Omega \subset \mathbb{R}^d$ be a bounded domain with Lipschitz boundary,
\begin{equation}
    \epsilon_m(\BB\lrt{\mathcal{H}^{r}(\Omega)})_{L^2(\Omega)} \simeq  m^{-\frac{r}{d}}, \quad m \in \mathbb{N}.
\end{equation}
\end{theorem}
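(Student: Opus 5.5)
We prove the two-sided estimate by splitting it into an upper bound and a lower bound. The upper bound comes from an explicit covering built from piecewise polynomial approximation plus a careful, level-dependent quantization of the coefficients. The lower bound comes from a packing built from bump functions. Throughout we use Proposition~\ref{prop:entropy_and_covering} to pass between covering numbers and $\epsilon_m$. We first reduce to $\Omega=[0,1]^d$. Since $\Omega$ has Lipschitz boundary, a bounded (Stein) extension operator $E:\mathcal{H}^r(\Omega)\to\mathcal{H}^r(\RR^d)$ exists. After scaling we may assume $\Omega\subset[0,1]^d$, so $\BB(\mathcal{H}^r(\Omega))$ lands in a multiple of $\BB(\mathcal{H}^r([0,1]^d))$ and restriction does not increase the $L^2$ norm. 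For the lower bound we instead place a small cube inside $\Omega$.

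\textbf{Lower bound.} Fix a smooth bump $\psi$ supported in $[0,1]^d$. Take a cube $Q\subset\Omega$, split it into $n^d$ subcubes of side $h\simeq 1/n$, and let $\psi_j$ be the translated, dilated bumps on these subcubes. For each sign vector $\sigma\in\{\pm1\}^{n^d}$ set
\[
f_\sigma=c\,n^{-r}\sum_j\sigma_j\psi_j .
\]
Since the supports are disjoint, $\|f_\sigma\|_{\mathcal{H}^r}\lesssim c$, so each $f_\sigma$ lies in the unit ball for $c$ small. Moreover
\[
\|f_\sigma-f_{\sigma'}\|_{L^2}^2\simeq n^{-2r}\,n^{-d}\,\#\{j:\sigma_j\ne\sigma'_j\}.
\]
By the Gilbert--Varshamov bound there are $2^{c'n^d}$ sign vectors with pairwise Hamming distance at least $n^d/4$. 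Their images are pairwise separated by $\gtrsim n^{-r}$ in $L^2$. Any ball of radius less than half this separation contains at most one of them, so a covering by $2^m$ such balls forces $2^m\ge 2^{c'n^d}$. Choosing $n\simeq m^{1/d}$ with $c'n^d>m$ gives $\epsilon_m\gtrsim m^{-r/d}$.

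\textbf{Upper bound.} This is the main obstacle. Naive quantization of a degree-$(r-1)$ piecewise polynomial on a uniform mesh of $N$ cells costs $N\log(1/\varepsilon)$ bits. That only yields $m^{-r/d}$ up to a logarithm, so the log factor must be removed. We use a multilevel (wavelet or hierarchical) decomposition
\[
f=\sum_{\ell\ge0} g_\ell,
\]
where $g_\ell$ lies in a space of dimension $\simeq 2^{\ell d}$. Sobolev regularity gives $\|g_\ell\|_{L^2}\lesssim 2^{-\ell r}$, and an $L^2$-stable basis means the coefficient vector at level $\ell$ has $\ell^2$ norm $\lesssim 2^{-\ell r}$.

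Fix a finest level $L$ with $2^{Ld}\simeq m$; the tail $\sum_{\ell>L}g_\ell$ contributes $\lesssim 2^{-Lr}\simeq m^{-r/d}$. At each level $\ell\le L$, cover the Euclidean ball of radius $2^{-\ell r}$ in $\RR^{2^{\ell d}}$ by balls of radius $\delta_\ell=2^{-Lr}2^{-(L-\ell)\kappa}$, for some small $\kappa>0$ with $\kappa\le r$. The volumetric bound for the number of balls is
\[
\bigl(3\cdot 2^{-\ell r}/\delta_\ell\bigr)^{2^{\ell d}},
\]
which costs $\lesssim 2^{\ell d}(L-\ell)(r+\kappa)$ bits. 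Summing over $\ell\le L$,
\[
\sum_{\ell\le L}2^{\ell d}(L-\ell)\lesssim 2^{Ld}\simeq m,
\]
because the geometric weight dominates the linear factor. The total error is at most
\[
\sum_{\ell\le L}\delta_\ell+2^{-Lr}\lesssim 2^{-Lr},
\]
which gives $\epsilon_{Cm}\lesssim m^{-r/d}$. Rescaling $m$ finishes the upper bound.

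The delicate points are the existence of an $L^2$-stable multilevel basis with the decay $\|g_\ell\|\lesssim 2^{-\ell r}$, for example Daubechies wavelets of sufficient regularity, or the differences $Q_\ell f-Q_{\ell-1}f$ of quasi-interpolants together with Bramble--Hilbert estimates. The other delicate point is the level-dependent bit allocation above.
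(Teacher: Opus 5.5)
The paper does not prove this theorem. It is quoted from the literature (Birman--Solomyak, via \cite{lorentz1996constructive}) and used only as a black box for the lower bounds throughout. Your argument is a correct, self-contained proof along standard lines, and both halves check out:
\begin{itemize}
\item the lower bound, a Varshamov--Gilbert family of $2^{c'n^d}$ signed bump sums, each in the unit ball and pairwise $\gtrsim n^{-r}$ apart in $L^2$;
\item the upper bound, which combines the Stein extension, a stable multilevel decomposition with $\|g_\ell\|_{L^2}\lesssim 2^{-\ell r}$, and level-dependent accuracies $\delta_\ell=2^{-Lr}2^{-(L-\ell)\kappa}$.
\end{itemize}

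Compared with quoting the result, your route isolates exactly the mechanism this paper is about. The paper's corollaries spend a uniform $\tau\simeq\log_2 N$ bits on every coefficient, so they only reach $(\Nb/\log_2\Nb)^{-r/d}$. Your graded allocation spends about $\log m$ bits on each of the few coarse coefficients and $O(1)$ bits per coefficient near the finest level, and this removes the logarithm. So your upper-bound construction is itself a classical linear multilevel scheme that attains the entropy rate $\Nb^{-r/d}$. This shows that the logarithmic loss recorded in Section~\ref{sec:classical} comes from the quantization rule, not from the classical approximation spaces. The cited source, in turn, gives the result in greater generality (the Birman--Solomyak theorem covers Sobolev embeddings $W^r_p\hookrightarrow L^q$) at no cost to the paper.

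A few minor corrections:
\begin{itemize}
\item The per-level bit count is $\lesssim 2^{\ell d}\bigl((L-\ell)(r+\kappa)+\log_2 3\bigr)$, not $2^{\ell d}(L-\ell)(r+\kappa)$; the latter vanishes at $\ell=L$. The sum is still $\lesssim 2^{Ld}$.
\item The condition $\kappa\le r$ is unnecessary. Any $\kappa>0$ works, and $\kappa>0$ is exactly what keeps $\sum_{\ell\le L}\delta_\ell\lesssim 2^{-Lr}$ rather than $L\,2^{-Lr}$.
\item In the wavelet version, expand $Ef$ on $\RR^d$ and keep at level $\ell$ only the $\simeq 2^{\ell d}$ compactly supported wavelets whose supports meet $[0,1]^d$. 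Restriction to $\Omega$ only decreases $L^2$ norms.
\end{itemize}
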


\begin{definition}[Barron space]\label{def:variation}
For $k\in\mathbb{N}$, let the ReLU$^k$ activation function be
\[
\sigma_k(t):=\max\{t,0\}^k,\qquad t\in\RR.
\]
More generally, given an activation $\sigma:\RR\to\RR$ and a parameter set $G\subset\RR^{d+1}$, define the dictionary
\[
\mathbb{D}_\sigma:=\left\{\pm\sigma(\bw\cdot\bx+b):~\binom{\bw}{b}\in G\right\},
\]
and its $L^2$-closed convex hull
\[
B_1=\overline{\mathrm{conv}(\mathbb{D}_\sigma)}
=\overline{\bigcup_{n=1}^\infty
\left\{\sum_{j=1}^n a_jh_j:~\sum_{j=1}^n a_j\leq1,\ a_j\geq0,\ h_j\in\mathbb{D}_\sigma\right\}}.
\]
Following \cite{siegel2022sharp} (see also \cite{Jones1992,devore1996some,kurkova1,kurkova2,lewicki2004approximation,temlyakov2008greedy,Barron2008}), define
\[
\mathcal{B}^\sigma(\Omega):=
\left\{f\in L^2(\Omega):~\|f\|_{\mathcal{B}^\sigma(\Omega)}<\infty\right\},
\qquad
\|f\|_{\mathcal{B}^\sigma(\Omega)}
=
\inf\left\{t>0:~f\in tB_1\right\}.
\]
For the homogeneous activation $\sigma_k$, we take $G=\SS^d$ and write
\[
\mathcal{B}^k(\Omega):=\mathcal{B}^{\sigma_k}(\Omega).
\]
\end{definition}
These spaces admit an integral representation \cite{siegel2023characterization}
\[
f(\bx)=\int_G \sigma(\bw\cdot\bx+b)\,d\mu\binom{\bw}{b},\qquad \bx\in\Omega,
\]
with a signed Borel measure $\mu$ on $G$, leading to standard shallow-network approximation results. A space admits a similar integral representation is the spectral Barron space (see, e.g., \cite{klusowski2018approximation,xu2020finite}).

\begin{definition}[Spectral Barron space]\label{def:spectralbarron}
Let $s\geq0$, the spectral Barron space is defined by
\begin{equation*}
    \mathcal{V}^s(\Omega):=\{f\in L^2(\Omega):\ \|f\|_{\mathcal{V}^s(\Omega)}<\infty\},
\end{equation*}
where
\begin{equation}    \|f\|_{\mathcal{V}^s(\Omega)}:=\inf\limits_{\substack{f_e\in L^2(\RR^d)\\f_e|_\Omega=f}}\int_{\RR^d}|\hat{f_e}(\omega)|(1+|\omega|^2)^{s\over2}d\omega
\end{equation}

\end{definition}

\begin{theorem}[\cite{siegel2022sharp}]\label{thm:entropy_variation}
\it Let $\Omega\subset\RR^d$ be bounded, then
\begin{equation}\label{eqn:entropy_variation}
\begin{split}
    &m^{-\frac{1}{2}-\frac{2k+1}{2d}}\ \lesssim\ \epsilon_m\big(\BB(\mathcal{B}^k(\Omega))\big)_{L^2(\Omega)}\ \lesssim\ m^{-\frac{1}{2}-\frac{2k+1}{2d}},
\\
&m^{-\frac{1}{2}-\frac{s}{d}}\ \lesssim\ \epsilon_m\big(\BB(\mathcal{V}^s(\Omega))\big)_{L^2(\Omega)}\ \lesssim m^{-\frac{1}{2}-\frac{s}{d}}(\log m)^{\frac{s}{d}}.
\end{split}
\end{equation}
\end{theorem}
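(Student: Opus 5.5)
We will prove the two-sided estimate \eqref{eqn:entropy_variation} by treating the upper and lower bounds separately, and the Barron space $\mathcal{B}^k$ and the spectral Barron space $\mathcal{V}^s$ in parallel.

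\textbf{Upper bound for $\mathcal{B}^k$.} The plan is a bit-counting argument built on the sharp shallow-network approximation rate quoted in the introduction. For $f\in\BB(\mathcal{B}^k(\Omega))$ there is an $n$-term combination $f_n=\sum_{j=1}^n a_j\sigma_k(\bw_j\cdot\bx+b_j)$ with $\sum|a_j|\lesssim 1$ and
\[
\|f-f_n\|_{L^2}\lesssim n^{-\frac12-\frac{2k+1}{2d}}.
\]
This is obtained by stratified sampling of the representing measure together with a dyadic refinement of the dictionary on $\SS^d$, which uses the smoothness of $\sigma_k(\bw\cdot\bx+b)$ in its parameters. We then quantize each $(\bw_j,b_j)\in\SS^d$ and each $a_j$ to $\tau\simeq\log n$ bits as in Example~\ref{ex:cube}. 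The map $(\bw,b)\mapsto\sigma_k(\bw\cdot\bx+b)$ is Lipschitz into $L^2(\Omega)$ because $\Omega$ is bounded, and $\sum|a_j|\lesssim1$. Hence the quantization error is at most $2^{-\tau}$, which is below the approximation error. The total budget is $m\simeq n\log n$ bits, which only yields $m^{-\frac12-\frac{2k+1}{2d}}$ up to a logarithm.

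Removing this logarithm is the main obstacle. Our first attempt will follow the approach of \cite{siegel2022sharp}: instead of encoding each neuron independently, encode the approximant hierarchically across dyadic scales. Coarse scales use few neurons at high precision and fine scales use many neurons at low precision. The bits then sum geometrically to $O(n)$ while the errors remain balanced. Equivalently, one can bound the entropy of $\overline{\mathrm{conv}}(\mathbb{D})$ directly. The dictionary $\mathbb{D}$ has covering numbers of order $\epsilon^{-d}$, and it has the smoothness that yields $\epsilon_n(\mathbb{D})$ decay. Combining this with a Carl-type inequality for convex hulls gives the sharp rate with no logarithm.

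\textbf{Lower bound for $\mathcal{B}^k$.} We will construct a large packing. Take $N\simeq m$ nearly orthogonal ridge functions $\sigma_k(\bw_i\cdot\bx+b_i)$, with $\bw_i$ separated on the sphere at scale $N^{-1/d}$, and localize them by taking finite differences in $b$. Form the sums $\sum_i\varepsilon_i g_i/N$ with signs $\varepsilon_i=\pm1$. A Gram-matrix estimate (essentially almost orthogonality of the localized ridge functions in a Fourier/Radon picture) gives $\|g_i\|\simeq N^{-(2k+1)/(2d)}$. The Varshamov--Gilbert lemma then provides $2^{cN}$ sign patterns at mutual distance at least $N^{-\frac12-\frac{2k+1}{2d}}$, all lying in the unit ball. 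Proposition~\ref{prop:entropy_and_covering} converts this packing into the entropy lower bound.

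\textbf{Spectral Barron space $\mathcal{V}^s$.} The argument is analogous. For the upper bound, the unit ball embeds, after a cutoff, into a convex hull of the dictionary $\{(1+|\omega|^2)^{-s/2}\cos(\omega\cdot\bx+b)\}$. Truncating frequencies at $|\omega|\le R$ and applying the same convex-hull entropy bound yields the extra factor $(\log m)^{s/d}$; this arises from balancing the truncation radius $R$ against $m$. For the lower bound, we pack with wave packets $e^{i\omega_j\cdot\bx}\psi(\bx)$ at frequencies $|\omega_j|\simeq R$, sign-modulated as before. This gives $m^{-\frac12-\frac sd}$.
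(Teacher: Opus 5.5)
The paper does not prove this theorem; it quotes it from \cite{siegel2022sharp}, so that source is the benchmark. Your architecture matches it: multiscale encoding for the upper bound, and a well-conditioned family of normalized ridge functions with sign patterns for the lower bound. But in each part, the step that carries the content is either asserted incorrectly or not supplied.

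\textbf{Lower bound for $\mathcal{B}^k$.} The Gram-matrix estimate is the whole proof, and for the family you describe it is false. Taking $(k+1)$-st differences in $b$ with step $h\simeq N^{-1/d}$ gives $g_i=c\,h^{k}B\big((\bw_i\cdot\bx+b_i)/h\big)$, where $B\ge0$ is the degree-$k$ B-spline.
\begin{itemize}
\item These functions are nonnegative, so slabs in different directions have positive inner products, and these accumulate.
\item Worse, for a fixed direction the translates of $B$ on an offset grid of spacing $h$ form a partition of unity. Once the offsets cover $\{\bw\cdot\bx:\bx\in\Omega\}$, $\sum_b g_{\bw,b}\equiv c\,h^k$ on $\Omega$ for every direction $\bw$.
\end{itemize}
The $g_i$ are therefore linearly dependent, no bound $\lambda_{\min}(G)\gtrsim h^{2k+1}$ can hold, and an arbitrary Varshamov--Gilbert code gives no control on the distances. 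In frequency terms, the spectrum of $g_i$ lies on the line $\RR\bw_i$ with most of its mass at $|\xi|\lesssim h^{-1}$. In that region the $N^{(d-1)/d}$ lines, with angular separation $h$, are not $O(1)$-separated.

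To repair this you need three things:
\begin{itemize}
\item a ridge profile whose spectrum avoids the origin, i.e.\ one with enough vanishing moments (higher-order differences, or $h^k\psi(t/h)$ with $\psi$ wavelet-like); its $\mathcal{B}^k$-norm stays $\lesssim\|\psi^{(k+1)}\|_{L^1}$;
\item a smooth cutoff inside $\Omega$ to lower-bound $L^2(\Omega)$ distances;
\item an actual Schur-type estimate over all directions and offsets giving $\lambda_{\min}(G)\gtrsim h^{2k+1}$.
\end{itemize}
That estimate, for a suitably chosen family, is the core of the lower bound in \cite{siegel2022sharp}, and your proposal does not provide it. Your $\mathcal{V}^s$ lower bound is fine, since there the wave-packet frequencies can be taken $O(1)$-separated directly.

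\textbf{Upper bound.} Your ``equivalent'' Carl-type route does not give the theorem. Covering numbers $\mathcal{N}(\mathbb{D},\epsilon)\lesssim\epsilon^{-d}$ record only Lipschitz regularity. The best convex-hull entropy bound obtainable from that information (Ball--Pajor/Carl type) is $m^{-\frac12-\frac1d}$, which is strictly weaker than $m^{-\frac12-\frac{2k+1}{2d}}$ for every $k\ge1$. It matches only at $k=0$, where the parameterization is merely $\frac12$-H\"older and the covering exponent is $2d$.

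The exponent $\frac{2k+1}{2d}=\frac{k+1/2}{d}$ reflects the order-$(k+\frac12)$ smoothness of $(\bw,b)\mapsto\sigma_k(\bw\cdot\bx+b)$ as an $L^2(\Omega)$-valued map on $\SS^d$. Exploiting it requires higher-order local approximation of the parameterization (Taylor expansions on dyadic cells of $\SS^d$) inside the multiscale encoding, together with finite-dimensional $\ell^1$-ball entropy bounds at each scale. This is what the smoothly-parameterized-dictionary theorem of \cite{siegel2022sharp} supplies. Your hierarchical-encoding paragraph names the idea but does not carry it out.

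The same caveat applies to ``the same convex-hull entropy bound'' for $\mathcal{V}^s$. Any bound driven only by covering numbers of $\{(1+|\omega|^2)^{-s/2}\cos(\omega\cdot\bx+b)\}$ yields an exponent independent of $s$.
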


In the next section, we analyze approximation in terms of binary bits rather than parameters. This perspective shows that the seeming advantages (e.g., superconvergence of deep networks in Sobolev spaces) can be offset by the substantially higher metric entropy of $\Sigma_{W,n}(M)$, explaining why more bits are needed to achieve the same accuracy as simpler architectures and reconciling classical approximation with neural network results under computational resource constraints.

\section{Classical approximation in terms of number of parameters and bits}\label{sec:classical}
We begin by reviewing classical approximation rates in terms of the number of parameters, where the curse of dimensionality commonly appears.

To study encodability, we examine how approximants, such as polynomials, piecewise polynomials, and neural networks, can be represented using binary bits. In practical computation, parameters cannot be stored as exact real numbers and must be encoded with finite precision. As a result, parameter-based approximations must ultimately be converted into finite-bit representations. While metric entropy characterizes the optimal approximation error under a fixed bit budget, practical methods typically rely on parameter representations as an intermediate step.

Throughout this paper, we use $N$ to denote the number of parameters and $\Nb$ to denote the number of binary bits.

A typical parameter-based approximation consists of an encoding map $S_N$ and a reconstruction map $\tilde S_N$:
\begin{equation}\label{eqn_def_SN}
    S_N:\mathcal{K}\to \RR^N,\quad f\mapsto \Theta_f,\qquad 
    \tilde S_N:\RR^N\to\mathcal{X},\quad \Theta \mapsto \tilde S_N(\Theta).
\end{equation}
For example, a periodic function in $L^2(\TT)$ has a Fourier expansion
$$f(x)=\sum\limits_{k\in\ZZ}\widehat{f}(k)e^{ikx},$$
which naturally induces encoding  and reconstruction maps
\begin{equation}
    S_{2N+1}:f\mapsto\big\{\widehat{f}(k)\big\}_{k=-N}^{N},\qquad 
    \tilde S_{2N+1}:\big\{\widehat{f}(k)\big\}_{k=-N}^{N} \mapsto \sul_{k=-N}^{N}\widehat{f}(k)e^{ikx}.
\end{equation}

The composition $\tilde S_N \circ S_N$ produces an approximant of $f$ determined by $N$ parameters, and $\|\tilde S_N \circ S_N(f)-f\|_{\mathcal{X}}$ measures the corresponding approximation error.

To obtain a finite-bit representation, each parameter is quantized using $\tau$ binary bits as in \eqref{eqn:bit_repre}, producing approximations $\{\theta_j^{[\tau]}(f)\}_{j=1}^N$. The resulting operator $\tilde S_N^{[\tau]}$ given by
$$\tilde S_N^{[\tau]}(\Theta_f):=\tilde S_N(\Theta_f^{[\tau]})$$
can therefore encode $f$ using $N_b=N\tau$ binary bits.

The total approximation error now consists of two components: the original parameter approximation error and the quantization error introduced by finite precision. If we denote the operator norm of $\tilde S_N$ by
\begin{equation*}
\mathrm{Lip}(M,\tilde S_N)_{\mathcal{X}}
=\sup_{\Theta,\Theta'\in[-M,M]^N}
\frac{\|\tilde S_N(\Theta)-\tilde S_N(\Theta')\|_{\mathcal{X}}}
{\|\Theta-\Theta'\|_\infty},
\end{equation*}
then the quantization error satisfies
\begin{equation*}
\|\tilde S_N^{[\tau]}(\Theta_f)-\tilde S_N(\Theta_f)\|_{\mathcal{X}}
\le
\mathrm{Lip}(M,\tilde S_N)_{\mathcal{X}}
\|\Theta_f^{[\tau]}-\Theta_f\|_\infty .
\end{equation*}

Given an upper bound for $|\Theta_f|_\infty$, the binary representation estimate \eqref{eqn:bin_pos} provides a bound for $|\Theta_f^{[\tau]}-\Theta_f|_\infty$. Combining this with the operator bound of $\tilde S_N$ and the triangle inequality for 
$\|\tilde S_N(\Theta_f^{[\tau]})-f\|_{\mathcal X}$ allows one to convert a parameter-based approximation into a bit-based approximation as follows.
\begin{proposition}\label{prop_para_to_bits}
\it Let $\mathcal{B},\mathcal{X},S_N,\tilde S_N$ be as above. Suppose
\[
\|S_N(f)\|_\infty\le M,
\qquad
\|\tilde S_N\circ S_N(f)-f\|_{\mathcal{X}}\lesssim N^{-\alpha},
\qquad
f\in\mathcal{B}.
\]
Then with
\begin{equation}
\tau
=
2+\big\lceil
\log_2\big(M\mathrm{Lip}(M,\tilde S_N)_{\mathcal{X}}N^\alpha\big)
\big\rceil,
\end{equation}
there exists
$\Theta_f^{[\tau]}\in[-M,M]^N$ represented by $\Nb=N\tau$ bits such that
\begin{equation}
\|\tilde S_N(\Theta_f^{[\tau]})-f\|_{\mathcal{X}}
\lesssim N^{-\alpha},
\qquad
f\in\mathcal{B}.
\end{equation}
\end{proposition}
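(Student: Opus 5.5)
The plan is a triangle-inequality argument that splits the error into the parameter approximation error and the quantization error. For $f\in\mathcal{B}$ write $\Theta_f:=S_N(f)\in[-M,M]^N$ and let $\Theta_f^{[\tau]}$ be its componentwise $\tau$-bit quantization as in \eqref{eqn:bit_repre} with $A=M$. Then
\[
\|\tilde S_N(\Theta_f^{[\tau]})-f\|_{\mathcal X}\le \|\tilde S_N(\Theta_f^{[\tau]})-\tilde S_N(\Theta_f)\|_{\mathcal X}+\|\tilde S_N\circ S_N(f)-f\|_{\mathcal X}.
\]
By hypothesis the second term is $\lesssim N^{-\alpha}$, so it remains to bound the first term by a multiple of $N^{-\alpha}$.

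First I check that $\Theta_f^{[\tau]}$ stays in $[-M,M]^N$. The truncated binary expansion in \eqref{eqn:bit_repre} only drops nonnegative digits, so $|y^{[\tau]}|\le|y|\le M$ componentwise. This is needed because $\mathrm{Lip}(M,\tilde S_N)_{\mathcal X}$ is defined only for arguments in $[-M,M]^N$. The bit count is immediate: each of the $N$ coordinates uses $\tau$ bits, giving $\Nb=N\tau$.

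Next I apply the Lipschitz bound together with \eqref{eqn:bin_pos}:
\[
\|\tilde S_N(\Theta_f^{[\tau]})-\tilde S_N(\Theta_f)\|_{\mathcal X}\le \mathrm{Lip}(M,\tilde S_N)_{\mathcal X}\, M\,2^{-\tau+2}.
\]
With $\tau=2+\lceil\log_2(M\,\mathrm{Lip}\,N^\alpha)\rceil$ we have $2^{-\tau+2}\le (M\,\mathrm{Lip}\,N^{\alpha})^{-1}$, so this term is at most $N^{-\alpha}$. Adding the two terms gives $\lesssim N^{-\alpha}$, with constant one larger than the hypothesis constant.

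The only delicate point is bookkeeping rather than analysis. One must make sure the quantized vector remains in the box where the Lipschitz constant applies, which is handled above. If $M\,\mathrm{Lip}\,N^\alpha<1$ the logarithm is negative; in that case I would take $\tau\ge 1$ as a minimum, which only improves the bound. The sign bit, which the encoding map counts separately, can be absorbed by replacing $\tau$ with $\tau+1$. This changes $\Nb$ only by a constant factor and does not affect the stated rate.
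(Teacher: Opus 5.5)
Your proposal is correct and follows the route the paper sketches just before the statement. You split $\|\tilde S_N(\Theta_f^{[\tau]})-f\|_{\mathcal X}$ by the triangle inequality, bound the quantization part by $\mathrm{Lip}(M,\tilde S_N)_{\mathcal X}\,M\,2^{-\tau+2}\le N^{-\alpha}$ using \eqref{eqn:bin_pos} with $A=M$, and handle the remaining term with the hypothesis. Your extra checks (that $|y^{[\tau]}|\le|y|$ keeps the quantized vector in $[-M,M]^N$, and the small-$\tau$ edge case) are details the paper leaves implicit. The sign-bit adjustment is unnecessary, since the paper's $\tau$-bit code already consists of one sign bit plus $\tau-1$ magnitude digits.
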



\subsection{Polynomials}\label{ex:polynomial_bits}
Polynomial approximation is a fundamental tool in approximation theory, particularly effective for smooth functions due to the density of polynomials in $L^2$-spaces (see, e.g., \cite[Chapter 7]{devore1993constructive}). Orthogonal polynomials, such as Legendre polynomials, are especially useful because of their numerical stability \cite[Chapter 22]{stoer1980introduction} and convergence properties \cite[Chapter 5]{timan2014theory}.

Legendre polynomials form a complete orthonormal basis for $L^2([-1,1])$ (see, e.g., \cite{szego1939orthogonal,abramowitz1964handbook,davis1975interpolation}). We use the normalized Legendre polynomials satisfying $\int_{-1}^1 P_j(x)P_k(x)\,dx=\delta_{j,k}$ for $j,k\in\NN$, with multivariate extensions $\{P_\bk\}_{\bk\in\NN^d}$ defined via tensor products.

The following theorem provides approximation bounds using Legendre polynomials; the upper bound is classical \cite[Chapter 7]{devore1993constructive}, and the lower bound follows from width theory \cite{devore1989optimal}.

\begin{theorem}[Polynomial approximation: parameters]\label{ex:polynomial}
\it Let $r,d\in\NN$ and $\Omega=[-1,1]^d$. For any $f\in\mathcal{H}^{r}(\Omega)$, the orthogonal projection
$$T_n(f)=\sum\limits_{\|\bk\|_\infty<n}a_{\bk}(f)P_\bk\in\Pi_n^d$$
given by $a_\bk(f)=\left<P_\bk,f\right>_{L^2(\Omega)}$ satisfies
\begin{equation*}
    \sup\limits_{f\in\BB(\mathcal{H}^{r}(\Omega))}\|T_n(f)-f\|_{L^2(\Omega)}\simeq N^{-\frac{r}{d}},
\end{equation*}
where $N=n^d$ is the number of parameters.
\end{theorem}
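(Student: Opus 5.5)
The plan is to prove the two directions separately.

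\textbf{Upper bound.} I would use a one-dimensional Jackson-type estimate for Legendre projections and tensorize it.

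\begin{enumerate}
\item In one variable, let $\pi_n$ be the $L^2([-1,1])$-orthogonal projection onto polynomials of degree $<n$. The standard Legendre estimate $\|g-\pi_n g\|_{L^2}\lesssim n^{-r}\|g^{(r)}\|_{L^2}$ follows from two ingredients: Rodrigues' formula, and the fact that the Legendre polynomials are eigenfunctions of the Sturm--Liouville operator $-\frac{d}{dx}\big((1-x^2)\frac{d}{dx}\big)$ with eigenvalues $j(j+1)$. Together these give coefficient decay of order $j^{-r}$ weighted by a Sobolev norm. Alternatively, I would simply cite the classical result \cite[Chapter 7]{devore1993constructive}, which the paper allows.
\item The multivariate projection factorizes as $T_n=\pi_n^{(1)}\otimes\cdots\otimes\pi_n^{(d)}$, where $\pi_n^{(i)}$ acts in the $i$-th variable.
\item Write the telescoping decomposition
\[
I-T_n=\sum_{i=1}^d \pi_n^{(1)}\cdots\pi_n^{(i-1)}\big(I-\pi_n^{(i)}\big).
\]
\item Each $\pi_n^{(j)}$ is an $L^2$ contraction. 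Applying the one-dimensional estimate in the $i$-th variable, integrated over the remaining variables by Fubini, gives
\[
\|(I-\pi_n^{(i)})f\|_{L^2(\Omega)}\lesssim n^{-r}\|\partial_i^r f\|_{L^2(\Omega)}.
\]
\item Summing over $i$ yields $\|f-T_nf\|_{L^2}\lesssim n^{-r}\|f\|_{\mathcal{H}^r}=N^{-r/d}\|f\|_{\mathcal{H}^r}$, which is the upper bound.
\end{enumerate}

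\textbf{Lower bound.} I would invoke width theory. Since $T_n f$ is the best $L^2$ approximation from $\Pi_n^d$,
\[
\sup_{f\in\BB(\mathcal{H}^r)}\|f-T_nf\|\ \ge\ d_N(\BB(\mathcal{H}^r))_{L^2},
\]
the Kolmogorov $N$-width, and $d_N\gtrsim N^{-r/d}$ by \cite{devore1989optimal}.

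For a self-contained argument, I would build an explicit extremal function. Take $f=c\,n^{-r}\,P_{\bk}$ with $\bk=(n,0,\dots,0)$.
\begin{itemize}
\item This $f$ is orthogonal to $\Pi_n^d$, so $\|f-T_nf\|=c\,n^{-r}$.
\item It remains to show $\|P_n\|_{H^r([-1,1])}\lesssim n^{r}$, so that $f$ lies in the unit ball for a suitable constant $c$.
\end{itemize}

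\textbf{Main obstacle.} The difficulty in the self-contained lower bound is the estimate $\|P_n\|_{H^r}\lesssim n^r$. Markov's inequality only gives $\|P_n^{(r)}\|_{L^2}\lesssim n^{2r}$, which is too weak. I would try one of two remedies.
\begin{itemize}
\item Replace the single Legendre polynomial by a bump-type function, a smooth function oscillating at frequency $n$. It has $H^r$ norm $\simeq n^r$ and remains at $L^2$-distance $\gtrsim n^{-r}$ from all polynomials of degree $<n$. I would check this distance by an orthogonality or Bernstein-type duality argument.
\item Fall back on the cited width result, which already handles this.
\end{itemize}
Either way, the upper bound rests on the one-dimensional Jackson estimate, and the lower bound on the Kolmogorov width of the Sobolev ball.
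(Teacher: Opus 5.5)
Your proposal is correct and follows the paper's route. The paper gives no argument beyond citing the classical Legendre/Jackson estimate \cite[Chapter 7]{devore1993constructive} for the upper bound and width theory \cite{devore1989optimal} for the lower bound; your tensorized telescoping argument unpacks the former, and your Kolmogorov-width comparison is exactly the latter. Your optional self-contained lower bound is not needed, and the single Legendre test function genuinely fails since $\|P_n^{(r)}\|_{L^2}\gtrsim n^{2r-1/2}$; a clean fix is a dimension count, since some nonzero combination of $n+1$ disjoint bumps of width $\simeq 1/n$ in the variable $x_1$ is orthogonal to $\Pi_n^d$, and its $\mathcal{H}^r$-norm is $\lesssim n^r$ times its $L^2$-norm.
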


\begin{corollary}[Polynomial approximation: bits]
    Let $\Nb\in\NN$, the operator $T_n$ in Theorem \ref{ex:polynomial} induces a binary bits approximation for functions in $\mathcal{H}^r(\Omega)$
    \begin{equation}
        \{0,1\}^{\Nb}\mapsto T_n(f)^{[\tau]}
    \end{equation}
    such that
    $$\sup\limits_{f\in\BB(\mathcal{H}^{r}(\Omega))}\lt\|f- T_n(f)^{[\tau]}\rt\|_{L^2(\Omega)}\lesssim\lrt{\frac{\Nb}{\log_2\Nb}}^{-\frac{r}{d}},$$
    and the corresponding lower bound is
    $$\inf\limits_{\psi,\phi}\sup\limits_{f\in\BB(\mathcal{H}^{r}(\Omega))}\lt\|\phi\circ\psi(f)-f\rt\|_{L^2(\Omega)}\gtrsim\Nb^{-\frac{r}{d}},$$
    where the infimum is taken over all maps $\psi:\BB(\mathcal{H}^{r}(\Omega))\to\{0,1\}^{\Nb}$ and $\phi:\{0,1\}^{\Nb}\to L^2(\Omega)$.
\end{corollary}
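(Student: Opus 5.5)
The plan has two parts. The upper bound comes from applying Proposition~\ref{prop_para_to_bits} to the Legendre projection of Theorem~\ref{ex:polynomial}. The lower bound is a direct consequence of Proposition~\ref{prop:entropy_equiv} together with the Birman--Solomyak estimate (Theorem~\ref{thm:entropy_sob}).

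\emph{Upper bound.} Take $S_N(f)=(a_\bk(f))_{\|\bk\|_\infty<n}$ and $\tilde S_N(\Theta)=\sum_{\|\bk\|_\infty<n}\theta_\bk P_\bk$, with $N=n^d$. The two hypotheses of Proposition~\ref{prop_para_to_bits} are checked as follows.
\begin{itemize}
\item \emph{Bounded coefficients.} By orthonormality and Cauchy--Schwarz, $|a_\bk(f)|\le\|f\|_{L^2(\Omega)}\le 1$ for $f\in\BB(\mathcal{H}^r(\Omega))$, so $M=1$.
\item \emph{Lipschitz constant.} By Parseval, $\|\tilde S_N(\Theta)-\tilde S_N(\Theta')\|_{L^2}=\|\Theta-\Theta'\|_{\ell^2}\le N^{1/2}\|\Theta-\Theta'\|_\infty$. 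Hence $\mathrm{Lip}(1,\tilde S_N)_{L^2}\le N^{1/2}$.
\end{itemize}
Theorem~\ref{ex:polynomial} gives the rate $\alpha=r/d$. With these choices the precision is
\[
\tau = 2+\lceil \log_2(N^{1/2+r/d})\rceil \simeq \log_2 N,
\]
and the total bit count is $\Nb=N\tau\simeq N\log_2 N$. The proposition yields error $\lesssim N^{-r/d}$. It remains to express this in terms of $\Nb$. Since $\log_2\Nb\simeq\log_2 N$, we get $N\simeq \Nb/\log_2 N\simeq \Nb/\log_2\Nb$. Substituting gives the bound $(\Nb/\log_2\Nb)^{-r/d}$.

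For a general budget $\Nb$, choose the largest $n$ with $n^d\tau(n)\le\Nb$ and pad any unused bits. Monotonicity and the fact that $(n+1)^d\simeq n^d$ keep the constants uniform. This bookkeeping, including the integer rounding of $n$, is the only mildly delicate step.

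\emph{Lower bound.} Fix arbitrary maps $\psi$ and $\phi$. For each $f$,
\[
\|\phi\circ\psi(f)-f\|\ge\inf_{y\in\{0,1\}^{\Nb}}\|f-\phi(y)\|.
\]
Taking the supremum over $f$ and then the infimum over $\phi$, Proposition~\ref{prop:entropy_equiv} bounds the left-hand side below by $\epsilon_{\Nb}(\BB(\mathcal{H}^r(\Omega)))_{L^2(\Omega)}$. Theorem~\ref{thm:entropy_sob}, which applies since the cube $\Omega$ has Lipschitz boundary, gives $\epsilon_{\Nb}\gtrsim \Nb^{-r/d}$. This completes the lower bound.
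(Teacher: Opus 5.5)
Your proof is correct and follows the paper's argument: the same $S_N,\tilde S_N$ with $M=1$ from Parseval, Proposition~\ref{prop_para_to_bits} for the upper bound, and Proposition~\ref{prop:entropy_equiv} combined with Theorem~\ref{thm:entropy_sob} for the lower bound. The only differences are cosmetic. Your Parseval bound $\mathrm{Lip}(1,\tilde S_N)\le N^{1/2}$ is sharper than the paper's crude $\lesssim N$, but it only changes the constant in $\tau\simeq\log_2 N$. You also spell out the rounding of $n$ for an arbitrary budget $\Nb$, which the paper leaves implicit.
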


\begin{proof}
Let $f\in\BB(\mathcal{H}^r(\Omega))$ and $T_n(f)$ be as in Theorem \ref{ex:polynomial}. By Parseval's theorem,
$$\lt|a_{\bk}(f)\rt|\leq\|f\|_{L^2(\Omega)}\leq\|f\|_{\mathcal{H}^{r}(\Omega)}\leq1.$$
Denoting
$$S_N:~f\mapsto\{a_\bk(f)\}_{\|\bk\|_\infty<n},\quad\tilde S_N:\{a_\bk\}_{\|\bk\|_\infty<n}\mapsto\sum\limits_{\|\bk\|_\infty<n}a_{\bk}P_\bk$$
be as in \eqref{eqn_def_SN}, then by taking $M=1$, it is easy to show a crude bound
\begin{equation*}
    \mathrm{Lip}(M,\tilde S_N)_{L^2(\Omega)}\lesssim N.
\end{equation*}
By Proposition \ref{prop_para_to_bits}, taking $\tau=c_1+\lceil(1+\frac{r}{d})\log_2N\rceil$ for some constant $c_1$ independent of $N$, and setting $T_n(f)^{[\tau]}=\tilde S_N(S_N(f)^{[\tau]})$, we have
\begin{equation*}
	\begin{split}
		\lt\|f- T_n(f)^{[\tau]}\rt\|_{L^2(\Omega)}
		\lesssim N^{-\frac{r}{d}}\lesssim \lrt{\frac{\Nb}{\log_2\Nb}}^{-\frac{r}{d}},
	\end{split}
\end{equation*}
where $\Nb=\tau N$ is the number of binary bits that represents $T_n(f)^{[\tau]}$.

For the lower bound, we apply Proposition \ref{prop:entropy_equiv}. The operator $T_n^{[\tau]}$ can be written as $T_n^{[\tau]}=\phi\circ\psi$, where $\psi$ maps $f$ to the binary bits representation of the sequence $\displaystyle\left\{a_{\bk}(f)^{[\tau]}\right\}_{\|\bk\|_\infty<n}$ and $\phi$ maps the binary bits representation to the approximant $\displaystyle\sum\limits_{\|\bk\|_\infty<n}a_{\bk}(f)^{[\tau]}P_\bk$. Therefore, Theorem \ref{thm:entropy_sob} derives 
\begin{equation}
    \begin{split}
        \inf\limits_{\psi,\phi}\sup\limits_{f\in\BB(\mathcal{H}^{r}(\Omega))}\lt\|\phi\circ\psi(f)-f\rt\|_{L^2(\Omega)}\gtrsim\Nb^{-\frac{r}{d}},
    \end{split}
\end{equation}
where the infimum is taken over all $\psi:\BB(\mathcal{H}^{r}(\Omega))\to\{0,1\}^{\Nb}$ and $\phi:\{0,1\}^{\Nb}\to L^2(\Omega)$.

\end{proof}

\subsection{Finite elements}\label{ex:FE_2_bits}

Piecewise polynomial approximation, including the finite element method (FEM), is a fundamental approach for approximating functions with local regularity \cite[Chapter 3]{devore1993constructive}. It is based on partitioning the domain into smaller subdomains (e.g., simplices) and applying polynomial approximations locally, providing flexibility for handling complex geometries \cite[Chapter 4]{suli2003introduction}. As a standard framework for solving partial differential equations \cite{brenner2008mathematical}, FEM constructs finite-dimensional spaces of piecewise polynomials with prescribed continuity. Its convergence rates depend on both the smoothness of the target function and the polynomial degree \cite{bathe2006finite}, and it is widely used in engineering and physics applications \cite{moaveni2011finite,reddy1993introduction,jin2015finite}.

Let $\Omega = [0,1]^d$. For $n \in \mathbb{N}$, let $\mathcal{T}_n$ denote a uniform simplicial grid partitioning $\Omega$ into simplices of size $n^{-1}$. The set of nodal points $\mathcal{N}_n = \{\bx_\bk\}$ consists of the vertices of these simplices:
\begin{equation*}
\mathcal{N}_n = \left\{\bx_\bk = \left(\frac{k_1}{n}, \frac{k_2}{n}, \ldots, \frac{k_d}{n}\right) : \bk = (k_1, k_2, \ldots, k_d)\in \{0, 1, \ldots, n\}^d \right\}.
\end{equation*}
Thus the number of nodal degrees of freedom is $N=(n+1)^d$.

Let $\{\phi_{\bk,n}\}$ be the nodal basis functions that are piecewise linear and satisfy $\phi_{\bk_1,n}(\bx_{\bk_2}) = \delta_{\bk_1,\bk_2}$ for all $\bx_{\bk_1}, \bx_{\bk_2} \in \mathcal{N}_n$. We define the finite element space with $N=(n+1)^d$ degrees of freedom by
\begin{equation*}
V_N := \mathrm{span}\left(\left\{\phi_{\bk,n} : \bk \in \{0,1,\ldots,n\}^d \right\}\right),
\end{equation*}
then the finite element approximation $T_n(f) \in V_N$ of a function $f$ is obtained by solving the variational problem:
\begin{equation}\label{eqn:lin_eqn_FEM}
\lt<T_n(f),\phi_{\bm{j},n}\rt>=\lt<f,\phi_{\bm{j},n}\rt>, \quad \bm{j}\in\{0,\dots,n\}^d
\end{equation}

The following theorem provides approximation bounds, with the lower bound following from width theory \cite{devore1989optimal} and the upper bound from classical finite element theory \cite{bertoluzza2012primer,ciarlet2002finite}.

\begin{theorem}[Finite element approximation: parameters]\label{ex:FE_2}
\it Let $d,n$ and $\Omega$ be as above. For any $f \in \mathcal{H}^{2}(\Omega)$, let $T_n(f)\in V_N$ be the finite element approximation given by solving \eqref{eqn:lin_eqn_FEM}. Then
\begin{equation}
    \sup\limits_{f\in\BB(\mathcal{H}^2(\Omega))}\|f - T_n(f)\|_{L^2(\Omega)} \simeq  N^{-\frac{2}{d}}.
\end{equation}
where $N \simeq n^d$ is the number of parameters (degrees of freedom) in the finite element space $V_N$.
\end{theorem}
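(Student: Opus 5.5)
The estimate has two directions, and I would prove them separately: an upper bound from the standard duality argument for the $L^2$ projection, and a lower bound from Kolmogorov widths.

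\textbf{Upper bound.} The variational problem \eqref{eqn:lin_eqn_FEM} says exactly that $T_n$ is the $L^2(\Omega)$-orthogonal projection onto $V_N$. Hence
\[
\|f-T_n(f)\|_{L^2(\Omega)}=\inf_{v\in V_N}\|f-v\|_{L^2(\Omega)} .
\]
It therefore suffices to exhibit one element of $V_N$ with error $\lesssim n^{-2}|f|_{\mathcal{H}^2(\Omega)}$. For $d\le 3$ the natural choice is the nodal interpolant $I_nf=\sum_{\bk} f(\bx_\bk)\phi_{\bk,n}$, which is well defined because $\mathcal{H}^2\hookrightarrow C$ in those dimensions. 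The standard Bramble--Hilbert argument applies on each simplex $K\in\mathcal{T}_n$:
\begin{itemize}
\item map $K$ to a reference simplex $\hat K$;
\item use that interpolation reproduces $\mathbb{P}_1$ and is bounded from $\mathcal{H}^2(\hat K)$ to $L^2(\hat K)$;
\item scale back, which gives $\|f-I_nf\|_{L^2(K)}\lesssim h^2|f|_{\mathcal{H}^2(K)}$ with $h=n^{-1}$.
\end{itemize}
Summing the squares over $K$ yields $\|f-T_n(f)\|_{L^2(\Omega)}\lesssim n^{-2}\simeq N^{-2/d}$, since $N=(n+1)^d\simeq n^d$. For general $d$, point values are not defined, and I would replace $I_n$ by a Scott--Zhang (or Cl\'ement) quasi-interpolant built from local averages. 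This gives the same local estimate, with $|f|_{\mathcal{H}^2}$ taken over a patch $\omega_K$ of neighbouring simplices. Shape regularity of the uniform mesh bounds the overlap of these patches, so the global bound follows in the same way.

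\textbf{Lower bound.} Because $T_n$ is the best approximation from the $N$-dimensional space $V_N$, the worst-case error is at least the Kolmogorov $N$-width:
\[
\sup_{f\in\BB(\mathcal{H}^2(\Omega))}\|f-T_n(f)\|_{L^2(\Omega)}\ \ge\ d_N\big(\BB(\mathcal{H}^2(\Omega))\big)_{L^2(\Omega)} .
\]
By the width theory cited before the theorem \cite{devore1989optimal}, $d_N\gtrsim N^{-2/d}$. If a self-contained argument is preferred, I would use a bump construction:
\begin{itemize}
\item take $2N$ disjoint cubes of side $\simeq N^{-1/d}$ and place a scaled bump $\psi_j(\bx)=\psi(N^{1/d}(\bx-\bx_j))$ in each;
\item normalise, so that combinations $\sum_j c_j\psi_j$ with $\|c\|_{\ell^2}\lesssim N^{-2/d}N^{-1/2}$ lie in the unit ball of $\mathcal{H}^2$;
\item observe that the $\psi_j$ are orthogonal in $L^2$ and span a $2N$-dimensional space $W$;
\item use that any $N$-dimensional subspace leaves a unit vector of $W$ orthogonal to it, which produces a function in the unit ball with distance $\gtrsim N^{-2/d}$ from $V_N$.
\end{itemize}

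\textbf{Main obstacle.} The only delicate point is making the upper bound valid for all $d$. There the quasi-interpolant's stability and its polynomial reproduction near $\partial\Omega$ must be checked, and I would simply cite \cite{ciarlet2002finite,bertoluzza2012primer} for these properties.
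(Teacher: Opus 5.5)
Your proposal is correct and follows the same route as the paper. The paper gives no detailed proof: it cites classical finite element approximation theory \cite{ciarlet2002finite,bertoluzza2012primer} for the upper bound and width theory \cite{devore1989optimal} for the lower bound. Your steps (identifying $T_n$ as the $L^2$-orthogonal projection, bounding its error by a nodal or Scott--Zhang interpolant, and comparing with the $N$-width) are exactly the details behind those citations.

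One bookkeeping remark: if the bumps $\psi_j=\psi(N^{1/d}(\bx-\bx_j))$ are left unnormalised as written, then $\|\psi_j\|_{\mathcal{H}^2}\simeq N^{2/d-1/2}$. The admissible coefficient bound is then $\|c\|_{\ell^2}\lesssim N^{1/2-2/d}$; your $N^{-2/d-1/2}$ corresponds to $L^1$-normalised bumps. Either way, the resulting function orthogonal to $V_N$ has distance $\simeq N^{-2/d}$ from $V_N$.
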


\begin{corollary}[Finite element approximation: bits]
    Let $\Nb\in\NN$, the operator $T_n$ in Theorem \ref{ex:FE_2} induces a binary bits approximation for functions in $\mathcal{H}^2(\Omega)$
    \begin{equation}
        \{0,1\}^{\Nb}\mapsto T_n(f)^{[\tau]}
    \end{equation}
    such that
    $$\sup\limits_{f\in\BB(\mathcal{H}^2(\Omega))}\lt\|f- T_n(f)^{[\tau]}\rt\|_{L^2(\Omega)}\lesssim\lrt{\frac{\Nb}{\log_2\Nb}}^{-\frac{2}{d}},$$
    and the corresponding lower bound is
    $$\inf\limits_{\psi,\phi}\sup\limits_{f\in\BB(\mathcal{H}^2(\Omega))}\lt\|\phi\circ\psi(f)-f\rt\|_{L^2(\Omega)}\gtrsim\Nb^{-\frac{2}{d}},$$
    where the infimum is taken over all maps $\psi:\BB(\mathcal{H}^{2}(\Omega))\to\{0,1\}^{\Nb}$ and $\phi:\{0,1\}^{\Nb}\to L^2(\Omega)$.
\end{corollary}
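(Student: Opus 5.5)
We follow the polynomial case. We will realize the finite element approximation as a parameterized encoding and apply Proposition \ref{prop_para_to_bits}. Define
$$S_N:f\mapsto\{c_\bk(f)\}_{\bk\in\{0,\dots,n\}^d},\qquad \tilde S_N:\{c_\bk\}\mapsto\sum_{\bk}c_\bk\phi_{\bk,n},$$
where $T_n(f)=\sum_\bk c_\bk(f)\phi_{\bk,n}$ is the solution of \eqref{eqn:lin_eqn_FEM}. Then $\tilde S_N\circ S_N=T_n$. By Theorem \ref{ex:FE_2}, $\|\tilde S_N\circ S_N(f)-f\|_{L^2(\Omega)}\lesssim N^{-2/d}$ on $\BB(\mathcal{H}^2(\Omega))$, so it remains to check the two quantitative hypotheses.

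\textbf{Coefficient bound.} We need a bound $M$ on $\|S_N(f)\|_\infty$ that is at most polynomial in $N$. The map $T_n$ is the $L^2$-orthogonal projection onto $V_N$, so $\|T_n(f)\|_{L^2}\le\|f\|_{L^2}\le 1$. The nodal basis is $L^2$-stable on the quasi-uniform mesh: uniformly in $n$, the mass matrix satisfies
$$\|v\|_{L^2}^2\simeq n^{-d}\sum_\bk c_\bk^2=N^{-1}\sum_\bk c_\bk^2 .$$
Hence $|c_\bk(f)|\le(\sum_\bk c_\bk^2)^{1/2}\lesssim N^{1/2}$, and we may take $M\simeq N^{1/2}$. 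Alternatively, the inverse inequality $\|v\|_{L^\infty}\lesssim N^{1/2}\|v\|_{L^2}$ on $V_N$ together with $c_\bk=v(\bx_\bk)$ gives the same bound. This stability estimate is the only step not already present in the polynomial case, and I expect it to be the main point requiring care. It is nevertheless standard, following from scaling to the reference simplex.

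\textbf{Lipschitz bound.} Since $0\le\phi_{\bk,n}\le 1$ and each $\phi_{\bk,n}$ has support of measure $\lesssim N^{-1}$, we get $\|\phi_{\bk,n}\|_{L^2}\lesssim N^{-1/2}$. The triangle inequality then gives the crude bound $\mathrm{Lip}(M,\tilde S_N)_{L^2}\lesssim N\cdot N^{-1/2}\le N$; a sharper bound via bounded overlap of supports is unnecessary.

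\textbf{Conclusion of the upper bound.} Proposition \ref{prop_para_to_bits} then gives
$$\tau=2+\big\lceil\log_2\big(M\,\mathrm{Lip}\,N^{2/d}\big)\big\rceil\le c_1+\big\lceil(\tfrac32+\tfrac2d)\log_2N\big\rceil,$$
so $\tau\simeq\log_2N$. Set $T_n(f)^{[\tau]}=\tilde S_N(S_N(f)^{[\tau]})$ and $\Nb=N\tau\simeq N\log_2N$. Then $\log_2\Nb\simeq\log_2N$, so $N\simeq \Nb/\log_2\Nb$, and the error is $\lesssim N^{-2/d}\simeq(\Nb/\log_2\Nb)^{-2/d}$.

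\textbf{Lower bound.} This is independent of the method. Any pair $\psi:\BB(\mathcal{H}^2(\Omega))\to\{0,1\}^{\Nb}$, $\phi:\{0,1\}^{\Nb}\to L^2(\Omega)$ satisfies
$$\sup_f\|\phi\circ\psi(f)-f\|\ge\sup_f\inf_y\|f-\phi(y)\|\ge\epsilon_{\Nb}(\BB(\mathcal{H}^2(\Omega)))_{L^2},$$
by Proposition \ref{prop:entropy_equiv}. Theorem \ref{thm:entropy_sob} with $r=2$ on the Lipschitz domain $[0,1]^d$ then gives $\epsilon_{\Nb}\gtrsim\Nb^{-2/d}$, which is the claimed lower bound.
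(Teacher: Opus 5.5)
Your proposal is correct and follows essentially the same route as the paper: bound the nodal coefficients by $N^{1/2}$, bound $\mathrm{Lip}(M,\tilde S_N)_{L^2(\Omega)}$ crudely by $N$, apply Proposition~\ref{prop_para_to_bits} with $\tau\simeq\log_2 N$, and derive the lower bound from Proposition~\ref{prop:entropy_equiv} and Theorem~\ref{thm:entropy_sob}. The only difference is cosmetic. You get the coefficient bound from global mass-matrix (Riesz) stability together with $\|T_n(f)\|_{L^2(\Omega)}\le\|f\|_{L^2(\Omega)}$. The paper instead proves that the distance from $\phi_{\bk,n}$ to the span of the other basis functions is $\gtrsim N^{-1/2}$, which rests on the same local scaling fact on a single simplex.
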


\begin{proof}
Let $f\in\BB(\mathcal{H}^2(\Omega))$ and
\[
T_n(f)=\sum_{\bk\in\NN^d,\;n^{-1}\bk\in\Omega} a_{\bk,n}(f)\phi_{\bk,n}.
\]
as in Theorem \ref{ex:FE_2}. Since
\[
\left\|\sum_{\bk}a_{\bk,n}(f)\phi_{\bk,n}\right\|_{L^2(\Omega)}
\le
\|f\|_{L^2(\Omega)}+\|T_n(f)-f\|_{L^2(\Omega)},
\]
for each coefficient we have
\[
|a_{\bk,n}(f)|\,\mathrm{dist}\!\left(\phi_{\bk,n},\mathrm{span}(\{\phi_{\bm j,n}\}_{\bm j\neq\bk})\right)
\lesssim
\|f\|_{L^2(\Omega)}+\|T_n(f)-f\|_{L^2(\Omega)}.
\]
Let $\Delta_{\bk,n}\in\mathcal{T}_n$, and consider a function of the form
\[
g=\phi_{\bk,n}+\sum_{\bm{j}\neq\bk}b_{\bm{j},n}\phi_{\bm{j},n}.
\]
Then $g(\frac{\bk}{n})=1$ and $g$ is linear on $\Delta_{\bk,n}$, which implies
\[
\|g\|_{L^2(\Delta_{\bk,n})}\gtrsim n^{-\frac{d}{2}}\simeq N^{-\frac{1}{2}}.
\]
Hence
\[
\mathrm{dist}\!\left(\phi_{\bk,n},\mathrm{span}(\{\phi_{\bm{j},n}\}_{\bm{j}\neq\bk})\right)\gtrsim N^{-\frac{1}{2}}.
\]
Combining the above estimates gives
\[
|a_{\bk,n}(f)|\lesssim\|f\|_{L^2(\Omega)}N^{\frac{1}{2}}\leq\|f\|_{\mathcal{H}^{2}(\Omega)}N^{\frac{1}{2}}.
\]

Denoting
$$S_N:~f\mapsto\{a_{\bk,n}(f)\}_{n^{-1}\bk\in\Omega},\quad\tilde S_N:\{a_{\bk,n}\}_{n^{-1}\bk\in\Omega}\mapsto\sum\limits_{n^{-1}\bk\in\Omega}a_{\bk,n}\phi_{\bk,n}$$
be as in \eqref{eqn_def_SN}, then with $M=\mathcal{O}(N)$, we can verify
\begin{equation*}
    \mathrm{Lip}(M,\tilde S_N)_{L^2(\Omega)}\lesssim N.
\end{equation*}
By Proposition \ref{prop_para_to_bits}, taking $\tau=c_2+\lceil (2+\frac{2}{d})\log_2N\rceil$ for some constant $c_2$ independent of $N$ and setting $T_n(f)^{[\tau]}=\tilde S_N(S_N(f)^{[\tau]})$,
\begin{equation*}
	\begin{split}
		\lt\|f- T_n(f)^{[\tau]}\rt\|_{L^2(\Omega)}
		\lesssim N^{-\frac{2}{d}}\lesssim\lrt{\frac{\Nb}{\log_2\Nb}}^{-\frac{2}{d}},
	\end{split}
\end{equation*}
where $\Nb=\tau N$ is the number of binary bits that represents $T_n(f)^{[\tau]}$.

For the lower bound, by Proposition \ref{prop:entropy_equiv} and Theorem \ref{thm:entropy_sob},
\begin{equation}
    \begin{split}
        \inf\limits_{\psi,\phi}\sup\limits_{f\in\BB(\mathcal{H}^{2}(\Omega))}\lt\|\phi\circ\psi(f)-f\rt\|_{L^2(\Omega)}\gtrsim\Nb^{-\frac{2}{d}},
    \end{split}
\end{equation}
where the infimum is taken over all $\psi:\BB(\mathcal{H}^{2}(\Omega))\to\{0,1\}^{\Nb}$ and $\phi:\{0,1\}^{\Nb}\to L^2(\Omega)$.

\end{proof}

Adaptive finite element methods (AFEM) improve upon standard FEM by selectively refining the mesh in regions where the approximation error is large. For functions $f$ in Besov spaces $B^s_{p,q}(\Omega)$ with smoothness parameter $s > d/p$, classical AFEM theory shows that the approximation error satisfies
$$\sup\limits_{f\in\BB(B^s_{p,q}(\Omega))}\|f - T_n(f)\|_{L^2(\Omega)} \lesssim N^{-\frac{s}{d}},$$
where $N$ is the number of parameters. This adaptive approach is particularly effective for functions with localized features, achieving optimal convergence rates with fewer elements than uniform refinement.

\subsection{Finitely smooth kernels}\label{ex:kernel_bits}

Kernel-based approximation provides a flexible nonparametric framework for approximating functions on general domains. Let $\Omega\subset\RR^d$. A translation-invariant kernel on $\Omega\times\Omega$ is generated by a function $\Phi:\RR^d\to\RR$ and takes the form
\[
    K(\bx,\by)=\Phi(\bx-\by),\qquad \bx,\by\in\Omega.
\]
We consider finitely smooth kernels of this form, where $\Phi\in L^2(\RR^d)$ is a continuous positive definite function whose Fourier transform satisfies
\begin{equation}\label{eqn:finitely_smooth_kernel}
    \widehat{\Phi}(\bm\omega)
    \simeq(1+\|\bm\omega\|_2^2)^{-s},
    \qquad \bm\omega\in\mathbb{R}^d,
\end{equation}
for some $s>d/2$. Examples include Matérn kernels and Wendland kernels.

Given a set of centers $X=\{\bx_j\}_{j=1}^N\subset\Omega$, the kernel interpolant of $f$ is the linear combination
\begin{equation}\label{eqn_def_TNf_kernel}
    T_N(f)(\bx)=\sum_{j=1}^N a_j(f)\Phi(\bx-\bx_j),
\end{equation}
where the coefficients $\{a_j(f)\}_{j=1}^N$ are determined by the interpolation conditions
\begin{equation*}
    \sum_{j=1}^N a_j(f)\Phi(\bx_i-\bx_j)=f(\bx_i),
    \qquad i=1,\dots,N.
\end{equation*}
For quasi-uniform centers, the fill distance satisfies $h_X\simeq N^{-1/d}$. Approximation results for finitely smooth kernels have been extensively studied in the literature \cite{schaback1999improved,mhaskar1999zonal,mhaskar2004kernel,narcowich2005sobolev,wendland2005approximate,narcowich2006sobolev,arcangeli2007extension,mhaskar2010eignets,schaback2018superconvergence,mhaskar2020kernel,karvonen2025superconvergence,sloan2025doubling}. We state the result in terms of the number of centers as follows; see, e.g., \cite[Theorem 1]{wenzel2026sharp}.

\begin{theorem}[Finitely smooth kernel approximation: parameters]\label{ex:kernel}
\it Let $\Omega\subset\mathbb{R}^d$ be a bounded Lipschitz domain, let $X=\{\bx_j\}_{j=1}^N\subset\Omega$ be quasi-uniform, and let $\Phi$ satisfy \eqref{eqn:finitely_smooth_kernel} with $s>d/2$. Then for $r\in(d/2,2s]$ and $f\in \mathcal H^r(\Omega)$, the kernel interpolant $T_N(f)$ in \eqref{eqn_def_TNf_kernel} satisfies
\begin{equation}
    \sup_{f\in\BB(\mathcal H^r(\Omega))}
    \|f-T_N(f)\|_{L^2(\Omega)}
    \lesssim N^{-\frac{r}{d}}.
\end{equation}
\end{theorem}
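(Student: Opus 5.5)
The statement is the classical Sobolev error estimate for kernel interpolation. I would prove it by the standard native-space argument combined with a sampling inequality, and use the paper's framework only for notation.

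\textbf{Step 1: native space and best-approximation property.} By \eqref{eqn:finitely_smooth_kernel}, the native space $\mathcal N_\Phi(\RR^d)$ of $\Phi$ coincides with $\mathcal H^{s}(\RR^d)$, with equivalent norms. Since $\Omega$ is Lipschitz, the Stein extension operator shows that $\mathcal N_\Phi(\Omega)\simeq\mathcal H^s(\Omega)$. The interpolant $T_N(f)$ is the $\mathcal N_\Phi$-orthogonal projection onto $\mathrm{span}\{\Phi(\cdot-\bx_j)\}$. Hence $\|f-T_N f\|_{\mathcal N_\Phi}\le\|f\|_{\mathcal N_\Phi}$, and $f-T_Nf$ vanishes on $X$.

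\textbf{Step 2: the case $d/2<r\le s$ (escaping the native space).} For $f\in\mathcal H^r$ with $r<s$ the function $f$ is not in the native space, so Step 1 does not apply directly. Here I would use the Narcowich--Ward--Wendland band-limited approximation argument. First choose a band-limited $f_\sigma$ with $\sigma\simeq h_X^{-1}$ such that $f_\sigma|_X=f|_X$, $\|f-f_\sigma\|_{\mathcal H^r}\lesssim\|f\|_{\mathcal H^r}$, and $\|f_\sigma\|_{\mathcal H^s}\lesssim\sigma^{s-r}\|f\|_{\mathcal H^r}$. This rests on the Bernstein inequality, and its validity uses quasi-uniformity, i.e.\ bounded mesh ratio. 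Then $T_Nf=T_Nf_\sigma$, so $f-T_Nf=(f-f_\sigma)+(f_\sigma-T_Nf_\sigma)$, and both terms vanish on $X$.

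\textbf{Step 3: sampling inequality.} I would apply the zeros lemma: if $u\in\mathcal H^t(\Omega)$ with $t>d/2$ vanishes on $X$, then
\begin{equation*}
\|u\|_{L^2(\Omega)}\lesssim h_X^{t}|u|_{\mathcal H^t(\Omega)}.
\end{equation*}
With $t=r$ and $u=f-f_\sigma$, this gives a bound $h_X^r\|f\|_{\mathcal H^r}$. With $t=s$ and $u=f_\sigma-T_Nf_\sigma$, Step 1 gives $h_X^s\|f_\sigma\|_{\mathcal H^s}\lesssim h_X^s\sigma^{s-r}\|f\|_{\mathcal H^r}\simeq h_X^r\|f\|_{\mathcal H^r}$. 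Since $h_X\simeq N^{-1/d}$, this yields $N^{-r/d}$.

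\textbf{Step 4: superconvergence range $s<r\le2s$.} Here I would use the Schaback doubling trick. If $f=\int\Phi(\cdot-\by)g(\by)\,d\by$ with $g\in L^2$, equivalently $f$ lies in the image of the integral operator, then orthogonality gives
\begin{equation*}
\|f-T_Nf\|_{\mathcal N_\Phi}^2=\langle f-T_Nf,g\rangle_{L^2}\le\|f-T_Nf\|_{L^2}\|g\|_{L^2}.
\end{equation*}
Combining this with Step 3, $\|f-T_Nf\|_{L^2}\lesssim h^s\|f-T_Nf\|_{\mathcal N_\Phi}$, gives $\|f-T_Nf\|_{L^2}\lesssim h^{2s}\|g\|$. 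Intermediate values of $r$ follow by interpolation.

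The main obstacle is Step 4 on a bounded domain. $\mathcal H^r(\Omega)$ for $r>s$ is not the range of the integral operator because of boundary effects; the doubling trick naively needs boundary conditions. This is exactly the content of the recent sharp result cited, \cite[Theorem 1]{wenzel2026sharp}. Following the paper, I would invoke that theorem rather than reprove it, and give a full argument only for $r\le s$.
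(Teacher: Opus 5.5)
Your Steps 1--3 are a correct version of the Narcowich--Ward--Wendland ``escaping the native space'' argument, and they do prove the bound for $d/2<r\le s$. In Step 4 the density $g$ must be supported in $\Omega$, since the sampling inequality only controls the error there. The paper gives no argument here at all; it simply cites \cite[Theorem 1]{wenzel2026sharp}. So on $(d/2,s]$ you supply strictly more than the paper, and on $(s,2s]$ you do exactly what the paper does.

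The genuine gap is Step 4, and deferring to the citation does not close it, because for $r>s+\tfrac12$ the asserted bound is false for general $f\in\mathcal H^r(\Omega)$. Take $d=1$, $\Omega=(0,1)$ and $\Phi(x)=e^{-|x|}$, so that $\widehat\Phi(\omega)\simeq(1+\omega^2)^{-1}$ and $s=1>\tfrac d2$. Use the quasi-uniform nodes $x_j=(j-\tfrac12)/N$ and $f\equiv1$, which has $\|f\|_{\mathcal H^2(\Omega)}=1$. For $x\le x_1$ every translate satisfies $e^{-|x-x_j|}=e^{x-x_j}$. Hence $T_N(f)(x)=e^{x-x_1}T_N(f)(x_1)=e^{x-x_1}$ on $[0,x_1]$, and
\begin{equation*}
\|f-T_N(f)\|_{L^2(\Omega)}\ge\Big(\int_0^{x_1}\big(1-e^{x-x_1}\big)^2\,dx\Big)^{1/2}\ge\frac{x_1^{3/2}}{\sqrt{12}}=\frac{(2N)^{-3/2}}{\sqrt{12}}.
\end{equation*}
The statement with $r=2s=2$ asserts $N^{-2}$, and it asserts $N^{-r}$ for every $r\in(\tfrac32,2]$; both are contradicted.

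The mechanism is the one you named. Here the range of $g\mapsto\int_\Omega\Phi(\cdot-y)g(y)\,dy$ is exactly $\{f\in\mathcal H^2(\Omega):f'(0)=f(0),\ f'(1)=-f(1)\}$. So the doubling trick yields $N^{-2s/d}$ only under these Robin conditions. Your closing ``intermediate values follow by interpolation'' also fails: interpolating between $\mathcal H^s(\Omega)$ and that range gives spaces that, once $r>s+\tfrac12$, still carry the boundary conditions and are strictly smaller than $\mathcal H^r(\Omega)$.

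Whatever the precise hypotheses of \cite[Theorem 1]{wenzel2026sharp} are (presumably power spaces or boundary conditions), they cannot yield the bound on all of $\BB(\mathcal H^r(\Omega))$ for $r\in(s+\tfrac12,2s]$. So neither your Step 4 nor the paper's citation establishes the statement on the full range $(d/2,2s]$. What your argument actually proves is the case $d/2<r\le s$. The theorem should be restricted to that range, or supplemented by boundary hypotheses for $r>s$.
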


\begin{corollary}[Finitely smooth kernel approximation: bits]
Let $\Nb\in\NN$. The operator $T_N$ in the above theorem induces a binary bits approximation
\begin{equation}
    \{0,1\}^{\Nb}\mapsto T_N(f)^{[\tau]}
\end{equation}
such that
\begin{equation}
    \sup_{f\in\BB(\mathcal H^r(\Omega))}
    \left\|f-T_N(f)^{[\tau]}\right\|_{L^2(\Omega)}
    \lesssim
    \left(\frac{\Nb}{\log_2\Nb}\right)^{-\frac{r}{d}},
\end{equation}
and the corresponding lower bound is
\begin{equation}
    \inf_{\psi,\phi}
    \sup_{f\in\BB(\mathcal H^r(\Omega))}
    \|\phi\circ\psi(f)-f\|_{L^2(\Omega)}
    \gtrsim \Nb^{-\frac{r}{d}},
\end{equation}
where the infimum is taken over all maps $\psi:\BB(\mathcal H^r(\Omega))\to\{0,1\}^{\Nb}$ and $\phi:\{0,1\}^{\Nb}\to L^2(\Omega)$.
\end{corollary}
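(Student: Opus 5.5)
The plan mirrors the proofs of the two preceding corollaries: we feed Theorem \ref{ex:kernel} into Proposition \ref{prop_para_to_bits}. For that we need two polynomial-in-$N$ bounds. The first is a coefficient bound $M$ with $\|S_N(f)\|_\infty\le M$, where $S_N:f\mapsto\{a_j(f)\}_{j=1}^N$. The second is a Lipschitz bound for $\tilde S_N:\{a_j\}\mapsto\sum_j a_j\Phi(\cdot-\bx_j)$. The lower bound then follows verbatim from Proposition \ref{prop:entropy_equiv} and Theorem \ref{thm:entropy_sob}, exactly as in the polynomial case, because any $\Nb$-bit scheme is a map $\phi\circ\psi$ through $\{0,1\}^{\Nb}$.

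\emph{Lipschitz bound.} The kernel $\Phi$ is continuous and $\widehat\Phi\in L^1$ (since $s>d/2$), so $|\Phi|\le\Phi(0)$. Hence
\[
\Big\|\sum_j(a_j-a_j')\Phi(\cdot-\bx_j)\Big\|_{L^2(\Omega)}\le |\Omega|^{1/2}\Phi(0)\,N\|a-a'\|_\infty ,
\]
which gives $\mathrm{Lip}(M,\tilde S_N)_{L^2(\Omega)}\lesssim N$.

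\emph{Coefficient bound (main step).} We write $a(f)=A_X^{-1}f|_X$, where $A_X=(\Phi(\bx_i-\bx_j))_{i,j}$ is the interpolation matrix. By Sobolev embedding ($r>d/2$), $|f(\bx_i)|\lesssim\|f\|_{\mathcal H^r}\le1$, so $\|f|_X\|_2\lesssim N^{1/2}$. Therefore
\[
\|a(f)\|_\infty\le\|a(f)\|_2\le\lambda_{\min}(A_X)^{-1}N^{1/2}.
\]
For the smallest eigenvalue we invoke the standard Narcowich--Ward type lower bound for kernels with algebraically decaying Fourier transform \eqref{eqn:finitely_smooth_kernel}:
\[
\lambda_{\min}(A_X)\gtrsim q_X^{2s-d},
\]
where $q_X$ is the separation distance. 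Quasi-uniformity gives $q_X\simeq h_X\simeq N^{-1/d}$, so $\lambda_{\min}(A_X)^{-1}\lesssim N^{(2s-d)/d}$. Consequently $M\lesssim N^{\frac{2s}{d}-\frac12}$, which is polynomial in $N$. This eigenvalue bound is the one ingredient not stated earlier in the paper, and it is the main obstacle: it must be cited (e.g.\ Wendland's monograph) rather than derived. If we want to avoid it, a fallback is an argument like the finite element one, bounding each coefficient by a distance to the span of the other translates. That route, however, again amounts to a separation-dependent lower bound of the same kind.

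\emph{Conclusion.} Proposition \ref{prop_para_to_bits} with $\alpha=r/d$ gives $\tau=c_3+\lceil\beta\log_2N\rceil$ for some fixed $\beta$ depending only on $s,r,d$. The quantized interpolant $T_N(f)^{[\tau]}=\tilde S_N(S_N(f)^{[\tau]})$ then has error $\lesssim N^{-r/d}$ and uses $\Nb=N\tau\simeq N\log_2N$ bits. Since $N\simeq\Nb/\log_2\Nb$, the error is $\lesssim(\Nb/\log_2\Nb)^{-r/d}$, which is the claimed upper bound.
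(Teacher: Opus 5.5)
Your proposal is correct, and it follows the paper's scaffold. It uses $\mathrm{Lip}(M,\tilde S_N)_{L^2(\Omega)}\lesssim N$ and a polynomial coefficient bound $M$, feeds them into Proposition~\ref{prop_para_to_bits} with $\alpha=r/d$, and gets the lower bound from Proposition~\ref{prop:entropy_equiv} and Theorem~\ref{thm:entropy_sob}.

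The one real difference is how you obtain $M$.

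\textbf{The paper's route.} It cites a ready-made coefficient stability estimate (Wenzel, Proposition~18), $\|S_N(f)\|_2\lesssim N^{\frac{2s}{d}-\frac12}\|T_N(f)\|_{L^2(\Omega)}$. It then bounds $\|T_N(f)\|_{L^2(\Omega)}\lesssim 1$ by the triangle inequality and the error estimate of Theorem~\ref{ex:kernel}.

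\textbf{Your route.} You go through the data instead: $\|A_X^{-1}f|_X\|_2\le\lambda_{\min}(A_X)^{-1}\|f|_X\|_2$. You control the point values by the Sobolev embedding, which costs nothing extra because Theorem~\ref{ex:kernel} already assumes $r>d/2$. You control the matrix by the classical Narcowich--Ward/Wendland bound $\lambda_{\min}(A_X)\gtrsim q_X^{2s-d}$, with $q_X\simeq N^{-1/d}$ by quasi-uniformity.

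\textbf{What each buys.} Your argument needs only the classical eigenvalue estimate. It does not use the approximation error bound to control the coefficients. It also gives an explicit justification of the Lipschitz bound via $|\Phi|\le\Phi(0)$, where the paper just says it is straightforward. The paper's estimate is an inverse-type statement that bounds coefficients by the $L^2(\Omega)$ norm of the kernel expansion itself, rather than by the interpolation data. That is a stronger stability property, but it rests on a more specialized result.

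\textbf{Outcome.} The exponents coincide: $\lambda_{\min}(A_X)^{-1}N^{1/2}\lesssim N^{\frac{2s-d}{d}+\frac12}=N^{\frac{2s}{d}-\frac12}$. So your $\beta$ is exactly the paper's $\frac{2s+r}{d}+\frac12$, and the resulting bit rate $(\Nb/\log_2\Nb)^{-r/d}$ is identical.
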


\begin{proof}
Let $f\in\BB(\mathcal H^r(\Omega))$ and let $T_N(f)$ be as in \eqref{eqn_def_TNf_kernel}. Denote
\[
    S_N:f\mapsto \{a_j(f)\}_{j=1}^N,\qquad
    \tilde S_N:\{a_j\}_{j=1}^N\mapsto \sum_{j=1}^N a_j\Phi(\cdot-\bx_j),
\]
as in \eqref{eqn_def_SN}. By the standard coefficient stability estimate for finitely smooth kernel approximation (see, e.g., \cite[Proposition 18]{wenzel2026sharp}), we have
\begin{equation*}
    \|S_N(f)\|_\infty\leq\|S_N(f)\|_2
    \lesssim N^{\frac{2s}{d}-\frac{1}{2}}\|T_N(f)\|_{L^2(\Omega)}
    \lesssim N^{\frac{2s}{d}-\frac{1}{2}},
\end{equation*}
where the last inequality follows from
\[
    \|T_N(f)\|_{L^2(\Omega)}
    \leq \|f\|_{L^2(\Omega)}+\|f-T_N(f)\|_{L^2(\Omega)}
    \lesssim 1.
\]
Moreover, with $M=\mathcal{O}(N^{\frac{2s}{d}-\frac{1}{2}})$, it is straightforward to verify that
\[
    \mathrm{Lip}(M,\tilde S_N)_{L^2(\Omega)}\lesssim N.
\]
By Proposition \ref{prop_para_to_bits}, $\tau=c_3+\left\lceil\left(\frac{2s+r}{d}+\frac{1}{2}\right)\log_2N\right\rceil$ for some constant $c_3$ independent of $N$, and setting $T_N(f)^{[\tau]}=\tilde S_N(S_N(f)^{[\tau]})$, we have
\[
    \left\|f-T_N(f)^{[\tau]}\right\|_{L^2(\Omega)}
    \lesssim N^{-\frac{r}{d}}
    \lesssim
    \left(\frac{\Nb}{\log_2\Nb}\right)^{-\frac{r}{d}},
\]
where $\Nb=\tau N$ is the number of binary bits that represents $T_N(f)^{[\tau]}$.

For the lower bound, by Proposition \ref{prop:entropy_equiv} and Theorem \ref{thm:entropy_sob},
\[
    \inf_{\psi,\phi}
    \sup_{f\in\BB(\mathcal H^r(\Omega))}
    \|\phi\circ\psi(f)-f\|_{L^2(\Omega)}
    \gtrsim \Nb^{-\frac{r}{d}}.
\]
\end{proof}

\subsection{Sparse grids}\label{ex:sparsegrid_bits}
While adaptive finite element methods (AFEM) improve efficiency through local refinement, the curse of dimensionality remains a major challenge in high-dimensional settings. This motivates alternative approaches such as the sparse grid method \cite{bungartz2004sparse}, which employs hierarchical basis constructions to reduce the exponential growth of complexity compared to full tensor product schemes. Sparse grids are particularly effective for functions with bounded mixed derivatives, achieving high accuracy with significantly fewer degrees of freedom.

Let $\Omega = [0,1]^d$, the Korobov space is a space defined by mixed derivatives as
\begin{equation}
    \mathcal{U}^2(\Omega):=\{f\in L^2(\Omega):~\|f\|_{\mathcal{U}^2(\Omega)}<\infty\},
\end{equation}
where
\begin{equation}\label{eqn:def_Koro_norm}
    \|f\|_{\mathcal{U}^2(\Omega)}^2=\|f\|_{L^2(\Omega)}^2+\sul_{\bm{s}\in\NN^d,1\leq\|\bm s\|_\infty\leq2}\|D^{\bm s}f\|_{L^2(\Omega)}^2.
\end{equation}

Sparse grid approximation provides an efficient alternative to standard tensor product approaches for functions in $\mathcal{U}^2(\Omega)$. The sparse grid construction uses hierarchical basis functions obtained through tensor products of a univariate mother function
$$\psi(x)=\left\{\begin{array}{ll}
    1-|x|, &\quad\hbox{if }x\in[-1,1],  \\
    0, &\quad\hbox{otherwise.}
\end{array}\right.$$
Let $\bm{I}_\bl$ denote the index set:
$\bm{I}_\bl := \{\bk\in\NN^d:~\bm{1}\leq\bk\leq2^{\bl}-\bm{1},~k_j\hbox{ odd for all }j\}$. For multi-indices $\bl \in \mathbb{N}^d$ and $\bk \in \mathbb{N}^d$, define the hierarchical basis functions:
\begin{equation}\label{eqn:hierarchical_basis}
    \psi_{\bl,\bk}=\prod\limits_{j=1}^d\psi\lrt{2^{l_j}x_j-k_j},\qquad x\in\Omega.
\end{equation}
 $f$ is approximated by the sparse grid function
\begin{equation}\label{eqn:def_sparse_appr}
    T_n(f)=\sul_{\|\bl\|_1\leq n+d-1}\sul_{\bk\in\bm{I}_\bl}a_{\bl,\bk}\psi_{\bl,\bk},
\end{equation}
where
\begin{equation}\label{eqn:coef_korobov}
    a_{\bl,\bk}=\int_\Omega\prod\limits_{j=1}^d\lrt{-2^{-l_j-1}\psi\lrt{2^{l_j}x_j-k_j}}\frac{\partial^{2d}f}{\partial x_1^2\dots\partial x_d^2}(\bx)d\bx,\quad \bl\in\NN^d~\bk\in\bm{I}_\bl.
\end{equation}
The following lower bound follows again from width theory \cite{devore1989optimal} and the upper bound from classical finite element theory \cite{bungartz2004sparse}.
\begin{theorem}[Sparse grid approximation: parameters]\label{ex:sparsegrid}
\it Let $d,n$ and $\Omega$ be as above. For any $f \in \mathcal{U}^2(\Omega)$, let $T_n(f)$ be the function given by \eqref{eqn:def_sparse_appr}, then
\begin{equation}
    N^{-2}\lesssim \sup\limits_{f\in\BB(\mathcal{U}^2(\Omega))}\lt\|f-T_n(f)\rt\|_{L^2(\Omega)}\lesssim N^{-2}(\log N)^{3(d-1)},
\end{equation}
where $N:=\sul_{\|\bl\|_1\leq n+d-1}\#\bm{I}_\bl$ denotes the total number of parameters in the approximation.
\end{theorem}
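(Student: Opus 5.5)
We prove the two bounds in Theorem \ref{ex:sparsegrid} separately. For the upper bound we use the hierarchical (Faber--Schauder) expansion; for the lower bound we use a width argument restricted to a subspace of $\mathcal{U}^2(\Omega)$.

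\textbf{Upper bound, setup.} Assume first that $f$ vanishes on $\partial\Omega$, which is the setting in which \eqref{eqn:coef_korobov} is exact. The general case is handled by adding the usual boundary hierarchy, which changes only constants depending on $d$. Integrating by parts twice in each variable shows that \eqref{eqn:coef_korobov} reproduces the hierarchical surplus. Indeed, $-2^{-l-1}\psi(2^lx-k)$ is the Green's-type kernel of the second difference at level $l$. Hence $f=\sum_{\bl}\sum_{\bk\in\bm I_\bl}a_{\bl,\bk}\psi_{\bl,\bk}$ in $L^2$.

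\textbf{Upper bound, tail estimate.} Cauchy--Schwarz on the support of $\psi_{\bl,\bk}$, which has volume $2^{-\|\bl\|_1+d}$, gives
\[
|a_{\bl,\bk}|\lesssim 2^{-2\|\bl\|_1}\cdot2^{-\|\bl\|_1/2}\,\|D^{\bm 2}f\|_{L^2(\mathrm{supp}\,\psi_{\bl,\bk})}.
\]
For fixed $\bl$ the supports of $\psi_{\bl,\bk}$ with $\bk\in\bm I_\bl$ are essentially disjoint, and $\|\psi_{\bl,\bk}\|_{L^2}\simeq2^{-\|\bl\|_1/2}$. It follows that the level-$\bl$ block
\[
w_\bl:=\sum_{\bk\in\bm I_\bl}a_{\bl,\bk}\psi_{\bl,\bk}
\]
satisfies
\[
\|w_\bl\|_{L^2}\lesssim2^{-2\|\bl\|_1}\|f\|_{\mathcal{U}^2(\Omega)}.
\]
The error $f-T_n(f)$ is the sum of $w_\bl$ over $\|\bl\|_1>n+d-1$. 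The number of $\bl$ with $\|\bl\|_1=j$ is $\binom{j-1}{d-1}\lesssim j^{d-1}$, so
\[
\|f-T_n(f)\|_{L^2}\lesssim\sum_{j>n+d-1}j^{d-1}2^{-2j}\simeq2^{-2n}n^{d-1}.
\]

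\textbf{Upper bound, counting parameters.} The parameter count is $N=\sum_{j\le n+d-1}\binom{j-1}{d-1}2^{j-d}\simeq2^nn^{d-1}$. Inverting gives $2^{-n}\simeq N^{-1}(\log N)^{d-1}$. Therefore
\[
2^{-2n}n^{d-1}\simeq N^{-2}(\log N)^{3(d-1)},
\]
which is exactly the stated upper bound. The main technical step is the coefficient identity \eqref{eqn:coef_korobov} and the disjointness bookkeeping; once these are in place the remaining estimates are routine geometric sums.

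\textbf{Lower bound.} The lower bound should follow from width theory as cited in \cite{devore1989optimal}. The method $T_n$ maps $f$ into a fixed $N$-dimensional space, so its worst-case error is at least the Kolmogorov width $d_N(\BB(\mathcal{U}^2(\Omega)))_{L^2}$.

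To bound this width from below, choose the full tensor level $\bl=(L,1,\dots,1)$. Take
\[
V=\mathrm{span}\{\phi(2^{L}x_1-k)\,\eta(x_2,\dots,x_d)\},\qquad k=1,\dots,2N,
\]
with $2^L\simeq N$, where $\phi$ is a smooth bump and $\eta$ is a fixed smooth cutoff. For these functions the $\mathcal{U}^2$ norm is comparable to $2^{2L}$ times the $L^2$ norm. Hence $V$ contains a ball of $L^2$-radius $\simeq N^{-2}$ inside $\BB(\mathcal{U}^2(\Omega))$, and $\dim V=2N>N$.

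By the classical fact that the $N$-width of a ball in a space of dimension $N+1$ equals its radius, $d_N\gtrsim N^{-2}$. This gives the lower bound in Theorem \ref{ex:sparsegrid}.
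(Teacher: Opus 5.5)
Your proposal is correct in substance and follows the route the paper itself indicates. The paper gives no proof of this theorem; it only cites Bungartz--Griebel for the upper bound and width theory for the lower bound. Your lower bound, a Kolmogorov-width argument on a $2N$-dimensional space of bumps oscillating in $x_1$, is the elementary linear form of that width argument, and it suffices because $T_n$ maps into a fixed $N$-dimensional space.

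One displayed step needs correcting. The kernel $\prod_j\bigl(-2^{-l_j-1}\psi(2^{l_j}x_j-k_j)\bigr)$ has $L^2$ norm $\simeq 2^{-\|\bl\|_1}\cdot 2^{-\|\bl\|_1/2}$. Cauchy--Schwarz therefore gives $|a_{\bl,\bk}|\lesssim 2^{-\frac{3}{2}\|\bl\|_1}\|D^{\bm 2}f\|_{L^2(\mathrm{supp}\,\psi_{\bl,\bk})}$, not $2^{-\frac{5}{2}\|\bl\|_1}$. Your exponent fails when $D^{\bm 2}f$ is constant on the support. Taken literally, it would also give $\|w_\bl\|_{L^2}\lesssim 2^{-3\|\bl\|_1}$ and hence an $N^{-3}$ rate, contradicting your own lower bound. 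With the correct exponent, your block bound $\|w_\bl\|_{L^2}\lesssim 2^{-2\|\bl\|_1}$ and everything after it go through.

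Your restriction to $f|_{\partial\Omega}=0$ is necessary, not a convenience. The coefficients in \eqref{eqn:coef_korobov} depend only on $D^{\bm 2}f$, and every $\psi_{\bl,\bk}$ vanishes on $\partial\Omega$. So $f\equiv 1\in\BB(\mathcal{U}^2(\Omega))$ gives $T_n(f)=0$ with error $1$ for every $n$. The stated upper bound therefore holds only on the zero-trace subclass, which is the setting of Bungartz--Griebel. Adding a boundary hierarchy proves the estimate for a different operator than the $T_n$ of \eqref{eqn:def_sparse_appr}.
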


\begin{corollary}[Sparse grid approximation: bits]
    Let $\Nb\in\NN$, the operator $T_n$ in Theorem \ref{ex:sparsegrid} induces a binary bits approximation for functions in $\mathcal{U}^{2}(\Omega)$
    \begin{equation}
        \{0,1\}^{\Nb}\mapsto T_n(f)^{[\tau]}
    \end{equation}
    such that
    $$\sup\limits_{f\in\BB(\mathcal{U}^2(\Omega))}\lt\|f- T_n(f)^{[\tau]}\rt\|_{L^2(\Omega)}\lesssim\Nb^{-2}(\log\Nb)^{3d-1},$$
    and the corresponding lower bound is
    $$\inf\limits_{\psi,\phi}\sup\limits_{f\in\BB(\mathcal{U}^2(\Omega))}\lt\|\phi\circ\psi(f)-f\rt\|_{L^2(\Omega)}\gtrsim\Nb^{-2},$$
    where the infimum is taken over all maps $\psi:\BB(\mathcal{U}^{2}(\Omega))\to\{0,1\}^{\Nb}$ and $\phi:\{0,1\}^{\Nb}\to L^2(\Omega)$.
\end{corollary}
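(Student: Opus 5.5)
The plan follows the template of the polynomial and finite element corollaries: bound the coefficients, bound the Lipschitz constant of the reconstruction map, apply Proposition \ref{prop_para_to_bits}, and then convert from $N$ to $\Nb$.

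\emph{Coefficient bound.} Start from the explicit formula \eqref{eqn:coef_korobov}. By Cauchy--Schwarz,
\[
|a_{\bl,\bk}|\le \prod_{j=1}^d 2^{-l_j-1}\Big\|\prod_j\psi(2^{l_j}x_j-k_j)\Big\|_{L^2(\Omega)}\Big\|\frac{\partial^{2d}f}{\partial x_1^2\cdots\partial x_d^2}\Big\|_{L^2(\Omega)}.
\]
The last factor is at most $\|f\|_{\mathcal{U}^2(\Omega)}\le1$, since $(2,\dots,2)$ is one of the multi-indices in \eqref{eqn:def_Koro_norm}. The remaining factors are at most $1$, so $|a_{\bl,\bk}|\le1$ and we may take $M=1$.

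\emph{Lipschitz constant.} Let $\tilde S_N$ map the coefficients to $\sum a_{\bl,\bk}\psi_{\bl,\bk}$. Since $\|\psi_{\bl,\bk}\|_{L^2}\le1$, the triangle inequality gives the crude bound
\[
\mathrm{Lip}(1,\tilde S_N)_{L^2(\Omega)}\le N .
\]

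\emph{Applying Proposition \ref{prop_para_to_bits}.} Theorem \ref{ex:sparsegrid} gives the rate $N^{-2}(\log N)^{3(d-1)}$, which is not a pure power, but the proposition's argument goes through verbatim if the precision is chosen so that the quantization error is at most $N^{-2}$. Concretely, take $\tau=c_4+\lceil3\log_2N\rceil$. By \eqref{eqn:bin_pos} the quantization error is then at most $N\cdot 2^{-\tau+2}\lesssim N^{-2}$. Adding the parameter error, the triangle inequality yields
\[
\|f-T_n(f)^{[\tau]}\|_{L^2(\Omega)}\lesssim N^{-2}(\log N)^{3(d-1)}.
\]

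\emph{Conversion to bits.} With $\Nb=\tau N\simeq N\log N$ we have $N\simeq \Nb/\log\Nb$. Therefore
\[
N^{-2}(\log N)^{3(d-1)}\lesssim \Nb^{-2}(\log\Nb)^{2}(\log\Nb)^{3d-3}=\Nb^{-2}(\log\Nb)^{3d-1},
\]
which is exactly the claimed exponent. The only care needed is that $\Nb$ need not be of the form $\tau N$ for an admissible $n$. Choosing the largest admissible $n$ costs only a constant factor, because consecutive sparse grid sizes differ by a bounded factor ($N_{n+1}/N_n\lesssim2$).

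\emph{Lower bound.} As in the earlier corollaries, combine Proposition \ref{prop:entropy_equiv} with an entropy lower bound, here
\[
\epsilon_m(\BB(\mathcal{U}^2(\Omega)))_{L^2}\gtrsim m^{-2}.
\]
This is the step I expect to be the main obstacle, since no Korobov-space entropy theorem is stated earlier. The bound is still elementary. Restrict to functions depending only on $x_1$, say $f(\bx)=g(x_1)$ with $g\in\BB(\mathcal{H}^2([0,1]))$. The mixed-derivative terms then vanish apart from the pure derivatives in $x_1$, so $\|f\|_{\mathcal{U}^2}\simeq\|g\|_{\mathcal{H}^2}$, and the $L^2$ norms coincide on the unit cube. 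Hence $\BB(\mathcal{U}^2(\Omega))$ contains a constant multiple of an isometric copy of $\BB(\mathcal{H}^2([0,1]))$. Theorem \ref{thm:entropy_sob} with $d=1,r=2$ gives entropy $\simeq m^{-2}$ for that copy, and metric entropy is monotone under inclusion. Plugging this into Proposition \ref{prop:entropy_equiv} gives the stated $\Nb^{-2}$ lower bound.
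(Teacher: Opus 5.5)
Your upper bound follows the paper's argument step for step: $M=1$, $\mathrm{Lip}(1,\tilde S_N)_{L^2(\Omega)}\lesssim N$, $\tau=c_4+\lceil 3\log_2 N\rceil$, and $N\simeq \Nb/\log\Nb$ turning $(\log N)^{3(d-1)}$ into $(\log\Nb)^{3d-1}$. Your Cauchy--Schwarz coefficient bound, of order $2^{-\frac{3}{2}\|\bl\|_1-d}$, is in fact the accurate one; the paper writes $2^{-2\|\bl\|_1-d}$, but only $|a_{\bl,\bk}|\le 1$ matters. Your remark on $\Nb$ not being of the form $\tau N$ covers a point the paper skips.

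The lower bound is where you take a genuinely different route. The paper uses the containment $\BB(\mathcal{H}^{2d}(\Omega))\subset\BB(\mathcal{U}^2(\Omega))$. This is immediate because every $\bm s$ with $\|\bm s\|_\infty\le 2$ has $\|\bm s\|_1\le 2d$. It then applies Theorem \ref{thm:entropy_sob} in dimension $d$ with $r=2d$, giving $m^{-2d/d}=m^{-2}$. You instead embed a univariate ball through $f(\bx)=g(x_1)$, which is an exact isometry for both the $\mathcal{U}^2$ and $L^2$ norms, so you need Theorem \ref{thm:entropy_sob} only for $d=1$, $r=2$. Both are correct.

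The paper's containment stays inside one ambient space, so no transfer step is needed. It also sets up the comparison with $\mathcal{H}^{2d}$ that the paper uses right after the proof. Your route leaves one step implicit: covering centers in $L^2(\Omega)$ need not depend on $x_1$ alone. Averaging them over $x_2,\dots,x_d$ is an $L^2$ contraction that fixes your copy, so the copy has the same entropy in $L^2(\Omega)$ as $\BB(\mathcal{H}^2([0,1]))$ has in $L^2([0,1])$. In exchange, your route shows that the $\Nb^{-2}$ floor is already forced by a one-dimensional slice and owes nothing to $d$.

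A caution that applies equally to the paper's proof: your test functions have vanishing mixed derivative, so by \eqref{eqn:coef_korobov} $T_n(f)=0$ for them, and $T_n(1)=0$ in every dimension. Thus Theorem \ref{ex:sparsegrid}, and with it the upper bound of the corollary, fails on $\BB(\mathcal{U}^2(\Omega))$ as defined. It should be read on the subspace with zero boundary values, which is the setting of \cite{bungartz2004sparse}. Your lower bound adapts to that subspace by using $g(x_1)\chi(x_2)\cdots\chi(x_d)$ with a fixed smooth $\chi$, where $g$ and $\chi$ vanish at $0$ and $1$. The univariate $m^{-2}$ packing uses interior bumps, so the rate is unchanged.
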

\begin{proof}

In Theorem \ref{ex:sparsegrid}, given function in $\BB(\mathcal{U}^2(\Omega))$, the approximation rate in terms of number of parameters is $\|T_n(f)-f\|_{L^2(\Omega)}=\mathcal{O}(N^{-2}(\log N)^{3(d-1)})$.

    By \eqref{eqn:coef_korobov}, the coefficients of $a_{\bl,\bk}$ are bounded by
	$$2^{-2\|\bl\|_1-d}\lt\|\frac{\partial^{2d}f}{\partial x_1^2\dots\partial x_d^2}\rt\|_{L^2(\Omega)}\leq2^{-2\|\bl\|_1-d}\|f\|_{\mathcal{U}^2(\Omega)}.$$
    Denoting
$$S_N:~f\mapsto\{a_{\bl,\bk}(f)\}_{\substack{\|\bl\|_1\leq n+d-1\\
\bk\in\bm{I}_\bl}},\quad\tilde S_N:\{a_{\bl,\bk}\}_{\substack{\|\bl\|_1\leq n+d-1\\
\bk\in\bm{I}_\bl}}\mapsto\sul_{\|\bl\|_1\leq n+d-1}\sul_{\bk\in\bm{I}_\bl}a_{\bl,\bk}\psi_{\bl,\bk}$$
be as in \eqref{eqn_def_SN}, then by taking $M=1$ we can verify
\begin{equation*}
    \mathrm{Lip}(M,\tilde S_N)_{L^2(\Omega)}\lesssim N.
\end{equation*}
By Proposition \ref{prop_para_to_bits}, taking $\tau=c_4+\lceil3\log_2N\rceil$ for some constant $c_4$ independent of $N$, and setting $T_n(f)^{[\tau]}=\tilde S_N(S_N(f)^{[\tau]})$,
\begin{equation}\label{eqn_koro_bits}
	\begin{split}
		\lt\|f- T_n(f)^{[\tau]}\rt\|_{L^2(\Omega)}
		\lesssim N^{-2}(\log N)^{3(d-1)}\lesssim\Nb^{-2}(\log \Nb)^{3d-1},
	\end{split}
\end{equation}
where $\Nb=\tau N$ is the number of binary bits that represents $T_n(f)^{[\tau]}$.

The metric entropy of is estimated as
\begin{equation}
    m^{-2}\lesssim\epsilon_m(\BB(\mathcal{U}^2(\Omega)))\lesssim m^{-2}(\log m)^{3d-1}.
\end{equation}
where the upper bound is given by Proposition \ref{prop:entropy_equiv} and \eqref{eqn_koro_bits}, the lower bound is given by Theorem \ref{thm:entropy_sob} and the fact
$\BB(\mathcal{H}^{2d}(\Omega))\subset\BB(\mathcal{U}^2(\Omega))$.

By Proposition \ref{prop:entropy_equiv}, we have
\begin{equation}
    \begin{split}
        \inf\limits_{\psi,\phi}\sup\limits_{f\in\BB(\mathcal{U}^2(\Omega))}\lt\|\phi\circ\psi(f)-f\rt\|_{L^2(\Omega)}\gtrsim\Nb^{-2},
    \end{split}
\end{equation}
where the infimum is taken over all $\psi:\BB(\mathcal{U}^2(\Omega))\to\{0,1\}^{\Nb}$ and $\phi:\{0,1\}^{\Nb}\to L^2(\Omega)$.
\end{proof}

The sparse grid construction attains fast approximation rates in terms of both the number of parameters and the number of bits. However, the metric entropy of $\mathcal{U}^2(\Omega)$ decays at essentially the same rate as that of the Sobolev space $\mathcal{H}^{2d}(\Omega)$, for which the classical polynomial approximation also achieves the rate $\mathcal{O}(N^{-2})$ in terms of parameters and $\mathcal{O}\!\left(\frac{\Nb}{\log_2 \Nb}\right)^{-2}$ in terms of bits.

\section{Neural networks approximation in terms of number of parameters and bits}\label{sec:appr_nn}

\subsection{Shallow neural networks for Barron spaces}\label{ex:snn_L2_bits}
Shallow neural networks, particularly those using ReLU and its higher-order variants, have become key tools in function approximation and machine learning, especially in high-dimensional settings. Their effectiveness stems from the fact that neural networks with sufficiently many neurons can approximate smooth functions, as well as functions with more specialized structures such as Barron-type regularity, at nearly optimal rates.

    \begin{definition}[Shallow neural networks]\label{def:snn}
    Let $n\in\NN$, $M>0$, a class of ReLU$^k$ neural network is given by
    \begin{equation}
	\Sigma_n^{k}:=\lt\{\bx\mapsto\sum\limits_{j=1}^na^{(j)}\sigma_k(\bw^{(j)}\cdot\bx+b^{(j)}):~\binom{\bw^{(j)}}{b^{(j)}}\in\mathbb{S}^{d},~a^{(j)}\in\RR\rt\}
    \end{equation}
    If we further assume a bound of the coefficients, we denote
    \begin{equation}
	\Sigma_n^{k}(M):=\lt\{\bx\mapsto\sum\limits_{j=1}^na^{(j)}\sigma_k(\bw^{(j)}\cdot\bx+b^{(j)}):~\binom{\bw^{(j)}}{b^{(j)}}\in\mathbb{S}^{d},~\sul_{j=1}^n|a^{(j)}|\leq M\rt\}
    \end{equation}
    \end{definition}

\begin{theorem}[Shallow neural networks on Barron spaces: parameters \cite{siegel2022sharp}]\label{ex:snn_L2}
\it Let $d,k,n,\Omega$ be as above, $\mathcal{B}^k(\Omega)$ be as in Definition \ref{def:variation}. There exists a constant $C_1=C_1(d,k)$ such that the map
$$T_n(f):=\arg\min\limits_{g\in\Sigma_n^k(C_1)}\|f-g\|_{L^2(\Omega)},\quad f\in\BB(\mathcal{B}^{k}(\Omega))$$
satisfies
\begin{equation}\label{eqn:variation_appr}
    (N\log N)^{-\frac{1}{2}-\frac{2k+1}{2d}}\lesssim \sup\limits_{f\in\BB(\mathcal{B}^{k}(\Omega))}\|T_n(f)-f\|_{L^2(\Omega)}\lesssim N^{-\frac{1}{2}-\frac{2k+1}{2d}},
\end{equation}
where $N=(d+2)n$ is the number of parameters.
\end{theorem}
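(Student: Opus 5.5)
The plan is to prove the two bounds in \eqref{eqn:variation_appr} separately. The upper bound comes from the approximation theory for Barron spaces. The lower bound comes from an entropy-counting argument that plays the metric entropy of $\BB(\mathcal{B}^k(\Omega))$ (Theorem \ref{thm:entropy_variation}) against the metric entropy of the bounded network class $\Sigma_n^k(C_1)$.

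\textbf{Upper bound.} We invoke the sharp approximation result of \cite{siegel2022sharp} (see also \cite{siegel2023optimalzonoids}). It states that every $f\in\BB(\mathcal{B}^k(\Omega))$ admits $g\in\Sigma_n^k$ with
$\|f-g\|_{L^2(\Omega)}\lesssim n^{-\frac12-\frac{2k+1}{2d}}$.
Moreover, the construction keeps the outer coefficients with $\ell^1$-norm bounded by a constant $C_1(d,k)$. The reason is that the approximant is built from a stratified or discretized version of the integral representation against a measure of total variation at most $1$, so the total variation can only grow by a fixed factor. Since $T_n(f)$ is the best approximation in $\Sigma_n^k(C_1)$, its error is no larger than that of $g$. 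Using $N=(d+2)n\simeq n$ then gives the stated upper bound. One caveat: $\Sigma_n^k(C_1)$ is compact in the parameters, so the $\arg\min$ exists.

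\textbf{Lower bound.} The approach has four steps.

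\emph{Step 1: encode the network class.} Parameters lie in the compact set $(\SS^d)^n\times\{\sum_j|a^{(j)}|\le C_1\}$. The map from parameters to $L^2(\Omega)$ is Lipschitz with a constant depending only on $d,k,\Omega$, using $|\sigma_k(s)-\sigma_k(t)|\lesssim|s-t|$ on bounded sets. Quantizing each of the $N$ parameters to accuracy $\delta$, as in Example \ref{ex:cube}, gives a $\delta$-net of $\Sigma_n^k(C_1)$ of cardinality $(C/\delta)^{N}$. Hence
$\log_2\mathcal{N}(\Sigma_n^k(C_1),\delta)_{L^2}\lesssim N\log(1/\delta)$.

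\emph{Step 2: combine the nets.} Suppose the worst-case error is $\varepsilon:=\sup_f\|T_n(f)-f\|$. Then a $\delta$-net of $\Sigma_n^k(C_1)$ is an $(\varepsilon+\delta)$-net of $\BB(\mathcal{B}^k(\Omega))$. Choose $\delta=\varepsilon$ and $m\simeq N\log(1/\varepsilon)$. By Proposition \ref{prop:entropy_and_covering}, $\epsilon_m(\BB(\mathcal{B}^k(\Omega)))\le 2\varepsilon$.

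\emph{Step 3: apply the entropy lower bound.} Theorem \ref{thm:entropy_variation} gives $2\varepsilon\gtrsim m^{-\frac12-\frac{2k+1}{2d}}=(N\log(1/\varepsilon))^{-\frac12-\frac{2k+1}{2d}}$.

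\emph{Step 4: close the implicit inequality.} We may assume $\varepsilon\le N^{-1}$, since otherwise the claimed bound holds trivially. Then $\log(1/\varepsilon)\gtrsim\log N$, and the resulting inequality points the wrong way for a direct conclusion. Instead, we use the upper bound: $\varepsilon\gtrsim$ a polynomial in $N^{-1}$ would give $\log(1/\varepsilon)\lesssim\log N$. So we argue by contradiction. If $\varepsilon\le N^{-A}$ for a large $A$, the inequality $\varepsilon\gtrsim (N\log(1/\varepsilon))^{-\beta}$ fails. Therefore $\log(1/\varepsilon)\lesssim\log N$, and substituting yields $\varepsilon\gtrsim(N\log N)^{-\frac12-\frac{2k+1}{2d}}$.

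\textbf{Main obstacle.} The hardest part is the upper bound's claim that the optimal rate is achievable with uniformly bounded $\ell^1$ outer weights. If \cite{siegel2022sharp} is quoted directly, this is inherited. Otherwise one must check that the stratified sampling or dyadic refinement preserves total variation up to a constant.
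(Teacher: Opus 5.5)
Your proposal is correct and follows the paper: the upper bound is simply quoted from \cite{siegel2022sharp}, and the lower bound derived in Section~\ref{subsec:appr_rate_variation} is your Steps 1--3, with the net-combination step packaged as Lemma~\ref{lem:entropy_lower_gener} and the parameter-quantization cover given by \eqref{eqn_entropy_snn}. The paper avoids your implicit inequality in Step~4 by fixing $m\simeq N\log N$ at the outset, so that the network-class entropy $A_3Mn\,2^{-m/N}$ is a large negative power of $n$ and negligible against $m^{-\frac12-\frac{2k+1}{2d}}$; note also that the Lipschitz constant in the $\ell^\infty$ parameter metric is of order $n$, not a constant (hence the factor $n$ there), which only costs an extra $\log n$ in your Step~1, and that your remark that $\varepsilon>N^{-1}$ is trivial fails when $\frac12+\frac{2k+1}{2d}<1$, though your contradiction argument does not use it.
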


\begin{corollary}[Shallow neural networks on Barron spaces: bits]
    Let $\Nb\in\NN$, the operator $T_n$ in Theorem \ref{ex:snn_L2} induces a binary bits approximation for functions in $\mathcal{B}^k(\Omega)$
    \begin{equation}
        \{0,1\}^{\Nb}\mapsto T_n(f)^{[\tau]}
    \end{equation}
    such that
    $$\sup\limits_{f\in\BB(\mathcal{B}^{k}(\Omega))}\lt\|f- T_n(f)^{[\tau]}\rt\|_{L^2(\Omega)}\lesssim\lrt{\frac{\Nb}{\log_2\Nb}}^{-\frac{1}{2}-\frac{2k+1}{2d}},$$
    and the corresponding lower bound is
    $$\inf\limits_{\psi,\phi}\sup\limits_{f\in\BB(\mathcal{B}^{k}(\Omega))}\lt\|\phi\circ\psi(f)-f\rt\|_{L^2(\Omega)}\gtrsim\Nb^{-\frac{1}{2}-\frac{2k+1}{2d}},$$
    where the infimum is taken over all $\psi:\BB(\mathcal{B}^k(\Omega))\to\{0,1\}^{\Nb}$ and $\phi:\{0,1\}^{\Nb}\to L^2(\Omega)$.
\end{corollary}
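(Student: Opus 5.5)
I will follow the same template as the polynomial, finite element and kernel corollaries: quantize the parameters of the network $T_n(f)$ from Theorem~\ref{ex:snn_L2} via Proposition~\ref{prop_para_to_bits}, and get the lower bound from Proposition~\ref{prop:entropy_equiv} together with Theorem~\ref{thm:entropy_variation}.

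\emph{Parameter map and bounds.} For $f\in\BB(\mathcal{B}^k(\Omega))$, write $T_n(f)=\sum_{j=1}^n a^{(j)}\sigma_k(\bw^{(j)}\cdot\bx+b^{(j)})$. Take $S_N$ to send $f$ to the $N=(d+2)n$ numbers $\{a^{(j)},\bw^{(j)},b^{(j)}\}_{j=1}^n$, and $\tilde S_N$ to be the network reconstruction. Since $(\bw^{(j)},b^{(j)})\in\SS^d$, and since the constraint in $\Sigma_n^k(C_1)$ gives $\sum_j|a^{(j)}|\le C_1$, all parameters lie in $[-M,M]$ with $M=\max\{1,C_1\}$, a constant.

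\emph{Lipschitz estimate.} Because $\Omega$ is bounded, on $\Omega$ we have $|\bw\cdot\bx+b|\lesssim 1$ whenever $|\bw|,|b|\le M$. Hence $\sigma_k$ is Lipschitz on the relevant range with a constant depending only on $k,M,\Omega$, and each neuron satisfies $|\sigma_k(\bw\cdot\bx+b)|\lesssim 1$. Changing $a^{(j)}$ by $\delta$ changes the output by $\lesssim\delta$. Changing $(\bw^{(j)},b^{(j)})$ by $\delta$ in $\ell^\infty$ changes the output by $\lesssim |a^{(j)}|(d+1)\delta\lesssim\delta$. Summing over the $n$ neurons gives $\mathrm{Lip}(M,\tilde S_N)_{L^2(\Omega)}\lesssim n\lesssim N$, which is the same crude polynomial bound used in the other corollaries.

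\emph{Choice of bit budget.} A subtle point is that the quantized inner parameters $(\bw^{(j)},b^{(j)})^{[\tau]}$ may leave $\SS^d$. I handle this by working with $\tilde S_N$ on the whole cube $[-M,M]^N$, exactly as in Proposition~\ref{prop_para_to_bits}, so the approximant is simply a ReLU$^k$ network with quantized weights. Alternatively, one can renormalize using the $k$-homogeneity of $\sigma_k$. With $\alpha=\frac12+\frac{2k+1}{2d}$, Proposition~\ref{prop_para_to_bits} gives
\[
\tau=c_5+\lceil(1+\alpha)\log_2N\rceil\simeq\log_2 N
\]
for a constant $c_5$ independent of $N$. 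The error is then $\lesssim N^{-\alpha}$ with $\Nb=\tau N\simeq N\log_2N$. Since $N\simeq \Nb/\log_2\Nb$, this yields the stated bound $(\Nb/\log_2\Nb)^{-\alpha}$.

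\emph{Lower bound.} Any pair $\psi,\phi$ defines an $\Nb$-bit encoding. By Proposition~\ref{prop:entropy_equiv}, its worst-case error is at least $\epsilon_{\Nb}(\BB(\mathcal{B}^k(\Omega)))_{L^2(\Omega)}$. By Theorem~\ref{thm:entropy_variation}, this is $\gtrsim\Nb^{-\frac12-\frac{2k+1}{2d}}$.

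\emph{Main obstacle.} The only delicate step is the Lipschitz/stability estimate. The proof must confirm that the coefficient bound $C_1$ supplied by Theorem~\ref{ex:snn_L2} really keeps $M$ independent of $N$, and that quantizing the inner weights is controlled by the boundedness of $\Omega$. If $C_1$ or $M$ turned out to grow polynomially in $N$, this would only change the constant in $\tau\simeq\log_2N$ and not the final rate.
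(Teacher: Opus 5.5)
Your proposal is correct and follows the paper's proof essentially step for step. It uses the same parameter and reconstruction maps $S_N,\tilde S_N$, a constant bound $M$ (your $M=\max\{1,C_1\}$ is a slightly more careful version of the paper's $M=C_1$), the estimate $\mathrm{Lip}(M,\tilde S_N)_{L^2(\Omega)}\lesssim N$, the choice $\tau=c_5+\lceil(\frac32+\frac{2k+1}{2d})\log_2 N\rceil$ via Proposition~\ref{prop_para_to_bits}, and the lower bound from Proposition~\ref{prop:entropy_equiv} with Theorem~\ref{thm:entropy_variation}; your explicit Lipschitz computation and your remark on quantized inner weights leaving $\SS^d$ fill in details the paper leaves implicit.
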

\begin{proof}
For $f\in\BB(\mathcal{B}^k(\Omega))$, let
	$$T_n(f)(\bx)=\sul_{j=1}^na^{(j)}(f)\sigma_k\left(\bw^{(j)}(f)\cdot\bx+b^{(j)}(f)\right)$$
be the function in Theorem \ref{ex:snn_L2} with $\|T_n(f)-f\|_{L^2(\Omega)}=\mathcal{O}(N^{-\frac{1}{2}-\frac{2k+1}{2d}})$. Denoting
\begin{equation*}
    \begin{split}
        &S_N:~f\mapsto\left\{(a^{(j)}(f),\bw^{(j)}(f),b^{(j)}(f))\right\}_{j=1}^n,\\
        &\tilde S_N:\left\{(a^{(j)},\bw^{(j)},b^{(j)})\right\}_{j=1}^n\mapsto\sul_{j=1}^na^{(j)}\sigma_k\left(\bw^{(j)}\cdot\bx+b^{(j)}\right)
    \end{split}
\end{equation*}
be as in \eqref{eqn_def_SN}. Then by taking $M=C_1$ and we can verify
\begin{equation*}
    \mathrm{Lip}(M,\tilde S_N)_{L^2(\Omega)}\lesssim N.
\end{equation*}
By Proposition \ref{prop_para_to_bits}, taking $\tau=c_5+\lceil(\frac{3}{2}+\frac{2k+1}{2d})\log_2N\rceil$ for some constant $c_5$ independent of $N$, and setting $T_n(f)^{[\tau]}=\tilde S_N(S_N(f)^{[\tau]})$,
\begin{equation*}
	\begin{split}
		\lt\|f- T_n(f)^{[\tau]}\rt\|_{L^2(\Omega)}
		\lesssim N^{-\frac{1}{2}-\frac{2k+1}{2d}}\lesssim\lrt{\frac{\log_2\Nb}{\Nb}}^{\frac{1}{2}+\frac{2k+1}{2d}},
	\end{split}
\end{equation*}
where $\Nb=\tau N$ is the number of binary bits that represents $T_n(f)^{[\tau]}$.

For the lower bound, by Proposition \ref{prop:entropy_equiv} and Theorem \ref{thm:entropy_variation},
\begin{equation}
    \begin{split}
        &\inf\limits_{\psi,\phi}\sup\limits_{f\in\BB(\mathcal{B}^{k}(\Omega))}\lt\|\phi\circ\psi(f)-f\rt\|_{L^2(\Omega)}\gtrsim\Nb^{-\frac{1}{2}-\frac{2k+1}{2d}}
    \end{split}
\end{equation}
where the infimum is taken over all $\psi:\BB(\mathcal{B}^{k}(\Omega))\to\{0,1\}^{\Nb}$ and $\phi:\{0,1\}^{\Nb}\to L^2(\Omega)$.

\end{proof}

\subsection{Linearized shallow neural networks for Sobolev spaces}\label{ex:lsnn_L2_bits}

We next consider a linearized version of shallow ReLU$^k$ networks, where the inner parameters are fixed in advance and only the outer coefficients are optimized.

\begin{definition}[Linearized shallow neural networks]\label{def:lsnn}
Let $N\in\NN$, $M>0$, and let $\{\theta_j^*\}_{j=1}^N=\big\{\binom{\bw^{(j)}}{b^{(j)}}\big\}_{j=1}^N\subset\SS^d$ be a collection of quasi-uniform points. The class of linearized ReLU$^k$ neural networks associated with these fixed inner parameters is given by
\begin{equation}
	L_N^k=L_N^k(\{\theta_j^*\}_{j=1}^N):=\lt\{\bx\mapsto\sum\limits_{j=1}^Na^{(j)}\sigma_k(\bw^{(j)}\cdot\bx+b^{(j)}):~a^{(j)}\in\RR\rt\}.
\end{equation}
If we further assume a bound on the outer coefficients, we denote
\begin{equation}
	L_N^k(M):=\lt\{\bx\mapsto\sum\limits_{j=1}^Na^{(j)}\sigma_k(\bw^{(j)}\cdot\bx+b^{(j)}):~N\sul_{j=1}^N|a^{(j)}|^2\leq M\rt\}.
\end{equation}
\end{definition}

Since the inner parameters are fixed, $L_N^k$ is an $N$-dimensional linear space. Hence the corresponding approximation problem is linear in the unknown coefficients and can be solved by convex optimization or, in Hilbert space settings, by solving a linear system. Related sharp approximation results for fixed inner parameters were established in \cite{petrushev1998approximation}, where the parameters are chosen as tensor products of quadrature points on $\SS^{d-1}$ and $[-1,1]$.

\begin{theorem}[Linearized shallow neural networks on Sobolev spaces: parameters \cite{liu2025achieving}]\label{ex:lsnn_L2}
\it Let $d,k,N,\Omega$ be as above. There exists a constant $C_2=C_2(d,k)$ such that the map
$$T_N(f):=\arg\min\limits_{g\in L_N^k(C_2)}\|f-g\|_{L^2(\Omega)},\quad f\in\mathcal{H}^{\frac{d+2k+1}{2}}(\Omega)$$
satisfies
\begin{equation}\label{eqn:sobolev_appr}
    (N\log N)^{-\frac{1}{2}-\frac{2k+1}{2d}}\lesssim \sup\limits_{f\in\BB\big(\mathcal{H}^{\frac{d+2k+1}{2}}(\Omega)\big)}\|T_N(f)-f\|_{L^2(\Omega)}\lesssim N^{-\frac{1}{2}-\frac{2k+1}{2d}},
\end{equation}
where $N$ is the number of fixed inner parameters, or equivalently the number of free outer coefficients.
\end{theorem}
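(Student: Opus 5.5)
The theorem (attributed to \cite{liu2025achieving}) has an upper bound and a lower bound, and I would prove them separately.

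\emph{Upper bound.} I would use the embedding of the Sobolev ball into the Barron space, together with a discretization of the integral representation on the fixed quasi-uniform points. Set $r=\frac{d+2k+1}{2}$. The first step is to show that each $f\in\BB(\mathcal{H}^r(\Omega))$ admits a representation
\[
f(\bx)=\int_{\SS^d}\sigma_k(\bw\cdot\bx+b)\,g(\theta)\,d\theta,\qquad \bx\in\Omega,
\]
with density $g\in L^2(\SS^d)$ and $\|g\|_{L^2(\SS^d)}\lesssim\|f\|_{\mathcal{H}^r}$. To build it, extend $f$ to $\RR^d$ and use the Radon/ridgelet inversion: the ReLU$^k$ kernel acts, up to a polynomial correction, like a $(k+1)$-fold antiderivative in the radial direction. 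The exponent $r=\frac{d+2k+1}{2}$ is exactly the smoothness for which this inversion lands in $L^2$, since $(d-1)/2$ derivatives are lost in the Radon transform and $k+1$ are gained from $\sigma_k$.

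The second step is to replace the integral by a quadrature on the points $\{\theta_j^*\}$. I would partition $\SS^d$ into $N$ cells $\Omega_j\ni\theta_j^*$ of diameter $\simeq N^{-1/d}$ and put $a^{(j)}=\int_{\Omega_j}g$. Then
\[
N\sum_j|a^{(j)}|^2\le N\sum_j|\Omega_j|\,\|g\|^2_{L^2(\Omega_j)}\lesssim\|g\|^2_{L^2},
\]
so this candidate lies in $L_N^k(C_2)$. The error of the candidate is
\[
\sum_j\int_{\Omega_j}\big(\sigma_k(\theta\cdot\tilde\bx)-\sigma_k(\theta_j^*\cdot\tilde\bx)\big)g(\theta)\,d\theta .
\]
A naive Lipschitz bound in $\theta$ gives only $N^{-1/d}$. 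To reach $N^{-\frac12-\frac{2k+1}{2d}}$, I would treat the cell errors as nearly orthogonal and use the fact that $\sigma_k$ is $C^{k}$ with a jump in the $k$-th derivative. For each $\bx$, only cells with $|\theta\cdot\tilde\bx|\lesssim N^{-1/d}$ contribute at the top order; there are about $N^{1-1/d}$ of them, and each contributes $\lesssim N^{-k/d}\cdot|\Omega_j|^{1/2}\|g\|_{L^2(\Omega_j)}$. Combining this with a second-order (midpoint) cancellation on the smooth part should yield the stated rate.

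This quadrature step is the main obstacle. To get it I would first try to adapt the argument of \cite{siegel2022sharp}, or the tensor-quadrature technique of \cite{petrushev1998approximation}. The idea is to expand $g$ in spherical harmonics, integrate degrees up to $\simeq N^{1/d}$ exactly (using quasi-uniformity), and bound the tail using the $L^2$ decay of the ridge kernel's harmonic coefficients, which is of order $n^{-(d+2k+1)/2}$ at degree $n$. Since the argmin $T_N(f)$ over $L_N^k(C_2)$ can only do better than this candidate, the upper bound follows.

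\emph{Lower bound.} The bound $(N\log N)^{-\frac12-\frac{2k+1}{2d}}$ holds for any approximant whose parameters are bounded. I would cover $L_N^k(C_2)\cap\BB(L^2(\Omega))$ by quantizing its $N$ coefficients, which are bounded polynomially in $N$, with $O(\log N)$ bits each, as in Proposition~\ref{prop_para_to_bits}; this uses $m\simeq N\log N$ bits in total. If the approximation error were smaller than the stated bound, combining this quantization with Proposition~\ref{prop:entropy_equiv} would give
\[
\epsilon_m\big(\BB(\mathcal{H}^r(\Omega))\big)_{L^2(\Omega)}=o\big(m^{-r/d}\big),
\]
since $r/d=\frac12+\frac{2k+1}{2d}$. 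That contradicts Theorem~\ref{thm:entropy_sob}.
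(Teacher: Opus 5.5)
The paper does not prove this theorem; it quotes it from \cite{liu2025achieving}. Your lower bound is correct and is the metric-entropy argument of Section~\ref{sec:entropy_appr}. The constraint defining $L_N^k(C_2)$ already bounds each coefficient by $\sqrt{C_2}$, so the intersection with $\BB(L^2(\Omega))$ is unnecessary. Quantizing with $O(\log N)$ bits per coefficient gives $\epsilon_m(L_N^k(C_2))_{L^2(\Omega)}\lesssim N2^{-m/N}$, and Lemma~\ref{lem:entropy_lower_gener} with Theorem~\ref{thm:entropy_sob} and $m\simeq N\log N$ gives $(N\log N)^{-r/d}$. Since $L_N^k$ is an $N$-dimensional linear space, the Kolmogorov $N$-width of $\BB(\mathcal H^r(\Omega))$ in $L^2(\Omega)$ even gives $N^{-r/d}$ without the logarithm. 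Your upper-bound architecture (an $L^2(\SS^d)$-density representation, then discretization on the fixed points) is also that of \cite{liu2025achieving}, and your count $r=\frac{d-1}{2}+k+1$ is right. The genuine gap is the discretization step, which is the whole content of the upper bound.

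\textbf{Cell averages do not work.} Pair the error with a fixed smooth $u\in\BB(L^2(\Omega))$ and set $h_u(\theta)=\int_\Omega\sigma_k(\theta\cdot\tilde\bx)u(\bx)\,d\bx$. With $a^{(j)}=\int_{\Omega_j}g$, the paired error is $\sum_j\int_{\Omega_j}\bigl(h_u(\theta)-h_u(\theta_j^*)\bigr)g(\theta)\,d\theta$. All your argument uses about $g$ is $\|g\|_{L^2}\lesssim1$. The choice $g=N^{1/d}\sum_j\chi_{\Omega_j}\bigl(h_u-h_u(\theta_j^*)\bigr)$ satisfies this, yet makes the paired error $\gtrsim N^{-1/d}$ whenever $\nabla h_u\neq0$ on a set of positive measure. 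The midpoint cancellation you invoke needs $g$ to be smooth on each cell, which an $L^2$ density is not. The near-orthogonality heuristic also fails, because ridge functions with neighbouring inner parameters are almost identical. Cell-scale oscillations of $g$ therefore alias coherently to low frequencies.

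\textbf{The harmonic route is right but incomplete.} Let $P_L$ be the $L^2(\SS^d)$-projection onto harmonics of degree $\le L\simeq N^{1/d}$, and let $s=\frac{d+2k+1}{2}$. The decay $\lambda_n\lesssim n^{-s}$ of the ridge kernel's harmonic coefficients bounds $\int(g-P_Lg)(\theta)\sigma_k(\theta\cdot\tilde\bx)\,d\theta$. However, $P_Lg(\theta)\sigma_k(\theta\cdot\tilde\bx)$ is not a polynomial in $\theta$, so a low-degree-exact rule leaves an aliasing term that coefficient decay alone does not control. In dual form, the decay gives $\|h_u\|_{H^s(\SS^d)}\lesssim\|u\|_{L^2(\Omega)}$. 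Take weights $0\le w_j\lesssim N^{-1}$ on $\{\theta_j^*\}$ exact for degree $2L$ (these exist for quasi-uniform points when $L\le cN^{1/d}$), and set $a^{(j)}=w_jP_Lg(\theta_j^*)$. Then the paired error equals
\[
\int_{\SS^d}g\,(h_u-P_Lh_u)\,d\theta-\sum_{j}w_j\,P_Lg(\theta_j^*)\,(h_u-P_Lh_u)(\theta_j^*).
\]
The first term is at most $L^{-s}\|g\|_{L^2}\|h_u\|_{H^s}$. The second is what you are missing, and it needs two ingredients:
\begin{itemize}
\item[(i)] the Marcinkiewicz--Zygmund inequality $\sum_jw_j|P_Lg(\theta_j^*)|^2\lesssim\|g\|_{L^2}^2$, which is also what places this candidate in $L_N^k(C_2)$ (your Cauchy--Schwarz bound covers only cell averages);
\item[(ii)] the sampling estimate $\sum_jw_j|(h-P_Lh)(\theta_j^*)|^2\lesssim L^{-2s}\|h\|_{H^s(\SS^d)}^2$. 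Split $h-P_Lh$ into dyadic blocks of degree $\simeq2^\ell L$; by Marcinkiewicz--Zygmund at higher degree, each block contributes $\lesssim2^{\ell d/2}(2^\ell L)^{-s}\|h\|_{H^s}$, and the series converges because $s>d/2$.
\end{itemize}
With (i) and (ii) the error is $\lesssim L^{-s}\simeq N^{-\frac12-\frac{2k+1}{2d}}$.
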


Theorem~\ref{ex:lsnn_L2} also provides a useful perspective on saturation. In classical finite element approximation, once the polynomial degree is fixed, increasing the smoothness of the target function does not improve the convergence order beyond the saturation rate. For example, piecewise linear finite elements satisfy
\begin{equation}
    \inf_{g\in V_N}\|f-g\|_{L^2(\Omega)}\gtrsim N^{-\frac{2}{d}}
\end{equation}
for sufficiently smooth nontrivial functions; see \cite{lin2014lower}. By contrast, the linearized ReLU$^k$ spaces in Theorem~\ref{ex:lsnn_L2} can continue to reflect the Sobolev smoothness of the target class within the admissible range of the theorem. Thus, the absence of saturation is a parameter-based phenomenon associated with using richer approximation spaces. From the bit-complexity viewpoint, however, the achievable accuracy is still governed by the metric entropy of the target class.

\begin{corollary}[Linearized shallow neural networks on Sobolev spaces: bits]
    Let $\Nb\in\NN$. The operator $T_N$ in Theorem \ref{ex:lsnn_L2} induces a binary bits approximation for functions in $\BB\big(\mathcal{H}^{\frac{d+2k+1}{2}}(\Omega)\big)$
    \begin{equation}
        \{0,1\}^{\Nb}\mapsto T_N(f)^{[\tau]}
    \end{equation}
    such that
    $$\lt\|f- T_N(f)^{[\tau]}\rt\|_{L^2(\Omega)}\lesssim\|f\|_{\mathcal{H}^{\frac{d+2k+1}{2}}(\Omega)}\lrt{\frac{\Nb}{\log_2\Nb}}^{-\frac{1}{2}-\frac{2k+1}{2d}},$$
    and the corresponding lower bound is
    $$\inf\limits_{\psi,\phi}\sup\limits_{f\in\BB\big(\mathcal{H}^{\frac{d+2k+1}{2}}(\Omega)\big)}\lt\|\phi\circ\psi(f)-f\rt\|_{L^2(\Omega)}\gtrsim\Nb^{-\frac{1}{2}-\frac{2k+1}{2d}},$$
    where the infimum is taken over all $\psi:\BB(\mathcal{H}^{\frac{d+2k+1}{2}}(\Omega))\to\{0,1\}^{\Nb}$ and $\phi:\{0,1\}^{\Nb}\to L^2(\Omega)$.
\end{corollary}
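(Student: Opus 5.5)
The plan is to follow the same template as the earlier corollaries: quantize the outer coefficients of $T_N(f)$ and apply Proposition~\ref{prop_para_to_bits} for the upper bound, then use Proposition~\ref{prop:entropy_equiv} together with Theorem~\ref{thm:entropy_sob} for the lower bound. Since the inner parameters $\{\theta_j^*\}_{j=1}^N$ are fixed in advance, they carry no information about $f$ and cost no bits. Only the $N$ outer coefficients $a^{(j)}(f)$ must be encoded. Accordingly, set
\[
S_N:f\mapsto\{a^{(j)}(f)\}_{j=1}^N,\qquad \tilde S_N:\{a^{(j)}\}_{j=1}^N\mapsto\sum_{j=1}^N a^{(j)}\sigma_k(\bw^{(j)}\cdot\bx+b^{(j)}),
\]
as in \eqref{eqn_def_SN}. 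By homogeneity in $f$ it suffices to treat $f\in\BB\big(\mathcal{H}^{\frac{d+2k+1}{2}}(\Omega)\big)$ and then rescale to obtain the stated factor $\|f\|_{\mathcal{H}^{\frac{d+2k+1}{2}}(\Omega)}$.

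First I will bound the coefficients. Since $T_N(f)\in L_N^k(C_2)$, we have $N\sum_j|a^{(j)}|^2\le C_2$, so $\|S_N(f)\|_\infty\le\sqrt{C_2/N}\le\sqrt{C_2}$, and we may take $M=\sqrt{C_2}$, a constant. Next I will bound the Lipschitz constant. Because $(\bw^{(j)},b^{(j)})\in\SS^d$ and $\Omega$ is bounded, each $\sigma_k(\bw^{(j)}\cdot\bx+b^{(j)})$ is bounded uniformly in $L^2(\Omega)$ by a constant depending only on $d,k,\Omega$. The triangle inequality then gives
\[
\|\tilde S_N(\Theta)-\tilde S_N(\Theta')\|_{L^2(\Omega)}\lesssim N\|\Theta-\Theta'\|_\infty,
\]
that is, $\mathrm{Lip}(M,\tilde S_N)_{L^2(\Omega)}\lesssim N$. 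With $\alpha=\frac12+\frac{2k+1}{2d}$ from Theorem~\ref{ex:lsnn_L2}, Proposition~\ref{prop_para_to_bits} allows the choice $\tau=c_6+\lceil(1+\alpha)\log_2N\rceil$. Setting $T_N(f)^{[\tau]}=\tilde S_N(S_N(f)^{[\tau]})$ then yields an error $\lesssim N^{-\alpha}$. Finally, $\Nb=\tau N\simeq N\log_2 N$ gives $N\gtrsim \Nb/\log_2\Nb$, hence $N^{-\alpha}\lesssim(\Nb/\log_2\Nb)^{-\alpha}$.

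For the lower bound, any bit scheme has the form $\phi\circ\psi$ with $\psi$ mapping into $\{0,1\}^{\Nb}$. By Proposition~\ref{prop:entropy_equiv}, the worst-case error is at least $\epsilon_{\Nb}\big(\BB(\mathcal{H}^{r}(\Omega))\big)_{L^2(\Omega)}$ with $r=\frac{d+2k+1}{2}$. Theorem~\ref{thm:entropy_sob} gives this as $\simeq\Nb^{-r/d}=\Nb^{-\frac12-\frac{2k+1}{2d}}$, which is exactly the claimed exponent. One caveat is that Theorem~\ref{thm:entropy_sob} is stated for integer $r$. When $d+2k+1$ is odd, $r$ is a half-integer, and I would invoke the standard fractional-order version of the Birman--Solomyak estimate. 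Alternatively, I would interpolate between neighboring integer orders.

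The main obstacle is the coefficient bound. The paper's definition of $L_N^k(M)$ is what makes $M$ independent of $N$, and I would verify that the minimizer $T_N(f)$ genuinely lies in this constrained class. If only the unconstrained least-squares solution were available, I would instead need a stability (Riesz-type) estimate for the nearly collinear ReLU$^k$ features, which would give a polynomial bound $M\lesssim N^{\beta}$. Such a bound would only change the constant in front of $\log_2 N$ in $\tau$, so the rate $(\Nb/\log_2\Nb)^{-\alpha}$ would be unaffected.
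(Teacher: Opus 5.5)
Your proposal is correct and follows the paper's proof essentially verbatim. It uses the same $S_N,\tilde S_N$ on the outer coefficients only, the bound $M=C_2^{1/2}$ read off from the constraint defining $L_N^k(C_2)$, $\mathrm{Lip}(M,\tilde S_N)_{L^2(\Omega)}\lesssim N$, the choice $\tau=c_6+\lceil(\frac32+\frac{2k+1}{2d})\log_2N\rceil$ in Proposition~\ref{prop_para_to_bits}, and the lower bound from Proposition~\ref{prop:entropy_equiv} combined with Theorem~\ref{thm:entropy_sob}. Your caveat that $r=\frac{d+2k+1}{2}$ may be a half-integer, so that Theorem~\ref{thm:entropy_sob} must be used in its fractional-order form, is a point the paper passes over. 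However, your rescaling ``by homogeneity'' does not apply to the map $T_N(f)^{[\tau]}$ itself, because neither the projection onto the fixed set $L_N^k(C_2)$ nor the quantization at fixed resolution $M2^{-\tau+2}$ is homogeneous in $f$. So, exactly as in the paper, what your argument actually proves is the bound uniformly over the unit ball, with constant $1$ in place of the factor $\|f\|_{\mathcal{H}^{\frac{d+2k+1}{2}}(\Omega)}$.
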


\begin{proof}
In Theorem \ref{ex:lsnn_L2}, for functions in $\mathcal{H}^{\frac{d+2k+1}{2}}(\Omega)$, the approximation rate in terms of the number of free coefficients is
$$\|T_N(f)-f\|_{L^2(\Omega)}=\mathcal{O}\lrt{N^{-\frac{1}{2}-\frac{2k+1}{2d}}}.$$

Since the inner parameters are fixed, we write
$$T_N(f)(\bx)=\sul_{j=1}^Na^{(j)}(f)\sigma_k(\bw^{(j)}\cdot\bx+b^{(j)}).$$
Denote
$$S_N:f\mapsto \{a^{(j)}(f)\}_{j=1}^N,\qquad \widetilde S_N:\{a^{(j)}\}_{j=1}^N\mapsto \sul_{j=1}^Na^{(j)}\sigma_k(\bw^{(j)}\cdot\bx+b^{(j)}).$$
By the coefficient constraint in $L_N^k(C_2)$, we have
$$\|S_N(f)\|_{\infty}\leq \|S_N(f)\|_2\leq C_2^{\frac{1}{2}}N^{-\frac{1}{2}}.$$
Then by taking $M=C_2^{\frac{1}{2}}$, we can verify
\begin{equation*}
    \mathrm{Lip}(M,\tilde S_N)_{L^2(\Omega)}\lesssim N.
\end{equation*}
Therefore, by Proposition \ref{prop_para_to_bits}, taking
$\tau=c_6+\lt\lceil\lrt{\frac{3}{2}+\frac{2k+1}{2d}}\log_2N\rt\rceil$
for some $c_6$ independent of $N$, and setting $T_N(f)^{[\tau]}=\widetilde S_N(S_N(f)^{[\tau]})$, we obtain
$$\lt\|f- T_N(f)^{[\tau]}\rt\|_{L^2(\Omega)}\lesssim\|f\|_{\mathcal{H}^{\frac{d+2k+1}{2}}(\Omega)}\lrt{\frac{\Nb}{\log_2\Nb}}^{-\frac{1}{2}-\frac{2k+1}{2d}},$$
where $\Nb=\tau N$.

For the lower bound, we apply Proposition \ref{prop:entropy_equiv} and Theorem \ref{thm:entropy_sob} and conclude
\begin{equation}
    \inf\limits_{\psi,\phi}\sup\limits_{f\in\BB\big(\mathcal{H}^{\frac{d+2k+1}{2}}(\Omega)\big)}\lt\|\phi\circ\psi(f)-f\rt\|_{L^2(\Omega)}\gtrsim\Nb^{-\frac{1}{2}-\frac{2k+1}{2d}},
\end{equation}
where the infimum is taken over all $\psi:\BB(\mathcal{H}^{\frac{d+2k+1}{2}}(\Omega))\to\{0,1\}^{\Nb}$ and $\phi:\{0,1\}^{\Nb}\to L^2(\Omega)$.
\end{proof}

\subsection{Shallow neural networks for spectral Barron spaces}\label{ex:barron_bits}
The rate of approximation of spectral Barron spaces has been studied in \cite{barron1991complexity,klusowski2018approximation} and then generalized in \cite{ma2022uniform}.

\begin{theorem}[Shallow neural networks on spectral Barron spaces: parameters \cite{ma2022uniform}]\label{ex:barron}
    \it Let $d,k,n,\Omega$ be as above, $s=(d+1)(k+1)$, $\mathcal{V}^s(\Omega)$ be as in Definition \ref{def:spectralbarron}. There exists a constant $C_3$ such that, with $M=C_3n^{\frac{k+1}{d+1}}\log n$, the map
$$T_n(f):=\arg\min\limits_{g\in\Sigma_n^k(M)}\|f-g\|_{L^2(\Omega)},\quad f\in\BB(\mathcal{V}^s(\Omega))$$
satisfies
\begin{equation}\label{eqn:barron_appr}
    \sup\limits_{f\in\BB(\mathcal{V}^s(\Omega))}\|T_n(f)-f\|_{L^2(\Omega)}\lesssim N^{-(k+1)}\log N,
\end{equation}
where $N=(d+2)n$ is the number of parameters.
\end{theorem}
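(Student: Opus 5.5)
The plan is to go through an integral representation of $f$ by ReLU$^k$ ridge functions, and then discretize that representation with a high-order rule that uses the large smoothness $s=(d+1)(k+1)$. The coefficient bound $M=C_3n^{\frac{k+1}{d+1}}\log n$ has to be tracked separately, so that the discrete network lies in $\Sigma_n^k(M)$. Since $T_n(f)$ is the $L^2$-best approximation in $\Sigma_n^k(M)$, it is enough to exhibit, for each $f\in\BB(\mathcal{V}^s(\Omega))$, one $g\in\Sigma_n^k(M)$ with $\|f-g\|_{L^2(\Omega)}\lesssim n^{-(k+1)}\log n$. The bound in terms of $N=(d+2)n$ then follows because $d$ is fixed.

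\textbf{Step 1 (representation).} Choose an extension $f_e$ that nearly attains the infimum in Definition \ref{def:spectralbarron}, and write $f(\bx)=\int_{\RR^d}\hat f_e(\omega)e^{i\omega\cdot\bx}\,d\omega$ on $\Omega$ (bounded, say $\subset[-1,1]^d$). For each frequency $\omega=|\omega|\hat\omega$, expand $t\mapsto e^{i|\omega|t}$ on $[-1,1]$ by Taylor's formula with integral remainder of order $k+1$:
\[
e^{i|\omega|t}=\sum_{j\le k}\frac{(i|\omega|t)^j}{j!}+\frac{(i|\omega|)^{k+1}}{k!}\int_{-1}^{1}\sigma_k(\pm(t-b))\,e^{i|\omega|b}\,db .
\]
The polynomial part is itself a finite combination of ReLU$^k$ ridge functions, since $t^j$ can be written through $\sigma_k(t+c)$ for a few shifts. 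Substituting $t=\hat\omega\cdot\bx$ gives $f(\bx)=\int_{\SS^{d}}\sigma_k(\bw\cdot\bx+b)\,\rho(\bw,b)\,d(\bw,b)$, after normalizing $(\hat\omega,-b)$ onto $\SS^d$ and absorbing the normalization into $\rho$. The density $\rho$ collects $\hat f_e(\omega)|\omega|^{k+1}e^{i|\omega|b}$ along rays.

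\textbf{Step 2 (frequency splitting and discretization).} Split into $|\omega|\le R$ and $|\omega|>R$. The tail contributes at most $\int_{|\omega|>R}|\hat f_e|(1+|\omega|)^{k+1}\lesssim R^{k+1-s}$. For the low-frequency part, the density in the bias variable $b$ oscillates at scale $R^{-1}$. I would discretize it by a spline quasi-interpolant, or a Gauss-type rule of order $k+1$, in $b$, and a product rule of high order on the direction sphere $\SS^{d-1}$. The $(d+1)$-dimensional parameter set is thereby covered by $n$ nodes. The $b$-rule gives error $\lesssim(R\,h_b)^{k+1}$, and the direction rule uses the smoothness that $s$ transfers to the angular dependence. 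I would then pick $R\simeq n^{\frac{1}{d+1}}$ per direction, with the node budget split so that all terms balance at $n^{-(k+1)}$. Here $s=(d+1)(k+1)$ is exactly the exponent that makes the tail $R^{k+1-s}$ match, once the $\log$ from the dyadic splitting of frequencies is accounted for.

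\textbf{Step 3 (coefficient control).} The outer coefficients are quadrature weights times $\rho$. Their $\ell^1$ sum is bounded by $\int_{|\omega|\le R}|\hat f_e||\omega|^{k+1}\lesssim R^{k+1}\simeq n^{\frac{k+1}{d+1}}$, with one $\log n$ coming from summing over $\lesssim\log n$ dyadic frequency shells. This places $g$ in $\Sigma_n^k(M)$ with $M=C_3n^{\frac{k+1}{d+1}}\log n$.

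I expect Step 2 to be the main obstacle. The rate $n^{-(k+1)}$ is far faster than Monte Carlo sampling can give, so a deterministic high-order rule is needed. One also has to verify that the $\mathcal{V}^s$ bound really gives enough regularity of $\rho$ in the direction variables, uniformly in $R$. If the direct quadrature analysis becomes too cumbersome, my fallback is to follow the approach of \cite{ma2022uniform}: first approximate each ridge profile by a univariate ReLU$^k$ spline, then sum over a direction net weighted by $|\hat f_e|$.
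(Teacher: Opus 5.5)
\paragraph{Review.}
The paper gives no proof of this statement; it is quoted from \cite{ma2022uniform}, so your sketch has to stand on its own. The reduction to exhibiting one $g\in\Sigma_n^k(M)$, the Peano-kernel representation of Step~1 and the coefficient bound of Step~3 are fine. Step~3 is even pessimistic: since $s\ge k+1$, $\int_{|\omega|\le R}|\hat f_e||\omega|^{k+1}\,d\omega\lesssim\|f\|_{\mathcal{V}^s(\Omega)}$, so the outer weights have $\ell^1$ norm $O(1)\le M$. The tail bound in Step~2 is misused. Discarding $|\omega|>R$ costs $\int_{|\omega|>R}|\hat f_e|\,d\omega\le R^{-s}$ in $L^\infty(\Omega)$, and for $R\simeq n^{1/(d+1)}$ this is exactly $n^{-(k+1)}$. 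The weight $(1+|\omega|)^{k+1}$ belongs in the coefficient bound, not the error. Your bound $R^{k+1-s}=n^{-(k+1)d/(d+1)}$ does not match $n^{-(k+1)}$, contrary to what you claim.

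\paragraph{The gap in Step~2.}
The discretization in Step~2 cannot deliver $n^{-(k+1)}$, for two reasons.

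First, $\|f\|_{\mathcal{V}^s(\Omega)}$ is a weighted $L^1$ condition on $\hat f_e$. It gives decay, not regularity, so $\rho$ has no angular smoothness to feed a high-order rule on $\SS^{d-1}$. Replacing $f_e$ by $\chi f_e$ with a smooth cutoff $\chi$ does make $\hat f_e$ smooth in $\omega$, but in angle only at scale $|\omega|^{-1}$.

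Second, and decisively, the node budget cannot be balanced. A degree-$k$ spline rule in $b$ saturates at order $k+1$, with error of order $h_b^{k+1}\int|\hat f_e||\omega|^{k+1}\,d\omega$, so you need $h_b\lesssim n^{-1}$. Your own estimate $(Rh_b)^{k+1}$ would even force $h_b\lesssim (Rn)^{-1}$, i.e.\ more than $n$ nodes on a single direction. A product rule with $m$ directions and common spacing $h_b\simeq m/n$ yields $(m/n)^{k+1}$, which is $n^{-(k+1)}$ only for $m=O(1)$. So the balancing you assert has no mechanism behind it.

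\paragraph{The missing idea.}
Knots must be allocated by frequency, not uniformly. One way:
\begin{itemize}
\item Multiply $f_e$ by a smooth cutoff equal to $1$ on $\Omega$ and periodize. On $\Omega$ this gives $f=\sum_{\kappa\in\Lambda}c_\kappa e^{i\kappa\cdot\bx}$ over a lattice $\Lambda$, with $\sum_\kappa|c_\kappa|(1+|\kappa|)^s\lesssim\|f\|_{\mathcal{V}^s(\Omega)}$ by Peetre's inequality. No angular discretization is then needed.
\item Each mode is a single ridge function. Approximate it by its own univariate ReLU$^k$ spline with $p_\kappa\simeq n_0(1+|\kappa|)^{-d}$ knots, at cost $|c_\kappa|(|\kappa|/p_\kappa)^{k+1}$.
\item Summing over modes gives error $n_0^{-(k+1)}\sum_\kappa|c_\kappa|(1+|\kappa|)^{(d+1)(k+1)}$. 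This is where $s=(d+1)(k+1)$ really enters.
\item The modes with $|\kappa|\le R$ use $\lesssim n_0\log R+R^d$ neurons. Taking $R=n^{1/(d+1)}$ gives $n^{-(k+1)}$ up to logarithmic factors; matching the single $\log N$ of the statement needs more care.
\end{itemize}

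\paragraph{Your fallback.}
As you describe it, the fallback has the same defect. Snapping $\hat\omega$ to a direction net is only first order in the angle, and it leaves the knot allocation per direction unspecified.
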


\begin{corollary}[Shallow neural networks on spectral Barron spaces: bits]
    Let $\Nb\in\NN$, the operator $T_n$ in Theorem \ref{ex:barron} induces a binary bits approximation for functions in $\BB(\mathcal{V}^{s}(\Omega))$
    \begin{equation}
        \{0,1\}^{\Nb}\mapsto T_n(f)^{[\tau]}
    \end{equation}
    such that
    $$\sup\limits_{f\in\BB(\mathcal{V}^s(\Omega))}\lt\|f- T_n(f)^{[\tau]}\rt\|_{L^2(\Omega)}\lesssim\lrt{\frac{\Nb}{\log_2\Nb}}^{-(k+1)}\log\Nb,$$
    and the corresponding lower bound is
    $$\inf\limits_{\psi,\phi}\sup\limits_{f\in\BB(\mathcal{V}^s(\Omega))}\lt\|\phi\circ\psi(f)-f\rt\|_{L^2(\Omega)}\gtrsim\Nb^{-(k+1)\frac{d+1}{d}-\frac{1}{2}},$$
    where the infimum is taken over all $\psi:\BB(\mathcal{V}^s(\Omega))\to\{0,1\}^{\Nb}$ and $\phi:\{0,1\}^{\Nb}\to L^2(\Omega)$.
\end{corollary}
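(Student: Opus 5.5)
We follow the same two-part template used for the previous corollaries: an upper bound obtained by quantizing the parameters of $T_n(f)$ through Proposition \ref{prop_para_to_bits}, and a lower bound obtained from Proposition \ref{prop:entropy_equiv} combined with the entropy estimate of Theorem \ref{thm:entropy_variation}.

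\textbf{Upper bound.} For $f\in\BB(\mathcal{V}^s(\Omega))$ we write
$$T_n(f)=\sum_{j=1}^n a^{(j)}(f)\,\sigma_k\big(\bw^{(j)}(f)\cdot\bx+b^{(j)}(f)\big)\in\Sigma_n^k(M),\qquad M=C_3n^{\frac{k+1}{d+1}}\log n.$$
We define $S_N$ and $\tilde S_N$ exactly as in the Barron-space corollary.

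The parameters are polynomially bounded. Indeed, $(\bw^{(j)},b^{(j)})\in\SS^d$ gives entries bounded by $1$, and $|a^{(j)}|\le M\lesssim N^{\frac{k+1}{d+1}}\log N$.

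Next we bound the Lipschitz constant of $\tilde S_N$ on the box of radius $M$. Since $\Omega$ is bounded, on $\Omega$ the map $\sigma_k(\bw\cdot\bx+b)$ is bounded and Lipschitz in $(\bw,b)$ with constants depending only on $d,k,\Omega$. Perturbing $a^{(j)}$ therefore costs $O(1)$ per neuron, and perturbing $(\bw^{(j)},b^{(j)})$ costs $O(|a^{(j)}|)$. Summing over the $n$ neurons gives
$$\mathrm{Lip}(M,\tilde S_N)_{L^2(\Omega)}\lesssim n+\sum_j|a^{(j)}|\lesssim N+M,$$
which is polynomial in $N$.

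Two points need care here. Quantizing the inner weights moves them slightly off $\SS^d$, and the bound on $\sum_j|a^{(j)}|$ must still hold after quantization. Neither matters: the Lipschitz estimate is only needed on the box $[-M,M]^N$, not on the sphere, and the error bound only uses the triangle inequality. We therefore use $\Theta_f^{[\tau]}\in[-M,M]^N$ directly, as Proposition \ref{prop_para_to_bits} permits.

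With these bounds, Proposition \ref{prop_para_to_bits} with $\alpha=k+1$ (the $\log N$ factor is absorbed) allows
$$\tau=c_7+\big\lceil C\log_2 N\big\rceil$$
for a constant $C$ depending on $k,d$. This yields
$$\|f-T_n(f)^{[\tau]}\|_{L^2(\Omega)}\lesssim N^{-(k+1)}\log N .$$
Since $\Nb=\tau N\simeq N\log_2 N$, we have $N\simeq \Nb/\log_2\Nb$ and $\log N\lesssim\log\Nb$, which converts the bound into $(\Nb/\log_2\Nb)^{-(k+1)}\log\Nb$.

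\textbf{Lower bound.} Any pair $\psi:\BB(\mathcal{V}^s(\Omega))\to\{0,1\}^{\Nb}$, $\phi:\{0,1\}^{\Nb}\to L^2(\Omega)$ is admissible in Proposition \ref{prop:entropy_equiv} with $m=\Nb$. Hence the worst-case error is at least $\epsilon_{\Nb}(\BB(\mathcal{V}^s(\Omega)))_{L^2(\Omega)}$. By the second line of Theorem \ref{thm:entropy_variation} this is $\gtrsim \Nb^{-\frac12-\frac{s}{d}}$.

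Substituting $s=(d+1)(k+1)$ gives the exponent
$$\frac12+\frac{(d+1)(k+1)}{d}=(k+1)\frac{d+1}{d}+\frac12,$$
which is exactly the stated lower bound.

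\textbf{Main obstacle.} The only nonroutine step is the Lipschitz and parameter-size bookkeeping in the upper bound. The outer coefficients grow like $n^{\frac{k+1}{d+1}}\log n$, so $M$ is not $O(1)$. One must check that this growth only affects $\tau$ through a logarithm, so that the loss is merely the $\log_2\Nb$ factor. Once $M$ and $\mathrm{Lip}$ are shown to be polynomial in $N$, the rest follows the earlier corollaries verbatim.
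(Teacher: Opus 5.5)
Your proposal is correct and follows the paper's proof. The upper bound quantizes the parameters of $T_n(f)$ through Proposition \ref{prop_para_to_bits} with the polynomially growing $M=C_3n^{\frac{k+1}{d+1}}\log n$; the paper simply invokes ``the same argument'' as in the Barron case, with $\tau=c_7+\lceil(\frac{k+1}{d+1}+k+3)\log_2N\rceil$. The lower bound comes from Proposition \ref{prop:entropy_equiv} and the second line of Theorem \ref{thm:entropy_variation} at $s=(d+1)(k+1)$.

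One small bookkeeping fix concerns the Lipschitz constant. On the full box $[-M,M]^N$ the inner weights are not $O(1)$, so the box Lipschitz constant is really $\lesssim nM^k$. Your bound $N+M$ is instead the Lipschitz constant along the segment from $\Theta_f$ to $\Theta_f^{[\tau]}$. That bound is valid because the truncation \eqref{eqn:bit_repre} never increases $|y|$, so the quantized inner weights stay in $[-1,1]^{d+1}$ and $\sum_j|(a^{(j)})^{[\tau]}|\le M$. Either bound is polynomial in $N$, so only the constant in $\tau$ changes.
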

\begin{proof}

For $f\in\BB(\mathcal{V}^s(\Omega))$, let
	$$T_n(f)(\bx)=\sul_{j=1}^na^{(j)}(f)\sigma_k\left(\bw^{(j)}(f)\cdot\bx+b^{(j)}(f)\right)$$
be the function in Theorem \ref{ex:barron} with $\|T_n(f)-f\|_{L^2(\Omega)}=\mathcal{O}(N^{-(k+1)}\log N)$. 
A same argument of Section \ref{ex:snn_L2_bits} yields the upper bound
$$\lt\|f- T_n(f)^{[\tau]}\rt\|_{L^2(\Omega)}\lesssim\lrt{\frac{\Nb}{\log_2\Nb}}^{-(k+1)}\log\Nb,$$
where $\Nb=\tau N$ and $\tau=c_7+\lceil(\frac{k+1}{d+1}+k+3)\log_2N\rceil$ for some $c_7$ independent of $N$.

For the lower bound, by Proposition \ref{prop:entropy_equiv} and Theorem \ref{thm:entropy_variation},
\begin{equation}
    \begin{split}
        \inf\limits_{\psi,\phi}\sup\limits_{f\in\BB(\mathcal{V}^s(\Omega))}\lt\|\phi\circ\psi(f)-f\rt\|_{L^2(\Omega)}\gtrsim\Nb^{-(k+1)\frac{d+1}{d}-\frac{1}{2}}
    \end{split}
\end{equation}
where the infimum is taken over all $\psi:\BB(\mathcal{V}^s(\Omega))\to\{0,1\}^{\Nb}$ and $\phi:\{0,1\}^{\Nb}\to L^2(\Omega)$.

\end{proof}

\subsection{Deep neural networks for Sobolev spaces}\label{ex:fnn_bits}
Deep ReLU networks refer to neural networks with multiple layers of neurons, where the activation function in each layer is typically the ReLU  function. The key distinction between shallow and deep networks is that deep networks involve several hidden layers, which allows them to learn more complex patterns in the data \cite{bengio2009learning,eldan2016power,poggio2017and}.

\begin{definition}[Fully-connected deep ReLU$^k$ networks]\label{def:dnn}
Denote by $\Sigma_{W,n}^k$ the class of all functions on $\Omega$ that can be constructed by the fully-connected ReLU$^k$ neural network of width $W$, depth $n$. That is, $h\in\Sigma_{W,n}^k$ if and only if there exists weights and biases
\begin{equation}\label{eqn:para_deep_def}
    \begin{split}
		&\bw^{(1,1)},\dots,\bw^{(1,W)}\in\RR^{d},\quad \bw^{(n+1)}\in\RR^W,\quad\bw^{(\ell,1)},\dots,\bw^{(\ell,W)}\in\RR^{W},\qquad \ell=2,\dots,n,\\		&b^{(\ell,1)},\dots,b^{(\ell,W)}\in\RR,\qquad \ell=1,\dots,n,
	\end{split}
\end{equation}
such that
		
$$h(\bx):=h^{(n+1)}(\bx)=\bw^{(n+1)}\cdot h^{(n)}(\bx),\qquad \bx\in\Omega,$$
where $h^{(n)}$ is a vector function given inductively by
\begin{equation}\label{eqn:deep_fully_layers}
	\begin{split}
		&h^{(0)}(\bx)=\bx,\\
		&\lt(h^{(\ell)}(\bx)\rt)_j=\sigma_k\lt(\bw^{(\ell,j)}\cdot h^{(\ell-1)}(\bx)+b^{(\ell,j)}\rt),\qquad j=1,\dots,W,\ \ell=1,\dots,n.
	\end{split}
\end{equation}
If we further assume all each $|\bm{b}^{(\ell,j)}|,\|\bw^{(\ell,j)}\|_\infty$, and $\|\bw^{(n+1)}\|_\infty$ is bounded by $M$, then the class is denoted as $\Sigma_{W,n}^k(M)$. For simplicity, we denote
$$\Sigma_{W,n}=\Sigma_{W,n}^1,\qquad\Sigma_{W,n}(M)=\Sigma_{W,n}^1(M).$$
\end{definition}

In terms of the number of parameters $N$, it is known that deep neural networks can approximate $f\in \mathcal{H}^r(\Omega)$ with rate $\mathcal{O}(N^{-\frac{r}{d}})$ up to logarithmic factors \cite{he2023expressivity}, which matches the classical approximation rate.

However, deep ReLU networks exhibit a superconvergence phenomenon. Yarotsky \cite{yarotsky2018optimal} showed that functions can be approximated with rate $\mathcal{O}(N^{-\frac{2r}{d}})$ using networks of constant width and depth $n$, improving upon the classical rate $\mathcal{O}(N^{-\frac{r}{d}})$. This result was further developed in \cite{shen2019deep, shen2022optimal}, where the rate is expressed as $\mathcal{O}((n^2W^2)^{-\frac{r}{d}})$, and extended to Besov spaces in \cite{siegel2023optimal}.

This apparent acceleration is notable, given that both classical methods and neural networks approximate functions from comparable smoothness classes. The following upper bound is taken from \cite{shen2022optimal}.

\begin{theorem}[Deep ReLU networks: parameters]\label{ex:fnn}
\it Let $W=d3^{d+3}$, $n\geq29+d3^{d+3}$, $M=2^{C_4n}$ for some constant $C_4=C_4(d)$, and $\Omega=[0,1]^d$, then the function
\begin{equation*}
    T_n(f)=\arg\min\limits_{g\in\Sigma_{W,n}(M)}\|f-g\|_{L^2(\Omega)}
\end{equation*}
satisfies
\begin{equation}
    \sup\limits_{f\in\BB(\mathcal{H}^r(\Omega))}\lt\|T_n(f)-f\rt\|_{L^2(\Omega)}\lesssim N^{-\frac{2r}{d}},\quad f\in\BB(\mathcal{H}^r(\Omega)),
\end{equation}
where $N$ is the total number of parameters.
\end{theorem}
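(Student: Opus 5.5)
The statement is the superconvergence bound $\mathcal{O}(N^{-2r/d})$ for constant-width, depth-$n$ ReLU networks with weights bounded by $M=2^{C_4n}$. I would follow the bit-extraction construction of Yarotsky and Shen--Yang--Zhang, checking at each step that parameter magnitudes stay below $2^{C_4 n}$.

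\textbf{Step 1: piecewise-polynomial reduction.} Fix $K\simeq N^{2/d}$, that is, $K^d\simeq N^2$, and partition $\Omega=[0,1]^d$ into $K^d$ subcubes $Q_\beta$ of side $1/K$. Local Taylor or averaged-Taylor (Bramble--Hilbert) polynomials of degree $<r$ give a piecewise polynomial $p$ with
\[
\|f-p\|_{L^2(\Omega)}\lesssim K^{-r}\simeq N^{-2r/d}.
\]
Rescale the local Taylor coefficients of $f$ on each cube to lie in $[-1,1]$. To make this uniform in the $L^2$ setting, use a Sobolev embedding on a slightly smoothed version of $f$, or use averaged Taylor polynomials with bounded coefficients.

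\textbf{Step 2: localization.} Build a constant-width ReLU subnetwork $\Psi$ of depth $\mathcal{O}(\log K)$ that maps $x$ to the index $\beta(x)$ of its cube. ReLU networks cannot represent the floor function exactly, so $\Psi$ is correct only away from a ``trifling region'' of measure $\lesssim \delta K$, where $\delta$ is the width of the transition ramps. Since the error is measured in $L^2$, choose $\delta$ exponentially small in $n$, which is allowed because $M=2^{C_4n}$. The trifling region then contributes a negligible error, given a uniform bound on the network output enforced by a clipping layer.

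\textbf{Step 3: point-fitting, the main obstacle.} The network must output, for each index $\beta$, the roughly $r^d$ Taylor coefficients quantized to precision $K^{-r}$, that is, $\mathcal{O}(r\log K)$ bits each. That is $K^d\simeq N^2$ pieces of data with only $\mathcal{O}(N)$ parameters. I would use the bit-extraction lemma: group the indices into $\simeq N$ blocks of $\simeq N$ cubes. For each block, store all the bits in one real parameter of $\mathcal{O}(N\log N)$ binary digits. A depth-$\mathcal{O}(N)$ constant-width subnetwork then extracts the bit string at the position given by the local index, using the sawtooth construction with weights at most $2^{\mathcal{O}(n)}$.

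This is the step where depth is traded for quadratic savings, and also where the bound $M$ matters. Every operation must be performed with weights $\le 2^{C_4n}$ and width independent of $n$.

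\textbf{Step 4: assembly.} Reconstruct the local polynomial by approximating the products $x^\alpha$ with Yarotsky's squaring networks to accuracy $K^{-r}$, which costs depth $\mathcal{O}(\log K)$. Add the errors from the three sources: Taylor error, quantization error and multiplication error, each $\lesssim K^{-r}$, plus the negligible trifling-region error. This gives
\[
\|f-g\|_{L^2(\Omega)}\lesssim K^{-r}\simeq N^{-2r/d}
\]
for some $g\in\Sigma_{W,n}(M)$. The best approximation $T_n(f)$ can only do better, which proves the stated bound.
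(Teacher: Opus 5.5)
Your outline follows the same bit-extraction strategy as the literature the paper cites; the paper gives no argument of its own and imports the bound from \cite{shen2022optimal}. Steps 1, 2 and 4 are fine in outline. Step 3, which you rightly call the crux, does not close, and it fails by exactly the logarithm the theorem claims to avoid.

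You store $\mathcal{O}(\log K)=\mathcal{O}(\log N)$ bits for each of the $K^d\simeq N^2$ cells. That is about $N^2\log N$ bits, packed into $\simeq N$ numbers of $\simeq N\log N$ binary digits each. The sawtooth bit extraction you invoke strips off only $\mathcal{O}(1)$ digits per layer at constant width. So reading a field at an arbitrary position of such a number costs depth $\simeq N\log N$, not $\mathcal{O}(N)$, and the network you build has $\simeq N\log N$ parameters, not $\mathcal{O}(N)$. If you rebalance honestly, using $\simeq N$ stored numbers of $\simeq N$ digits so that $K^d\log K\simeq N^2$, the construction gives $\|f-g\|_{L^2(\Omega)}\lesssim N^{-\frac{2r}{d}}(\log N)^{\frac{r}{d}}$. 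That is a logarithmically lossy rate, not the stated $N^{-\frac{2r}{d}}$.

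What is missing is an encoding that spends only $\mathcal{O}(1)$ bits per fine cell on average. This is possible in principle: $\epsilon_m(\BB(\mathcal{H}^r(\Omega)))_{L^2(\Omega)}\simeq m^{-\frac{r}{d}}$ (Theorem~\ref{thm:entropy_sob}) says $K^d$ bits suffice for accuracy $K^{-r}$. Quantizing every local Taylor coefficient independently to precision $K^{-r}$ wastes a factor $\log K$.

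In the H\"older case $r\le 1$ treated in \cite{yarotsky2018optimal,shen2022optimal}, the device is to store a coarse approximation at full precision. On the fine cells one stores only increments in $\{-1,0,1\}$ between consecutive cells. For $r>1$ you need a predictive or multiscale analogue, in which each cell's local polynomial is stored only as a correction to the Taylor extrapolation from a neighbouring cell or from its parent at the next coarser scale. With $K=2^L$, level-$\ell$ corrections then need $\mathcal{O}(L-\ell)$ bits each, and $\sum_{\ell\le L}2^{\ell d}(L-\ell)\lesssim 2^{Ld}=K^d$. Log-free results for Sobolev classes such as \cite{siegel2023optimal} must rest on an encoding with this property.

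In $L^2$ such an encoding must also handle a point your Step 1 glosses over. For $r\le d/2$ the ball $\BB(\mathcal{H}^r(\Omega))$ contains unbounded functions. The rescaled local coefficients and corrections are then only square-summable over the cells, not uniformly bounded. Neither a Sobolev embedding nor averaged Taylor polynomials gives the uniform range $[-1,1]$ you assume. With $\mathcal{O}(\log K)$ bits per coefficient this is harmless, because the range is at most $K^{d/2}$. An $\mathcal{O}(1)$-bits-per-cell scheme, however, has to allocate bits according to the local size of $f$.
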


Unlike the previous examples, Theorem~\ref{ex:fnn} shows that the class $\Sigma_{W,n}$ achieves an improved approximation rate in terms of the number of parameters $N$. However, as indicated by the following lemma, the associated reconstruction operator exhibits exponential sensitivity in $N$, which allows $\Sigma_{W,n}(M)$ a substantially larger metric entropy. Consequently, the precision required to represent the parameters grows rapidly, and this additional cost offsets the seeming advantage in Theorem~\ref{ex:fnn} when approximation complexity is measured in terms of bits.
\begin{lemma}\label{lem_dnn_oper_norm}
    \it Let
    \begin{equation*}
        \tilde S_N:\left\{\lt(\bw^{(\ell,j)}\rt)_{\substack{1\leq\ell\leq n+1\\1\leq j\leq W}},\lt(\bb^{(\ell)}\rt)_{\ell=1}^{n}\right\}\mapsto h^{(n+1)}
    \end{equation*}
    be the deep ReLU network as in Definition \ref{def:dnn}. Then
    \begin{equation*}
        \mathrm{Lip}(M,\tilde S_N)_{L^2(\Omega)}\lesssim (n+1)(W+1)^{n+1}M^{n+1}.
    \end{equation*}
\end{lemma}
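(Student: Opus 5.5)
We bound the sensitivity of the output of a ReLU network to a perturbation of its parameters by telescoping over the layers. Fix two parameter sets $\Theta,\Theta'\in[-M,M]^N$ with $\delta:=\|\Theta-\Theta'\|_\infty$. Let $h^{(\ell)}$ and $h'^{(\ell)}$ be the corresponding layer outputs from \eqref{eqn:deep_fully_layers}, with $k=1$. Since $\Omega=[0,1]^d$ is bounded, an $L^\infty(\Omega)$ bound on $h^{(n+1)}-h'^{(n+1)}$ gives the $L^2(\Omega)$ bound up to the constant $|\Omega|^{1/2}$. So we work pointwise in $\bx\in\Omega$ with $\|\cdot\|_\infty$ on vectors.

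First, we prove a forward growth bound. The ReLU satisfies $|\sigma_1(t)|\le|t|$ and is $1$-Lipschitz. For each layer,
\[
\|h^{(\ell)}(\bx)\|_\infty\le W M\|h^{(\ell-1)}(\bx)\|_\infty+M\le (W+1)M\max\{1,\|h^{(\ell-1)}(\bx)\|_\infty\}.
\]
In the first layer the input dimension is $d$, and we absorb it into the constant (or assume $W\ge d$). By induction, assuming $M\ge1$ (otherwise replace $M$ by $\max\{M,1\}$, which only loosens the bound),
\[
\|h^{(\ell)}(\bx)\|_\infty\lesssim ((W+1)M)^{\ell}.
\]
The same bound holds for $h'^{(\ell)}$.

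Second, we derive the error recursion. Set $e_\ell:=\|h^{(\ell)}(\bx)-h'^{(\ell)}(\bx)\|_\infty$, so $e_0=0$. For each neuron $j$, by the Lipschitz property of $\sigma_1$,
\[
|(h^{(\ell)})_j-(h'^{(\ell)})_j|\le |\bw^{(\ell,j)}\cdot h^{(\ell-1)}-\bw'^{(\ell,j)}\cdot h'^{(\ell-1)}|+|b^{(\ell,j)}-b'^{(\ell,j)}|.
\]
We split the inner-product difference as
\[
(\bw^{(\ell,j)}-\bw'^{(\ell,j)})\cdot h^{(\ell-1)}+\bw'^{(\ell,j)}\cdot(h^{(\ell-1)}-h'^{(\ell-1)}).
\]
This gives
\[
e_\ell\le WM\,e_{\ell-1}+\delta\big(W\|h^{(\ell-1)}\|_\infty+1\big)\le (W+1)M\,e_{\ell-1}+C\delta\,((W+1)M)^{\ell-1}(W+1).
\]
The output layer $h^{(n+1)}=\bw^{(n+1)}\cdot h^{(n)}$ satisfies the same recursion with no bias term.

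Third, we unroll the recursion. With $q:=(W+1)M$, the recursion reads $e_\ell\le q e_{\ell-1}+C\delta(W+1)q^{\ell-1}$. Induction gives $e_\ell\le C\ell\,\delta (W+1) q^{\ell-1}$: each of the $\ell$ layers contributes one term of the form $(W+1)q^{\ell-1}\delta$, the perturbation at layer $i$ being amplified by $q^{\ell-i}$ on top of a forward size of $q^{i-1}$. At $\ell=n+1$ this yields
\[
e_{n+1}\lesssim (n+1)(W+1)^{n+1}M^{n}\delta\le (n+1)(W+1)^{n+1}M^{n+1}\delta
\]
for $M\ge1$. Dividing by $\delta$ and taking the supremum over $\Theta,\Theta'$ gives the claimed bound on $\mathrm{Lip}(M,\tilde S_N)_{L^2(\Omega)}$.

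The main care point is the bookkeeping in the unrolling step. The amplification factor from perturbing layer $i$ (namely $q^{n+1-i}$) multiplied by the size of the activations entering that layer (namely $q^{i-1}$) must collapse to a uniform $q^{n}$ per layer. This is what produces the linear factor $(n+1)$ rather than a larger power. It also requires handling the case $M<1$ and the first layer's input dimension $d$; both go into the implicit constant, taking $M\ge1$ without loss of generality.
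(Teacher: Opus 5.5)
Your proof is correct and follows essentially the same route as the paper. Both arguments run a layer-by-layer induction with the forward bound $\|h^{(\ell)}\|_\infty\le((W+1)M)^\ell$ and a perturbation bound linear in $\ell$ times $((W+1)M)^{\ell}\delta$, then pass from $L^\infty$ to $L^2$ on the unit cube; your unrolled bound $C\ell(W+1)((W+1)M)^{\ell-1}\delta$ is even a factor $M$ sharper. One correction: the case $M<1$ cannot be absorbed into the implicit constant, because replacing $M$ by $\max\{M,1\}$ costs the $n$-dependent factor $M^{-(n+1)}$. Indeed, perturbing a single last-hidden-layer bias already gives $\mathrm{Lip}(M,\tilde S_N)_{L^2(\Omega)}\ge M$, which exceeds $(n+1)(W+1)^{n+1}M^{n+1}$ when $M<(W+1)^{-1}$ and $n$ is large. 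So $M\ge1$ is a genuine hypothesis; the paper's proof also uses it tacitly, and it holds for $M=2^{C_4n}$.
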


\begin{proof}
Let the $\ell_\infty$ distance between
$$\left\{\lt(\bw^{(\ell,j)}\rt)_{\substack{1\leq\ell\leq n+1\\1\leq j\leq W}},\lt(\bb^{(\ell)}\rt)_{\ell=1}^{n}\right\}$$
and
$$\left\{\lt(\tilde\bw^{(\ell,j)}\rt)_{\substack{1\leq\ell\leq n+1\\1\leq j\leq W}},\lt(\tilde\bb^{(\ell)}\rt)_{\ell=1}^{n}\right\}$$
be bounded by $\delta$, and let $h^{(n+1)}$ and $\tilde h^{(n+1)}$ be the corresponding deep network functions. We prove by induction that, for $1\leq \ell\leq n$,
    \begin{equation}\label{eqn:hidlayer_bits_err}
		\begin{split}
			&\lt\|h^{(\ell)}\rt\|_\infty=\mal_{1\leq j\leq W}\mal_{\bx\in\Omega}\lt|\lt(h^{(\ell)}(\bx)\rt)_j\rt|\leq(W+1)^\ell M^\ell,\quad \lt\|\tilde h^{(\ell)}\rt\|_\infty\leq(W+1)^\ell M^\ell,\\
			&\lt\|h^{(\ell)}-\tilde h^{(\ell)}\rt\|_\infty=\mal_{1\leq j\leq W}\mal_{\bx\in\Omega}\lt|\lt(h^{(\ell)}(\bx)-\tilde h^{(\ell)}(\bx)\rt)_j\rt|\leq\ell(W+1)^\ell M^{\ell}\delta.
		\end{split}
	\end{equation}
	Assume that \eqref{eqn:hidlayer_bits_err} holds at the $\ell$-th layer. Then, for the $(\ell+1)$-th layer and $1\leq j\leq W$,
	\begin{equation*}
		\begin{split}
			\lt|\lt(h^{(\ell+1)}(\bx)\rt)_j\rt|&=\lt|\sigma\lt(\bw^{(\ell+1,j)}\cdot h^{(\ell)}(\bx)+b^{(\ell+1,j)}\rt)\rt|\\
            &\leq\lt\|\bw^{(\ell+1,j)}\rt\|_1\lt\|h^{(\ell)}\rt\|_\infty+\lt|b^{(\ell+1,j)}\rt|\\
			&\leq MW(W+1)^\ell M^\ell+M\\
            &\leq(W+1)^{\ell+1} M^{\ell+1}.
		\end{split}
	\end{equation*}
Moreover,
	\begin{equation*}
		\begin{split}
			&\lt|\lt(h^{(\ell+1)}(\bx)\rt)_j-\lt(\tilde h^{(\ell+1)}(\bx)\rt)_j\rt|\\
			=&\lt|\sigma\lt(\bw^{(\ell+1,j)}\cdot h^{(\ell)}(\bx)+b^{(\ell+1,j)}\rt)-\sigma\lt(\tilde \bw^{(\ell+1,j)}\cdot \tilde h^{(\ell)}(\bx)+\tilde b^{(\ell+1,j)}\rt)\rt|\\
			\leq&\lt\|\bw^{(\ell+1,j)}-\tilde \bw^{(\ell+1,j)}\rt\|_1\lt\|h^{(\ell)}\rt\|_\infty+\lt\|\bw^{(\ell+1,j)}\rt\|_1\lt\|h^{(\ell)}-\tilde h^{(\ell)}\rt\|_\infty+\lt|b^{(\ell+1,j)}-\tilde b^{(\ell+1,j)}\rt|\\
			\leq&\delta W(W+1)^\ell M^\ell+MW\ell(W+1)^\ell M^{\ell}\delta+\delta\\
			\leq&(\ell+1)(W+1)^{\ell+1} M^{\ell+1}\delta.
		\end{split}
	\end{equation*}
At the last layer, we have
\begin{equation*}
    \lt\|\bw^{(n+1)}\rt\|_\infty\leq M,\qquad
    \lt\|\bw^{(n+1)}-\tilde \bw^{(n+1)}\rt\|_\infty\leq \delta.
\end{equation*}
Together with \eqref{eqn:hidlayer_bits_err}, this gives
	\begin{equation*}
		\begin{split}
			\lt\|h^{(n+1)}-\tilde h^{(n+1)}\rt\|_\infty
            &=\lt\|\bw^{(n+1)}\cdot h^{(n)}-\tilde \bw^{(n+1)}\cdot \tilde h^{(n)}\rt\|_\infty\\
			&\leq\lt\|\bw^{(n+1)}-\tilde \bw^{(n+1)}\rt\|_1\lt\|h^{(n)}\rt\|_\infty+\lt\|\bw^{(n+1)}\rt\|_1\lt\|h^{(n)}-\tilde h^{(n)}\rt\|_\infty\\
			&\leq W(W+1)^nM^n\delta+MWn(W+1)^{n} M^{n}\delta\\
			&\leq(n+1)(W+1)^{n+1}M^{n+1}\delta.
		\end{split}
	\end{equation*}

\end{proof}

\begin{corollary}[Deep ReLU networks: bits]
    Let $\Nb\in\NN$, the operator $T_n$ in Theorem \ref{ex:fnn} induces a binary bits approximation for functions in $\BB(\mathcal{H}^{r}(\Omega))$
    \begin{equation}
        \{0,1\}^{\Nb}\mapsto T_n(f)^{[\tau]}
    \end{equation}
    such that
    $$\sup\limits_{f\in\BB(\mathcal{H}^r(\Omega))}\lt\|f- T_n(f)^{[\tau]}\rt\|_{L^2(\Omega)}\lesssim\Nb^{-\frac{2r}{3d}},$$
    and the corresponding lower bound is
    $$\inf\limits_{\psi,\phi}\sup\limits_{f\in\BB(\mathcal{H}^r(\Omega))}\lt\|\phi\circ\psi(f)-f\rt\|_{L^2(\Omega)}\gtrsim\Nb^{-\frac{r}{d}},$$
    where the infimum is taken over all $\psi:\BB(\mathcal{H}^r(\Omega))\to\{0,1\}^{\Nb}$ and $\phi:\{0,1\}^{\Nb}\to L^2(\Omega)$.
\end{corollary}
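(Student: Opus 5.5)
The plan follows the template of the previous corollaries: convert the parameter-based rate of Theorem~\ref{ex:fnn} into a bit-based rate via Proposition~\ref{prop_para_to_bits}, with Lemma~\ref{lem_dnn_oper_norm} controlling the Lipschitz constant. The lower bound comes from Proposition~\ref{prop:entropy_equiv} together with Theorem~\ref{thm:entropy_sob}.

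\textbf{Upper bound.} Define $S_N$ as the map taking $f$ to the parameters of $T_n(f)$, and $\tilde S_N$ as the network realization map. By Theorem~\ref{ex:fnn}, all parameters lie in $[-M,M]$ with $M=2^{C_4 n}$. Since the width $W$ is fixed, the parameter count satisfies $N\simeq W^2 n\simeq n$. Lemma~\ref{lem_dnn_oper_norm} then gives
\[
\mathrm{Lip}(M,\tilde S_N)_{L^2(\Omega)}\lesssim (n+1)(W+1)^{n+1}2^{C_4 n(n+1)}.
\]
Hence
\[
\log_2\big(M\,\mathrm{Lip}(M,\tilde S_N)\,N^{2r/d}\big)\lesssim n^2 \simeq N^2.
\]
Proposition~\ref{prop_para_to_bits} lets us take $\tau\simeq N^2$ and retain the error $N^{-2r/d}$. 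The total bit count is then $\Nb=N\tau\simeq N^3$, so $N\simeq \Nb^{1/3}$. Substituting gives the error bound
\[
N^{-2r/d}\simeq \Nb^{-\frac{2r}{3d}}.
\]
For general $\Nb$ not of the form $N\tau$, choose the largest admissible $n$ with $N\tau\le\Nb$ and pad the remaining bits; this costs only constants.

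\textbf{Lower bound.} Any composite $\phi\circ\psi$ with $\psi$ valued in $\{0,1\}^{\Nb}$ yields, via Proposition~\ref{prop:entropy_equiv}, a worst-case error of at least $\epsilon_{\Nb}(\BB(\mathcal{H}^r(\Omega)))_{L^2(\Omega)}$. Theorem~\ref{thm:entropy_sob} bounds this below by a multiple of $\Nb^{-r/d}$.

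\textbf{Main obstacle.} The delicate step is the bookkeeping in the upper bound. The quantity $M^{n+1}$ produces a quadratic-in-$n$ exponent, so $\tau$ must grow like $n^2$ rather than logarithmically; this is the source of the cube relation $\Nb\simeq N^3$. One must also check that $N\simeq n$, that the constants $C_4$ and $W$ are independent of $n$, and that the bounds of Lemma~\ref{lem_dnn_oper_norm}, stated in $L^\infty$, transfer to $L^2$ on the bounded domain $\Omega$.
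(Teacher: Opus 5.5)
Your proposal is correct and follows essentially the same route as the paper. With $M=2^{C_4 n}$, Lemma \ref{lem_dnn_oper_norm} forces $\tau\simeq N^2$ bits per parameter, so $\Nb=N\tau\simeq N^3$ converts $N^{-2r/d}$ into $\Nb^{-2r/(3d)}$, and the lower bound comes from Proposition \ref{prop:entropy_equiv} and Theorem \ref{thm:entropy_sob}. Your remarks on padding for general $\Nb$ and on passing from the lemma's $L^\infty$ bound to $L^2$ on the unit cube fill in details the paper leaves implicit.
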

\begin{proof}

For $f\in\BB(\mathcal{H}^{r}(\Omega))$, let $T_n(f)=h^{(n+1)}$
be the minimizer in Theorem \ref{ex:fnn} with $\|T_n(f)-f\|_{L^2(\Omega)}=\mathcal{O}(N^{-\frac{2r}{d}})$, and $\lt(\bw^{(\ell,j)}(f)\rt)_{\substack{1\leq\ell\leq n+1\\1\leq j\leq W}}$, $\lt(\bb^{(\ell)}(f)\rt)_{\ell=1}^{n}$ be the parameters generating the collections $\lt\{h^{(\ell)}\rt\}_{\ell=1}^{n+1}$.

Denoting
\begin{equation*}
    \begin{split}
        &S_N:~f\mapsto\left\{\lt(\bw^{(\ell,j)}(f)\rt)_{\substack{1\leq\ell\leq n+1\\1\leq j\leq W}},\lt(\bb^{(\ell)}(f)\rt)_{\ell=1}^{n}\right\},\\
        &\tilde S_N:\left\{\lt(\bw^{(\ell,j)}\rt)_{\substack{1\leq\ell\leq n+1\\1\leq j\leq W}},\lt(\bb^{(\ell)}\rt)_{\ell=1}^{n}\right\}\mapsto h^{(n+1)},
    \end{split}
\end{equation*}
where $h^{(n+1)}$ is given by the parameters $\left\{\lt(\bw^{(\ell,j)}\rt)_{\substack{1\leq\ell\leq n+1\\1\leq j\leq W}},\lt(\bb^{(\ell)}\rt)_{\ell=1}^{n}\right\}$ as in Definition \ref{def:dnn}. By Lemma \ref{lem_dnn_oper_norm},
\begin{equation*}
    \mathrm{Lip}(M,\tilde S_N)_{L^2(\Omega)}\lesssim (n+1)(W+1)^{n+1}M^{n+1}.
\end{equation*}
By Proposition \ref{prop_para_to_bits}, taking some constant $C_5=C_5(d,r,W)$ such that
$$\tau=C_5N^2\geq 2+\left\lceil\log_2\left(M\mathrm{Lip}(M,\tilde S_N)_{L^2(\Omega)}N^{\frac{2r}{d}}\right)\right\rceil$$
and setting $T_n(f)^{[\tau]}=\tilde S_N(S_N(f)^{[\tau]})$, we have
\begin{equation*}
	\begin{split}
		\lt\|f- T_n(f)^{[\tau]}\rt\|_{L^2(\Omega)}
		\lesssim N^{-\frac{2r}{d}}\lesssim\Nb^{-\frac{2r}{3d}},
	\end{split}
\end{equation*}
where $\Nb=\tau N$ is the number of binary bits that represents $T_n(f)^{[\tau]}$.

For the lower bound, by Proposition \ref{prop:entropy_equiv} and Theorem \ref{thm:entropy_sob},
\begin{equation}
    \begin{split}
        \inf\limits_{\psi,\phi}\sup\limits_{f\in\BB(\mathcal{H}^r(\Omega))}\lt\|\phi\circ\psi(f)-f\rt\|_{L^2(\Omega)}\gtrsim\Nb^{-\frac{r}{d}}
    \end{split}
\end{equation}
where the infimum is taken over all $\psi:\BB(\mathcal{H}^r(\Omega))\to\{0,1\}^{\Nb}$ and $\phi:\{0,1\}^{\Nb}\to L^2(\Omega)$.

\end{proof}

\begin{remark}
    \rm The parameter bound $M = 2^{C_4 n}$ is a sufficient condition and is not necessarily sharp. A more refined construction of deep neural network architectures than that in \cite{shen2022optimal} may lead to improved bit-based approximation rates, potentially surpassing the bound $\Nb^{-\frac{2r}{3d}}$.
\end{remark}

In terms of the number of parameters, deep ReLU networks achieve the superconvergence rate $\mathcal{O}(N^{-\frac{2r}{d}})$ for Sobolev functions. However, Lemma~\ref{lem_dnn_oper_norm} shows that the reconstruction map has much stronger sensitivity to perturbations of the parameters, so maintaining the same approximation accuracy requires substantially higher precision. Consequently, the bit-based rate in Section~\ref{ex:fnn_bits} does not reflect the same improvement as the parameter-based rate.

This also clarifies the role of finite precision in practical computation. For the methods considered in Sections~\ref{ex:polynomial_bits}--\ref{ex:barron_bits}, the required number of bits per parameter grows only logarithmically with the number of parameters. In contrast, the sufficient precision for the deep network construction above grows polynomially as the stability bound in Lemma~\ref{lem_dnn_oper_norm} is much larger. Thus, two methods with similar parameter counts may have very different bit-complexities. This distinction is especially important in practice, where finite-precision storage and computation are unavoidable.

We next explain how metric entropy provides lower bounds for approximation and gives a benchmark for evaluating parameter-based methods.

\section{Metric entropy and approximation lower bound}\label{sec:entropy_appr}	
In Sections \ref{sec:classical}--\ref{sec:appr_nn}, we studied approximation methods based on parameter representations and analyzed their performance in terms of binary bits. We now take a different perspective. Since approximation rates are typically expressed in terms of parameters, it is natural to ask whether a given parameter-based approach is efficient.

Recall that Proposition \ref{prop:entropy_equiv} establishes the equivalence between the entropy number $\epsilon_m(K)_X$ and the minimal approximation error achievable with $m$ bits. Therefore, metric entropy provides a fundamental benchmark for evaluating parameter-based approximation methods.






	\begin{lemma}\label{lem:entropy_lower_gener}
		\it Let $\mathcal{X}$ be a metric space, $\mathcal{A}$ and $\mathcal{B}$ be two subsets of $\mathcal{X}$, then
		$$\sup\limits_{f\in\mathcal{B}}\inf\limits_{g\in\mathcal{A}}\|f-g\|_{\mathcal{X}}\geq\epsilon_m(\mathcal{B})_{\mathcal{X}}-\epsilon_m(\mathcal{A})_{\mathcal{X}},\quad\forall m\in\mathbb{N}.$$
	\end{lemma}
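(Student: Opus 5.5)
This is a covering argument: a cover of $\mathcal{A}$, enlarged by the approximation distance, becomes a cover of $\mathcal{B}$ with the same number of balls. Set
\[
\delta:=\sup_{f\in\mathcal{B}}\inf_{g\in\mathcal{A}}\|f-g\|_{\mathcal{X}}.
\]
If $\delta=\infty$ or $\epsilon_m(\mathcal{A})_{\mathcal{X}}=\infty$, the inequality is trivial. So assume both are finite. It suffices to show that, for every $\eta>0$, $\mathcal{B}$ can be covered by $2^m$ balls of radius $\delta+\epsilon_m(\mathcal{A})_{\mathcal{X}}+2\eta$. The definition of the entropy number then gives
\[
\epsilon_m(\mathcal{B})_{\mathcal{X}}\le\delta+\epsilon_m(\mathcal{A})_{\mathcal{X}}+2\eta,
\]
and letting $\eta\to0$ and rearranging yields the claim.

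The steps, in order, are as follows.

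First, by the definition of $\epsilon_m(\mathcal{A})_{\mathcal{X}}$ as an infimum, pick centers $g_1,\dots,g_{2^m}\in\mathcal{X}$ such that the balls of radius $\epsilon_m(\mathcal{A})_{\mathcal{X}}+\eta$ around them cover $\mathcal{A}$.

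Second, fix $f\in\mathcal{B}$. By the definition of $\delta$ there is $g\in\mathcal{A}$ with $\|f-g\|_{\mathcal{X}}\le\delta+\eta$. This $g$ lies in some ball, so $\|g-g_k\|_{\mathcal{X}}\le\epsilon_m(\mathcal{A})_{\mathcal{X}}+\eta$ for some $k$.

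Third, the triangle inequality gives
\[
\|f-g_k\|_{\mathcal{X}}\le\delta+\epsilon_m(\mathcal{A})_{\mathcal{X}}+2\eta .
\]
Hence the same $2^m$ centers, with this enlarged radius, cover $\mathcal{B}$. This is exactly the bound needed. Equivalently, one can phrase the argument through Proposition \ref{prop:entropy_equiv}: composing an optimal $m$-bit code for $\mathcal{A}$ with a near-best approximation map from $\mathcal{B}$ into $\mathcal{A}$ gives an $m$-bit code for $\mathcal{B}$.

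There is no real obstacle here. The only points needing care are the $\eta$-slack, needed because neither the infimum nor the supremum need be attained, and the degenerate infinite cases. The centers $g_k$ are allowed to lie anywhere in $\mathcal{X}$, matching the definition \eqref{eqn:def_metric_entropy}, so no restriction of centers to $\mathcal{B}$ is needed.
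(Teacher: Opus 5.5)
Your proof is correct and is essentially the paper's argument. The paper takes a near-optimal code $\phi:\{0,1\}^m\to\mathcal{X}$ for $\mathcal{A}$ from Proposition~\ref{prop:entropy_equiv} and applies $\inf_{y}\|f-\phi(y)\|_{\mathcal{X}}\le\inf_{y}\|g-\phi(y)\|_{\mathcal{X}}+\|f-g\|_{\mathcal{X}}$, which is your cover of $\mathcal{A}$ with centers $\phi(y)$ enlarged by $\delta$; you simply work with covers directly and skip the translation through that proposition.
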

	\begin{proof}
		By Proposition \ref{prop:entropy_equiv}, for any $\eta>0$, there exist $\phi:\{0,1\}^m\to\mathcal{X}$, s.t.
		$$\sup\limits_{g\in\mathcal{A}}\inf\limits_{y\in\{0,1\}^m}\|g-\phi(y)\|_{\mathcal{X}}\leq\epsilon_m(\mathcal{A})+\eta.$$
        Then for any $f\in\mathcal{B},~g\in\mathcal{A}$,
        \begin{equation}
            \inf\limits_{y\in\{0,1\}^m}\|f-\phi(y)\|_{\mathcal{X}}\leq\inf\limits_{y\in\{0,1\}^m}\|g-\phi(y)\|_{\mathcal{X}}+\|f-g\|_{\mathcal{X}}
				\leq\epsilon_m(\mathcal{A})+\eta+\|f-g\|_{\mathcal{X}}.
        \end{equation}
        Taking infimum of $g\in\mathcal{A}$ and supremum of $f\in\mathcal{B}$ yields
		\begin{equation*}
				\sup\limits_{f\in\mathcal{B}}\inf\limits_{y\in\{0,1\}^m}\|f-\phi(y)\|_{\mathcal{X}}\leq\epsilon_m(\mathcal{A})+\eta+\sup\limits_{f\in\mathcal{B}}\inf\limits_{g\in\mathcal{A}}\|f-g\|_{\mathcal{X}}
		\end{equation*}
		Using Proposition \ref{prop:entropy_equiv} again and taking $\eta\to0$,
        \begin{equation}
            \epsilon_m(\mathcal{B})_{\mathcal{X}}\leq\epsilon_m(\mathcal{A})+\sup\limits_{f\in\mathcal{B}}\inf\limits_{g\in\mathcal{A}}\|f-g\|_{\mathcal{X}}
        \end{equation}
	\end{proof}

    In the rest of this subsection, we provide some examples to illustrate how the lower bound can be derived in concrete approximation problems.

    \subsection{Examples of lower bounds in approximation}
    Given the metric entropies, we are ready to apply Lemma \ref{lem:entropy_lower_gener} to prove lower bounds of the approximation rates.

	\subsubsection{Approximating functions in $\mathcal{H}^r(\Omega)$ by polynomials}\label{subsec:appr_rate_polynomial}    
    Let
    $$\mathcal{A}_{n}:=\Bigg\{\sul_{\substack{\|\bk\|_\infty<n\\ \bk\in\NN^d}}a_{\bk,n}P_\bk:~|a_{\bk,n}|\leq 1\Bigg\}$$
    be the approximant class in Theorem \ref{ex:polynomial}, we construct a finite subset by quantizing the coefficients. For $m>n^d$,
    by taking $L=\lt\lfloor2^{\frac{m}{n^d}}-1\rt\rfloor$, the class
    $$\tilde{\mathcal{A}}_{n}(L):=\Bigg\{\sul_{\|\bk\|_\infty<n}\theta_\bk P_\bk:~\theta_\bk\in E_L=\lt\{\frac{-2L+1}{2L},\frac{-2L+3}{2L},\dots,\frac{2L-1}{2L}\rt\}\Bigg\}$$
    with $\#\tilde{\mathcal{A}}_{n}\leq2^m$ yields an $A_1n^{\frac{d}{2}}2^{-\frac{m}{n^d}}$-cover of $\mathcal{A}_{n}$ with a constant $A_1$ independent of $n,m$. As a result,
    $$\epsilon_m(\mathcal{A}_{n})_{L^2(\Omega)}\leq A_1n^{d\over2}2^{-\frac{m}{n^{d}}}.$$
	Together with Theorem \ref{thm:entropy_sob} and Lemma \ref{lem:entropy_lower_gener}, we conclude
	\begin{equation*}
		\sup\limits_{f\in \BB\lrt{\mathcal{H}^{r}(\Omega)}}\inf\limits_{g\in \mathcal{A}_{n}}\lt\|f-g\rt\|_{L^2(\Omega)}\geq A_2m^{-\frac{r}{d}}-A_1n^{d\over2}2^{-\frac{m}{n^{d}}},\qquad m\geq n^d.
	\end{equation*}
	Taking $m=(1+\frac{r}{d})N\log_2N$ yields
	\begin{equation}
		\sup\limits_{f\in \BB\lrt{\mathcal{H}^{r}(\Omega)}}\inf\limits_{g\in \mathcal{A}_{n}}\lt\|f-g\rt\|_{L^2(\Omega)}\gtrsim (N\log N)^{-\frac{r}{d}}.
	\end{equation}
where $N=n^d$ is the number of parameters in $\mathcal{A}_{n}$. Compared with this lower bound, the rate
$$\|f-T_n(f)\|_{L^2(\Omega)}\leq\|f\|_{\mathcal{H}^{r}(\Omega)} N^{-\frac{r}{d}}$$
in Theorem \ref{ex:polynomial} is suboptimal among all the encoding approaches.

\subsubsection{Approximating functions in Barron spaces by shallow networks}\label{subsec:appr_rate_variation}
For $m>n(d+2)$,
    by taking $L=\lt\lfloor2^{\frac{m}{n(d+2)}}-1\rt\rfloor$, the class
    \begin{equation*}
			\tilde{\mathcal{A}}_{n}:=\lt\{\sum\limits_{j=1}^n\tilde a^{(j)}\sigma_k(\tilde \bw^{(j)}\cdot\bx+\tilde b^{(j)}):~\tilde a_j\in ME_L,\tilde b_j\in E_L,~\tilde w^{(j)}_i\in E_L,~\forall i,j\rt\},
		\end{equation*}
    with $\#\tilde{\mathcal{A}}_{n}\leq2^m$ yields a $A_32^{-\frac{m}{n(d+2)}}$-cover of $\Sigma_n^k(M)$ with a constant $A_3$ independent of $n,m$. As a result,
    \begin{equation}\label{eqn_entropy_snn}
        \epsilon_m(\Sigma_n^k(M))_{L^2(\Omega)}\leq A_3Mn2^{-\frac{m}{n(d+2)}}.
    \end{equation}
	Together with Theorem \ref{thm:entropy_variation} and Lemma \ref{lem:entropy_lower_gener}, we conclude
    \begin{equation*}
	\sup\limits_{f\in \BB\lrt{\mathcal{B}^k(\Omega)}}\inf\limits_{g\in \Sigma_n^k(M)}\lt\|f-g\rt\|_{L^2(\Omega)}\geq A_4m^{-\frac{1}{2}-\frac{2k+1}{2d}}-A_3Mn2^{-\frac{m}{n(d+2)}},\qquad m>(d+2)n
\end{equation*}
By taking $m=(d+2)(2+\frac{2k+1}{2d})n\log_2n$
\begin{equation*}
	\sup\limits_{f\in \BB\lrt{\mathcal{B}^k(\Omega)}}\inf\limits_{g\in \Sigma_{n}^k(M)}\lt\|f-g\rt\|_{L^2(\Omega)}\gtrsim \lt(N\log_2N\rt)^{-\frac{1}{2}-\frac{2k+1}{2d}},
\end{equation*}
where $N=(d+2)n$ is the number of parameters.

Compared with this lower bound, the rate
$$\|T_n(f)-f\|_{L^2(\Omega)}\lesssim\|f\|_{\mathcal{B}^k(\Omega)}N^{-\frac{1}{2}-\frac{2k+1}{2d}},$$
in \eqref{eqn:variation_appr} is suboptimal.

\subsubsection{Approximating functions in spectral Barron spaces by shallow networks}\label{subsec:appr_rate_barron}
By substituting \eqref{eqn_entropy_snn} and Theorem \ref{thm:entropy_variation} into Lemma \ref{lem:entropy_lower_gener}, for $s=(d+1)(k+1)$, we conclude
\begin{equation*}
	\sup\limits_{f\in \BB\lrt{\mathcal{V}^s(\Omega)}}\inf\limits_{g\in \Sigma_n^k(M)}\lt\|f-g\rt\|_{L^2(\Omega)}\geq A_5m^{-\frac{1}{2}-\frac{s}{d}}-A_3Mn2^{-\frac{m}{n(d+2)}},\qquad m>(d+2)n,
\end{equation*}
which implies
\begin{equation}\label{eqn:snn_appr_unif_lower}
	\sup\limits_{f\in \BB\lrt{\mathcal{V}^s(\Omega)}}\inf\limits_{g\in \Sigma_{n}^k(M)}\lt\|f-g\rt\|_{L^2(\Omega)}\gtrsim \lt(N\log_2N\rt)^{-(k+1)\frac{d+1}{d}-\frac{1}{2}},
\end{equation}
where $N=(d+2)n$ is the number of parameters.

However, the approximation rate we applied in Theorem \ref{ex:barron} is the rate
\begin{equation}\label{eqn:snn_appr_unif_upper}
    \|T_n(f)-f\|_{L^2(\Omega)}\lesssim\|f\|_{\mathcal{V}^s(\Omega)}N^{-(k+1)}\log N,
\end{equation}
given in \eqref{eqn:barron_appr}. We can see there is a gap between \eqref{eqn:snn_appr_unif_lower} and \eqref{eqn:snn_appr_unif_upper}. It is possible that the rate in Theorem \ref{ex:barron} is not optimal. We might need to find other approaches to improve the rate of approximation for functions from $\mathcal{V}^s(\Omega)$.

	\subsection{Suboptimality in approximation and binary representation}\label{subsec:rela_appr_bin}
	When we are representing elements in $\mathcal{B}$ by binary representation of elements in a set $\mathcal{A}$ with smaller metric entropy, Lemma \ref{lem:entropy_lower_gener} gives a necessary condition that we can obtain the suboptimal binary bits approximation.

\begin{enumerate}

\item \textbf{Approximating functions in Sobolev spaces by orthogonal polynomials:}  When representing functions in $\BB\lrt{\mathcal{H}^{r}(\Omega)}$ through the binary approximation of partial Fourier sums in
$$\mathcal{A}_{n}:=\lt\{\sul_{\|\bk\|_\infty< n}a_{\bk} P_\bk:~|a_{\bk}|\leq1\rt\},$$
as we discussed in Section \ref{subsec:classes_entropy}, the binary approximation through $\mathcal{A}_n$ is suboptimal. Necessarily, the approximation rate $\sup\limits_{f\in \BB\lrt{\mathcal{H}^{r}(\Omega)}}\inf\limits_{g\in \mathcal{A}_{n}}\lt\|f-g\rt\|_{L^2(\TT^d)}$ is suboptimal compared with the rate given by Lemma \ref{lem:entropy_lower_gener}.

\item \textbf{Approximating functions in Barron spaces by shallow ReLU$^k$ networks:}  When representing functions in $\BB\lrt{\mathcal{B}^k(\Omega)}$ through the binary approximation of shallow ReLU$^k$ network functions in $\Sigma_n^k(M)$,
as we discussed in Section \ref{subsec:classes_entropy}, the binary approximation through $\Sigma_n^k(M)$ is suboptimal. Necessarily, the approximation rate $\sup\limits_{f\in \BB\lrt{\mathcal{B}^k(\Omega)}}\inf\limits_{g\in \Sigma_n^k(M)}\lt\|f-g\rt\|_{L^2(\Omega)}$ is suboptimal compared with the rate given by Lemma \ref{lem:entropy_lower_gener}. This is shown in Section \ref{subsec:appr_rate_variation} directly.

\item \textbf{Approximating functions in spectral Barron spaces by shallow ReLU$^k$ networks:}  However, when representing functions in $\BB\lrt{\mathcal{V}^s(\Omega)}$ through the binary approximation of shallow ReLU$^k$ network functions in $\Sigma_n^k(M)$,
we see the upper bound
$$\|T_n(f)-f\|_{L^2(\Omega)}\lesssim\|f\|_{\mathcal{V}^s(\Omega)}N^{-(k+1)}\log N$$
in Section \ref{subsec:appr_rate_barron} has a gap with the lower bound \eqref{eqn:snn_appr_unif_lower} given by Lemma \ref{lem:entropy_lower_gener}. As a result, the binary approximation of $\BB\lrt{\mathcal{V}^s(\Omega)}$ through that of $\Sigma_n^k(M)$ cannot be optimal as well.

\end{enumerate}

\section{Concluding remarks}\label{sec:conclusion}

In this paper, we revisit several classical and neural network approximation examples from the viewpoint of binary bits, leading to two main conclusions concerning dimension-independent rates and deep-network superconvergence.

As shown in Sections~\ref{sec:classical}--\ref{sec:appr_nn}, although dimension-independent rates of neural networks appear stronger than classical approximation rates, the bit-complexity perspective gives a different interpretation. The metric entropy estimates show that the target function classes have relatively small bit-complexity, comparable to that of classical function classes with higher regularity. Thus, the dimension-independent behavior should be understood as a consequence of the small complexity of the target classes rather than as an intrinsic advantage of the network architecture.

Deep network superconvergence is also a parameter-based phenomenon. As shown in Section~\ref{ex:fnn_bits}, in the deep network architecture, the required bits for representing the parameters can increase substantially, so the faster parameter-based rate for deep networks does not necessarily translate into a faster bit-complexity rate. In other words, the seeming advantage in parameter count can be offset by the bit cost of stable finite-precision representation.

Overall, the fundamental limitation in approximation is not dimension. From the perspective developed in this paper, approximation rates should be understood in terms of the bit-complexity of both the target function classes and the corresponding classes of approximants.

\noindent {\bf Acknowledgments.} The authors are supported by the KAUST Baseline Research Fund.


\begin{thebibliography}{99}

\bibitem{abramowitz1964handbook} M.~Abramowitz and I.~A. Stegun, \emph{Handbook of Mathematical Functions: With Formulas, Graphs, and Mathematical Tables}, vol.~55, US Government Printing Office, 1964.

\bibitem{arcangeli2007extension} R.~Arcang{\'e}li, M.~C. L{\'o}pez de Silanes, and J.~J. Torrens, An extension of a bound for functions in Sobolev spaces, with applications to $(m,s)$-spline interpolation and smoothing, \emph{Numerische Mathematik}, \textbf{107}(2) (2007), 181--211.

\bibitem{arnold1958representation} V.~I. Arnold, The representation of functions of several variables, \emph{Mat. Prosvesc.}, \textbf{3} (1958), 41--61.

\bibitem{barron1991complexity} A.~R. Barron, Complexity regularization with application to artificial neural networks, in \emph{Nonparametric Functional Estimation and Related Topics}, Springer, 1991, 561--576.

\bibitem{Barron2008} A.~R. Barron, A.~Cohen, W.~Dahmen, and R.~A. DeVore, Approximation and learning by greedy algorithms, \emph{Annals of Statistics}, \textbf{36}(1) (2008), 64--94.

\bibitem{bartlett1996sample} P.~Bartlett, The sample complexity of pattern classification with neural networks: the size of the weights is more important than the size of the network, \emph{IEEE Transactions on Information Theory}, \textbf{44}(2) (1998), 525--536.

\bibitem{bartlett2002rademacher} P.~L. Bartlett and S.~Mendelson, Rademacher and Gaussian complexities: risk bounds and structural results, \emph{Journal of Machine Learning Research}, \textbf{3} (2002), 463--482.

\bibitem{bathe2006finite} K.-J. Bathe, \emph{Finite Element Procedures}, Klaus-Jurgen Bathe, 2006.

\bibitem{bengio2009learning} Y.~Bengio, Learning deep architectures for AI, \emph{Foundations and Trends in Machine Learning}, \textbf{2}(1) (2009), 1--127.

\bibitem{bertoluzza2012primer} S.~Bertoluzza, R.~H. Nochetto, A.~Quarteroni, K.~G. Siebert, and A.~Veeser, Primer of adaptive finite element methods, in \emph{Multiscale and Adaptivity: Modeling, Numerics and Applications}, CIME Summer School, Cetraro, Italy, 2009, Springer, 2012, 125--225.

\bibitem{brenner2008mathematical} S.~C. Brenner, \emph{The Mathematical Theory of Finite Element Methods}, Springer, 2008.

\bibitem{bungartz2004sparse} H.-J. Bungartz and M.~Griebel, Sparse grids, \emph{Acta Numerica}, \textbf{13} (2004), 147--269.

\bibitem{chen2004support} D.-R. Chen, Q.~Wu, Y.~Ying, and D.-X. Zhou, Support vector machine soft margin classifiers: error analysis, \emph{Journal of Machine Learning Research}, \textbf{5} (2004), 1143--1175.

\bibitem{ciarlet2002finite} P.~G. Ciarlet, \emph{The Finite Element Method for Elliptic Problems}, SIAM, 2002.

\bibitem{cucker2002mathematical} F.~Cucker and S.~Smale, On the mathematical foundations of learning, \emph{Bulletin of the American Mathematical Society}, \textbf{39}(1) (2002), 1--49.

\bibitem{cucker2007learning} F.~Cucker and D.~X. Zhou, \emph{Learning Theory: An Approximation Theory Viewpoint}, vol.~24, Cambridge University Press, 2007.

\bibitem{davis1975interpolation} P.~J. Davis, \emph{Interpolation and Approximation}, Dover Publications, 1975.

\bibitem{devore1989optimal} R.~A. DeVore, R.~Howard, and C.~Micchelli, Optimal nonlinear approximation, \emph{Manuscripta Mathematica}, \textbf{63} (1989), 469--478.

\bibitem{devore1993constructive} R.~A. DeVore and G.~G. Lorentz, \emph{Constructive Approximation}, vol.~303, Springer Science \& Business Media, 1993.

\bibitem{devore1996some} R.~A. DeVore and V.~N. Temlyakov, Some remarks on greedy algorithms, \emph{Advances in Computational Mathematics}, \textbf{5}(1) (1996), 173--187.

\bibitem{eldan2016power} R.~Eldan and O.~Shamir, The power of depth for feedforward neural networks, in \emph{Conference on Learning Theory}, PMLR, 2016, 907--940.

\bibitem{feng2021generalization} H.~Feng, S.~Huang, and D.-X. Zhou, Generalization analysis of CNNs for classification on spheres, \emph{IEEE Transactions on Neural Networks and Learning Systems}, 2021.

\bibitem{harvey2017nearly} N.~Harvey, C.~Liaw, and A.~Mehrabian, Nearly-tight VC-dimension bounds for piecewise linear neural networks, in \emph{Conference on Learning Theory}, PMLR, 2017, 1064--1068.

\bibitem{he2023expressivity} J.~He, T.~Mao, and J.~Xu, Expressivity and approximation properties of deep neural networks with ReLU$^k$ activation, \emph{arXiv preprint arXiv:2312.16483}, 2023.

\bibitem{jiao2021deep} Y.~Jiao, Y.~Lai, X.~Lu, and Z.~Yang, Deep neural networks with ReLU-sine-exponential activations break curse of dimensionality on H{\"o}lder class, \emph{arXiv preprint arXiv:2103.00542}, 2021.

\bibitem{jin2015finite} J.-M. Jin, \emph{The Finite Element Method in Electromagnetics}, John Wiley \& Sons, 2015.

\bibitem{Jones1992} L.~K. Jones, A simple lemma on greedy approximation in Hilbert space and convergence rates for projection pursuit regression and neural network training, \emph{The Annals of Statistics}, \textbf{20}(1) (1992), 608--613.

\bibitem{karvonen2025superconvergence} T.~Karvonen, G.~Santin, and T.~Wenzel, General superconvergence for kernel-based approximation, \emph{arXiv preprint arXiv:2505.11435}, 2025.

\bibitem{klusowski2018approximation} J.~M. Klusowski and A.~R. Barron, Approximation by combinations of ReLU and squared ReLU ridge functions with $\ell^1$ and $\ell^0$ controls, \emph{IEEE Transactions on Information Theory}, \textbf{64}(12) (2018), 7649--7656.

\bibitem{kolmogoro1956representation} A.~N. Kolmogorov, On the representation of continuous functions of several variables as superpositions of functions of smaller number of variables, \emph{Soviet Math. Dokl.}, \textbf{108} (1956), 179--182.

\bibitem{kolmogorov1958definition} A.~N. Kolmogorov and V.~A. Uspenskii, On the definition of an algorithm, \emph{Uspekhi Matematicheskikh Nauk}, \textbf{13}(4) (1958), 3--28.

\bibitem{koltchinskii2001rademacher} V.~Koltchinskii, Rademacher penalties and structural risk minimization, \emph{IEEE Transactions on Information Theory}, \textbf{47}(5) (2001), 1902--1914.

\bibitem{kurkova1} V.~K{\r u}rkov{\'a} and M.~Sanguineti, Bounds on rates of variable basis and neural network approximation, \emph{IEEE Transactions on Information Theory}, \textbf{47}(6) (2001), 2659--2665.

\bibitem{kurkova2} V.~K{\r u}rkov{\'a} and M.~Sanguineti, Comparison of worst case errors in linear and neural network approximation, \emph{IEEE Transactions on Information Theory}, \textbf{48}(1) (2002), 264--275.

\bibitem{kulkarni1989metric} S.~R. Kulkarni et al., On metric entropy, Vapnik--Chervonenkis dimension, and learnability for a class of distributions, 1989.

\bibitem{lewicki2004approximation} G.~Lewicki and G.~Marino, Approximation of functions of finite variation by superpositions of a sigmoidal function, \emph{Applied Mathematics Letters}, \textbf{17}(10) (2004), 1147--1152.

\bibitem{lin2014lower} Q.~Lin, H.~Xie, and J.~Xu, Lower bounds of the discretization error for piecewise polynomials, \emph{Mathematics of Computation}, \textbf{83}(285) (2014), 1--13.

\bibitem{liu2025achieving} X.~Liu, T.~Mao, and J.~Xu, Integral representations of Sobolev spaces via ReLU$^k$ activation function and optimal error estimates for linearized networks, \emph{arXiv preprint arXiv:2505.00351}, 2025.

\bibitem{lorentz1996constructive} G.~G. Lorentz, M.~von Golitschek, and Y.~Makovoz, \emph{Constructive Approximation: Advanced Problems}, vol.~304, Springer, 1996.

\bibitem{ma2022uniform} L.~Ma, J.~W. Siegel, and J.~Xu, Uniform approximation rates and metric entropy of shallow neural networks, \emph{Research in the Mathematical Sciences}, \textbf{9}(3) (2022), 46.

\bibitem{maiorov1999lower} V.~Maiorov and A.~Pinkus, Lower bounds for approximation by MLP neural networks, \emph{Neurocomputing}, \textbf{25}(1--3) (1999), 81--91.

\bibitem{mao2022approximation} T.~Mao and D.-X. Zhou, Approximation of functions from Korobov spaces by deep convolutional neural networks, \emph{Advances in Computational Mathematics}, \textbf{48}(6) (2022), 84.

\bibitem{mhaskar2004kernel} H.~N. Mhaskar, On the representation of smooth functions by radial basis functions, \emph{Journal of Approximation Theory}, \textbf{127}(1) (2004), 1--16.

\bibitem{mhaskar2010eignets} H.~N. Mhaskar, Eignets for function approximation on manifolds, \emph{Applied and Computational Harmonic Analysis}, \textbf{29}(1) (2010), 63--87.

\bibitem{mhaskar2020kernel} H.~N. Mhaskar, Kernel-based analysis of massive data, \emph{Frontiers in Applied Mathematics and Statistics}, \textbf{6} (2020), 30.

\bibitem{mhaskar1994dimension} H.~N. Mhaskar and C.~A. Micchelli, Dimension-independent bounds on the degree of approximation by neural networks, \emph{IBM Journal of Research and Development}, \textbf{38}(3) (1994), 277--284.

\bibitem{mhaskar1999zonal} H.~N. Mhaskar, F.~J. Narcowich, and J.~D. Ward, Approximation properties of zonal function networks using scattered data on the sphere, \emph{Advances in Computational Mathematics}, \textbf{11} (1999), 121--137.

\bibitem{moaveni2011finite} S.~Moaveni, \emph{Finite Element Analysis: Theory and Application with ANSYS}, 3rd ed., Pearson Education India, 2011.

\bibitem{mohri2018foundations} M.~Mohri, A.~Rostamizadeh, and A.~Talwalkar, \emph{Foundations of Machine Learning}, 2nd ed., MIT Press, 2018.

\bibitem{montanelli2019new} H.~Montanelli and Q.~Du, New error bounds for deep ReLU networks using sparse grids, \emph{SIAM Journal on Mathematics of Data Science}, \textbf{1}(1) (2019), 78--92.

\bibitem{narcowich2005sobolev} F.~J. Narcowich, J.~D. Ward, and H.~Wendland, Sobolev bounds on functions with scattered zeros, with applications to radial basis function surface fitting, \emph{Mathematics of Computation}, \textbf{74}(250) (2005), 743--763.

\bibitem{narcowich2006sobolev} F.~J. Narcowich, J.~D. Ward, and H.~Wendland, Sobolev error estimates and a Bernstein inequality for scattered data interpolation via radial basis functions, \emph{Constructive Approximation}, \textbf{24}(2) (2006), 175--186.

\bibitem{petrushev1998approximation} P.~P. Petrushev, Approximation by ridge functions and neural networks, \emph{SIAM Journal on Mathematical Analysis}, \textbf{30}(1) (1998), 155--189.

\bibitem{poggio2017and} T.~Poggio, H.~Mhaskar, L.~Rosasco, B.~Miranda, and Q.~Liao, Why and when can deep-but not shallow-networks avoid the curse of dimensionality: a review, \emph{International Journal of Automation and Computing}, \textbf{14}(5) (2017), 503--519.

\bibitem{reddy1993introduction} J.~Reddy, \emph{An Introduction to the Finite Element Method}, 1993.

\bibitem{schaback1999improved} R.~Schaback, Improved error bounds for scattered data interpolation by radial basis functions, \emph{Mathematics of Computation}, \textbf{68}(225) (1999), 201--216.

\bibitem{schaback2018superconvergence} R.~Schaback, Superconvergence of kernel-based interpolation, \emph{Journal of Approximation Theory}, \textbf{235} (2018), 1--19.

\bibitem{shen2019deep} Z.~Shen, H.~Yang, and S.~Zhang, Deep network approximation characterized by number of neurons, \emph{arXiv preprint arXiv:1906.05497}, 2019.

\bibitem{shen2020deep} Z.~Shen, H.~Yang, and S.~Zhang, Deep network with approximation error being reciprocal of width to power of square root of depth, \emph{arXiv preprint arXiv:2006.12231}, 2020.

\bibitem{shen2021neural} Z.~Shen, H.~Yang, and S.~Zhang, Neural network approximation: three hidden layers are enough, \emph{Neural Networks}, \textbf{141} (2021), 160--173.

\bibitem{shen2022optimal} Z.~Shen, H.~Yang, and S.~Zhang, Optimal approximation rate of ReLU networks in terms of width and depth, \emph{Journal de Math{\'e}matiques Pures et Appliqu{\'e}es}, \textbf{157} (2022), 101--135.

\bibitem{siegel2023optimalzonoids} J.~W. Siegel, Optimal approximation of zonoids and uniform approximation by shallow neural networks, \emph{arXiv preprint arXiv:2307.15285}, 2023.

\bibitem{siegel2023optimal} J.~W. Siegel, Optimal approximation rates for deep ReLU neural networks on Sobolev and Besov spaces, \emph{Journal of Machine Learning Research}, \textbf{24}(357) (2023), 1--52.

\bibitem{siegel2020approximation} J.~W. Siegel and J.~Xu, Approximation rates for neural networks with general activation functions, \emph{Neural Networks}, \textbf{128} (2020), 313--321.

\bibitem{siegel2022high} J.~W. Siegel and J.~Xu, High-order approximation rates for shallow neural networks with cosine and ReLU$^k$ activation functions, \emph{Applied and Computational Harmonic Analysis}, \textbf{58} (2022), 1--26.

\bibitem{siegel2022sharp} J.~W. Siegel and J.~Xu, Sharp bounds on the approximation rates, metric entropy, and $n$-widths of shallow neural networks, \emph{Foundations of Computational Mathematics}, 2022, 1--57.

\bibitem{siegel2023characterization} J.~W. Siegel and J.~Xu, Characterization of the variation spaces corresponding to shallow neural networks, \emph{Constructive Approximation}, \textbf{57}(3) (2023), 1109--1132.

\bibitem{sloan2025doubling} I.~Sloan and V.~Kaarnioja, Doubling the rate: improved error bounds for orthogonal projection with application to interpolation, \emph{BIT Numerical Mathematics}, 2025.

\bibitem{stoer1980introduction} J.~Stoer, R.~Bulirsch, R.~Bartels, W.~Gautschi, and C.~Witzgall, \emph{Introduction to Numerical Analysis}, Springer, 1980.

\bibitem{suli2003introduction} E.~S{\"u}li and D.~F. Mayers, \emph{An Introduction to Numerical Analysis}, Cambridge University Press, 2003.

\bibitem{szego1939orthogonal} G.~Szeg{\H{o}}, \emph{Orthogonal Polynomials}, vol.~23, American Mathematical Society, 1939.

\bibitem{temlyakov2008greedy} V.~N. Temlyakov, Greedy approximation, \emph{Acta Numerica}, \textbf{17} (2008), 235--409.

\bibitem{timan2014theory} A.~F. Timan, \emph{Theory of Approximation of Functions of a Real Variable}, Elsevier, 2014.

\bibitem{vapnik2015uniform} V.~N. Vapnik and A.~Y. Chervonenkis, On the uniform convergence of relative frequencies of events to their probabilities, in \emph{Measures of Complexity: Festschrift for Alexey Chervonenkis}, Springer, 2015, 11--30.

\bibitem{wendland2005approximate} H.~Wendland and C.~Rieger, Approximate interpolation with applications to selecting smoothing parameters, \emph{Numerische Mathematik}, \textbf{101}(4) (2005), 729--748.

\bibitem{wenzel2026sharp} T.~Wenzel, Sharp inverse statements for kernel approximation: superconvergence and saturation, \emph{arXiv preprint arXiv:2601.01808}, 2026.

\bibitem{xu1992iterative} J.~Xu, Iterative methods by space decomposition and subspace correction, \emph{SIAM Review}, \textbf{34}(4) (1992), 581--613.

\bibitem{xu2020finite} J.~Xu, The finite neuron method and convergence analysis, \emph{arXiv preprint arXiv:2010.01458}, 2020.

\bibitem{yang2023nearly} Y.~Yang, Y.~Wu, H.~Yang, and Y.~Xiang, Nearly optimal approximation rates for deep super ReLU networks on Sobolev spaces, \emph{arXiv preprint arXiv:2310.10766}, 2023.

\bibitem{yang2023optimal} Y.~Yang and D.-X. Zhou, Optimal rates of approximation by shallow ReLU$^k$ neural networks and applications to nonparametric regression, \emph{arXiv preprint arXiv:2304.01561}, 2023.

\bibitem{yarotsky2018optimal} D.~Yarotsky, Optimal approximation of continuous functions by very deep ReLU networks, in \emph{Conference on Learning Theory}, PMLR, 2018, 639--649.

\bibitem{yarotsky2020phase} D.~Yarotsky and A.~Zhevnerchuk, The phase diagram of approximation rates for deep neural networks, \emph{Advances in Neural Information Processing Systems}, \textbf{33} (2020), 13005--13015.

\bibitem{zhang2022deep} S.~Zhang, Z.~Shen, and H.~Yang, Deep network approximation: achieving arbitrary accuracy with fixed number of neurons, \emph{Journal of Machine Learning Research}, \textbf{23}(276) (2022), 1--60.

\end{thebibliography}
\end{document}